\documentclass{article}

\usepackage{amsmath,amsfonts,bm}

\def\eqref#1{equation~\ref{#1}}

\def\1{\bm{1}}

\def\bv{{\bm{v}}}

\def\bI{{\bm{I}}}

\DeclareMathAlphabet{\mathsfit}{\encodingdefault}{\sfdefault}{m}{sl}
\SetMathAlphabet{\mathsfit}{bold}{\encodingdefault}{\sfdefault}{bx}{n}

\def\gC{{\mathcal{C}}}
\def\gD{{\mathcal{D}}}
\def\gE{{\mathcal{E}}}

\def\gN{{\mathcal{N}}}

\def\gP{{\mathcal{P}}}

\def\gW{{\mathcal{W}}}

\def\od{{\mathrm{d}}}

\def\sfD{{\mathsf{D}}}

\def\sfW{{\mathsf{W}}}

\newcommand{\E}{{\mathbb{E}}}

\newcommand{\odt}{{\mathrm{d}t}}
\newcommand{\odx}{{\mathrm{d}x}}
\newcommand{\ody}{{\mathrm{d}y}}

\usepackage{microtype}
\usepackage{graphicx}
\usepackage{subcaption}
\usepackage{booktabs} %

\usepackage{hyperref}

\usepackage[preprint]{icml2026}

\usepackage{amsmath}
\usepackage{amssymb}
\usepackage{mathtools}
\usepackage{amsthm}

\usepackage{array}
\usepackage{multirow}

\theoremstyle{plain}
\newtheorem{theorem}{Theorem}[section]
\newtheorem{proposition}[theorem]{Proposition}
\newtheorem{lemma}[theorem]{Lemma}

\theoremstyle{definition}
\newtheorem{definition}[theorem]{Definition}
\newtheorem{assumption}[theorem]{Assumption}
\theoremstyle{remark}

\usepackage[capitalize,noabbrev]{cleveref}

\crefname{section}{Section}{Sections}
\crefname{theorem}{Theorem}{Theorems}
\crefname{lemma}{Lemma}{Lemmas}
\crefname{equation}{Eq.}{Eq.}
\crefname{proposition}{Proposition}{Propositions}
\crefname{claim}{Claim}{Claims}
\crefname{remark}{Remark}{Remarks}
\crefname{assumption}{Assumption}{Assumptions}
\crefname{definition}{Definition}{Definitions}
\crefname{appendix}{Appendix}{Appendices}
\crefname{algorithm}{Algorithm}{Algorithms}
\crefname{figure}{Figure}{Figures}
\crefname{table}{Table}{Tables}
\crefname{example}{Example}{Examples}

\usepackage[disable,textsize=tiny]{todonotes}

\usepackage{subcaption}
\usepackage{pifont}%
\usepackage{placeins}
\usepackage{tabularx}
\usepackage{booktabs} %
\usepackage{soul}
\usepackage{multirow}
\usepackage{adjustbox}
\usepackage{xspace} 
\usepackage{colortbl}
\usepackage{amsthm}
\usepackage[inkscapelatex=false]{svg}

\makeatletter
\DeclareRobustCommand\onedot{\futurelet\@let@token\@onedot}
\def\@onedot{\ifx\@let@token.\else.\null\fi\xspace}
\def\eg{\emph{e.g}\onedot} 
\def\ie{\emph{i.e}\onedot}

\makeatother

\newcommand{\methodname}{RelaxFlow\xspace}
\newcommand{\methodabbr}{RelaxFlow\xspace}
\newcommand{\method}{\methodname}
\newcommand{\task}{text-driven amodal 3D generation\xspace}

\newcommand{\extremeocc}{ExtremeOcc-3D\xspace}
\newcommand{\unseen}{AmbiSem-3D\xspace}
\newcommand{\obsbranch}{Observation Branch\xspace}
\newcommand{\sembranch}{Semantic-Prior Branch\xspace}
\newcommand{\lprelax}{low-pass relaxation\xspace}
\newcommand{\semcorridor}{semantic corridor\xspace}

\definecolor{myorange}{HTML}{B13B12}
\definecolor{myblue}{HTML}{14A0EC}
\definecolor{mygreen}{HTML}{7ED569}
\definecolor{mygred}{HTML}{F8D5D5}

\begin{document}

\twocolumn[
  \icmltitle{RelaxFlow: Text-Driven Amodal 3D Generation}

  \icmlsetsymbol{equal}{*}

  \begin{icmlauthorlist}
    \icmlauthor{Jiayin Zhu}{nus}
    \icmlauthor{Guoji Fu}{nus}
    \icmlauthor{Xiaolu Liu}{zju,nus}
    \icmlauthor{Qiyuan He}{nus}
    \icmlauthor{Yicong Li}{ustc}
    \icmlauthor{Angela Yao}{nus}
    \\ \vspace{2px}\url{https://github.com/viridityzhu/RelaxFlow}
  \end{icmlauthorlist}

  \icmlaffiliation{nus}{National University of Singapore}
  \icmlaffiliation{zju}{Zhejiang University}
  \icmlaffiliation{ustc}{University of Science and Technology of China}

  \icmlcorrespondingauthor{Yicong Li}{liyicong@ustc.edu.cn}

  \icmlkeywords{Image-to-3D Generation, Amodal 3D Generation, Text-Guided 3D Generation}

  \vskip 0.3in
]

\printAffiliationsAndNotice{}  %

\begin{abstract}

Image-to-3D generation faces inherent semantic ambiguity under occlusion, where partial observation alone is often insufficient to determine object category. 
In this work, we formalize \emph{\task}, where text prompts steer the completion of unseen regions while strictly preserving input observation. Crucially, we identify that these objectives demand distinct control granularities: rigid control for the observation versus relaxed structural control for the prompt. To this end, we propose \textbf{\methodname}, a training-free dual-branch framework that decouples control granularity via a Multi-Prior Consensus Module and a Relaxation Mechanism. Theoretically, we prove that our relaxation is equivalent to applying a low-pass filter on the generative vector field, which suppresses high-frequency instance details to isolate geometric structure that accommodates the observation. To facilitate evaluation, we introduce two diagnostic benchmarks, \textbf{\extremeocc} and \textbf{\unseen}. Extensive experiments demonstrate that \methodabbr successfully steers the generation of unseen regions to match the prompt intent without compromising visual fidelity.

\end{abstract}

\begin{figure}
    \centering
    \includegraphics[width=\linewidth]{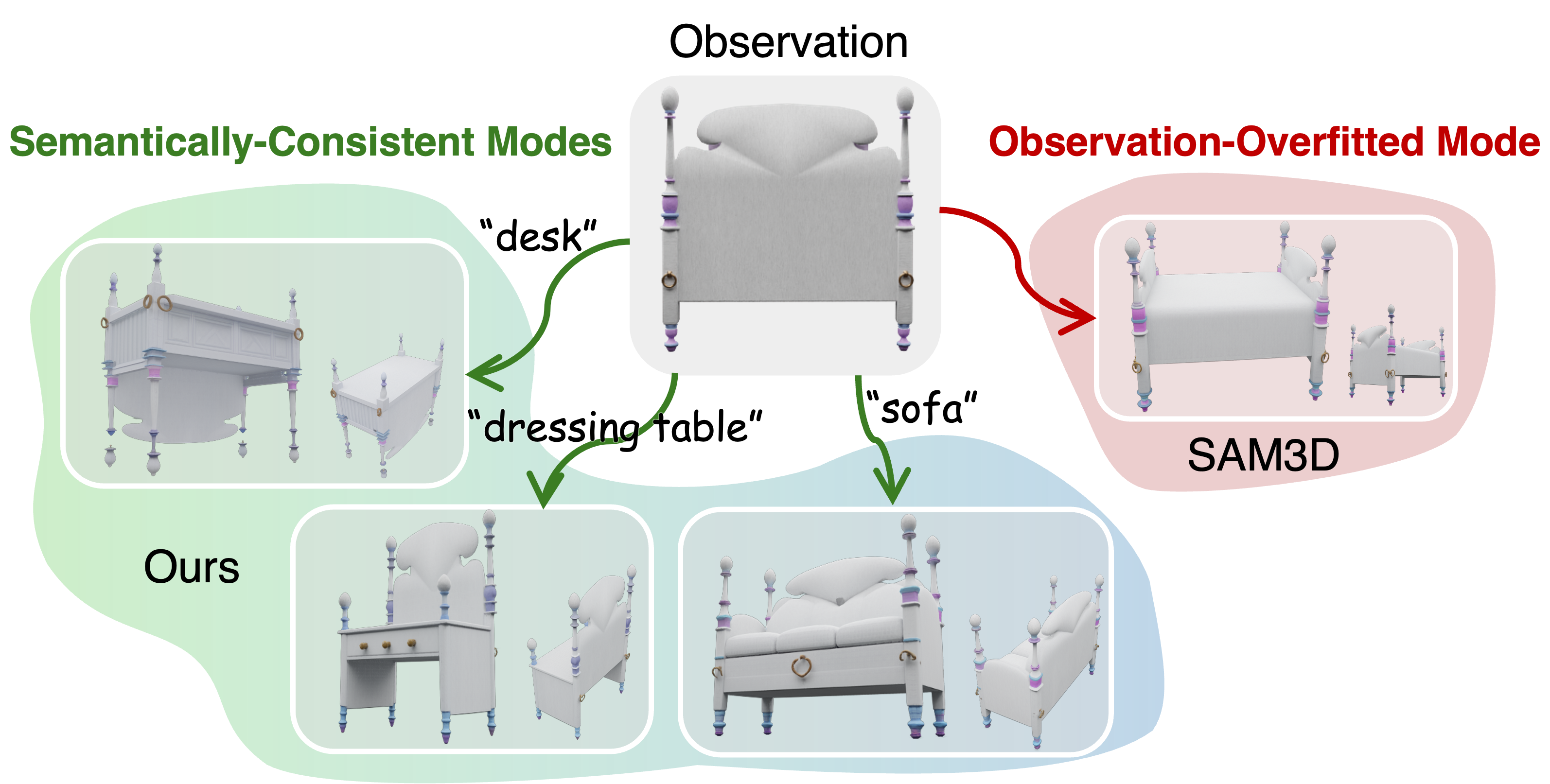}
    \caption{The case for multiple plausible amodal 3D interpretations under occlusion. Feedforward image-to-3D model (\eg, SAM3D) collapses to a single overfitted bed-like shape,
while our RelaxFlow resolves the ambiguity via \emph{text-driven amodal generation}, allowing users to inject explicit textual intent and steer the generation toward alternative \emph{semantically consistent} amodal 3D shapes.
    }
    \vspace{-1em}
    \label{fig:teaser}
\end{figure}

\section{Introduction}
\label{introduction}

Humans possess the natural ability to perceive objects’ complete physical structure even under occlusion. This cognitive capacity, known as amodal perception, allows us to mentally complete the object by supplementing the visible fragments with internal prior knowledge.
As a core technology for AR/VR and robotics~\cite{xiu2025egotwin}, image-to-3D generation~\cite{xiang2024trellis,zhao2025hunyuan3d2,team2025sam3d,Chen_2025_CVPR_3DTopiaXL,Liu2023Zero1to3,ye_hi3dgen_2025,wu_direct3d_2024,poole2023dreamfusion} seeks to emulate this capability. However, unlike human cognition, which draws on rich internal context, existing generative models typically rely solely on the visible pixels. Consequently, when observation is occluded, this lack of semantic guidance leads to severe ambiguity.

In this work, we move beyond generic single-condition generation and define \textit{\task}, where the user's text prompt explicitly steers the completion of unseen parts while preserving the observed evidence. 
Taking Figure~\ref{fig:teaser} as an illustration, when only a wooden backboard is visible, the amodal object remains ambiguous. The unseen regions could plausibly correspond to a sofa, a bed, or a dressing table.
Previous feedforward image-to-3D models, like SAM3D, often collapse to an \emph{observation-overfitted} generation (here, a bed-like shape) by uncontrolled implicit hallucination.
Under our task formulation, a user can explicitly resolve this ambiguity via a text prompt (\eg, specifying ``a sofa''), while keeping the visible part consistent with the input observation.

To achieve this controllability, the generation must satisfy two constraints simultaneously: (1) \textbf{observation fidelity}, where the visible regions must strictly adhere to the pixel-level details of the input observation; and (2) \textbf{prompt following}, where the unobserved regions must conform to the user's prompt, while maintaining structural coherence with the observation. State-of-the-art feedforward models~\cite{xiang2024trellis,team2025sam3d,zhao2025hunyuan3d2} prioritize observation fidelity but often trap the output in an ``\emph{observation-overfitted mode}'' (see Figure~\ref{fig:teaser}, ``bed" result) without prompt-following capabilities. 
Conversely, optimization-based methods~\cite{instructnerf2023,kamata_i323_2023,miao2025rethinkingsds,zhu2026AnchorDS} enforce strong {prompt-following}, but tend to over-smooth or distort the observation evidence, as gradients from the semantic prior clash with the hard pixel reconstruction constraints~\cite{He2023T3BenchBC}. As a result, the tension between the {observation fidelity and prompt following} cannot be resolved by existing strategies.

Existing strategies attempt to enforce both constraints under a uniform control granularity. However, this approach creates an inherent conflict: the \textit{input observation} demands strict adherence to ensure visual fidelity, whereas the \textit{input prompt} serves only as a relaxed structural guide to accommodate the observation. To resolve this tension, our core insight is that the generation process must decouple these control granularities. Accordingly, we adopt a dual-branch strategy: an \obsbranch for strict adherence, and a \sembranch that captures global semantics while tolerating local variations. Finally, we employ a visibility-aware fusion mechanism to ensure the semantic guide steers only the occluded regions, while the observation strictly governs the visible pixels.

While the \obsbranch can be effectively managed via standard rigid feature injection, the challenge arises in designing the {relaxation mechanism} in the \sembranch.  
To achieve this, we adopt a training-free strategy with two core designs. We first introduce a Multi-Prior Consensus module that retrieves a set of reference images via the text prompt. Since these references share a semantic category but differ in appearance, their cross-attention naturally amplifies structural consensus while suppressing inconsistent instance-specific textures. We then apply a \lprelax by smoothing cross-attention logits within the generation backbone. 

Our theoretical analysis shows that this smoothing is equivalent to low-pass filtering the underlying generative vector field.  The filtering suppresses high-frequency instance identities and exposes a coarse \semcorridor that enforces only low-frequency global geometry. As a result, the \sembranch achieves the target geometry (\eg, ensuring the shape of a ``sofa'') but remains agnostic to high-frequency details, thereby tolerating local variations and accommodating the specific textures of the observation.

To facilitate systematic evaluation, we develop
two diagnostic benchmarks for \task, focusing on whether text can disambiguate unseen structure without sacrificing adherence to the observed evidence.
\emph{\extremeocc} targets {extreme occlusion} in natural indoor scenes, where visible evidence is often insufficient to infer even the object category, and a text label is required to resolve the completion.
\emph{\unseen} targets {semantic branching}: each ambiguous input image admits multiple plausible interpretations, paired with textual prompts that specify distinct intended outcomes under the same visual evidence.
Results on these benchmarks show that our method produces high-quality 3D assets that faithfully reflect the user's semantic intent while strictly anchoring the visible evidence, and remain 3D plausible and consistent.

Our contributions are fourfold:
    (1) We formalize \task, a new setting that requires resolving occlusion-induced ambiguity via text prompts while strictly preserving the input observation;
    (2) We propose \methodabbr, a training-free dual-branch framework that effectively decouples control granularity via consensus-based multi-prior conditioning and visibility-aware fusion;
    (3) We provide a theoretical proof that our relaxation mechanism on the semantic branch is equivalent to a low-pass filter on the generative vector field, justifying our strategy for extracting structural guidance. 
    (4) We introduce two diagnostic benchmarks, \extremeocc and \unseen. Extensive experiments demonstrate that our approach significantly outperforms existing strategies in steering unseen geometry without compromising fidelity.

\section{Related Work}
\label{sec:related_work}

\noindent\textbf{Occlusion-aware 3D Generation.}
Feedforward image-to-3D methods~\cite{xiang2024trellis,zhao2025hunyuan3d2,Chen_2025_CVPR_3DTopiaXL,lan2024ln3diff,xu2024instantmesh} enable efficient single-view generation, but often rely on strong observation priors and degrade under heavy occlusion, where multiple 3D hypotheses remain consistent with the visible evidence. This is closely related to amodal completion, which recovers occluded object extent~\cite{li2025intermediate}. In 2D, \citet{ao_2025_cvpr_openWorldAmodal,xiu2025geometric} present training-free text-guided amodal completion; Pix2Gestalt~\cite{ozguroglu_2024_cvpr_pix2gestalt} completes in 2D and lifts to 3D via reconstruction.
Occlusion completion has also been integrated into feedforward image-to-3D. Amodal3R~\cite{Wu_2025_ICCV_amodal3r} and DeOcc-1-to-3~\cite{qu_2025_deocc123} finetune feedforward models with masked inputs, while SAM3D~\cite{team2025sam3d} scales training for stronger completion. However, these approaches require retraining and typically learn a dataset-driven ``most likely'' completion, limiting controllability when multiple amodal interpretations are plausible. Achieving retraining-free external semantic control (text or images) while preserving observation fidelity remains insufficiently addressed, especially under extreme occlusion.

\noindent\textbf{Multi-condition Control in 3D Generation.}
A common way to reduce ambiguity is to provide more views, \ie, multi-view-to-3D methods~\cite{Liu2023Zero1to3,shi2024mvdream}; however, they assume mutually consistent views of the same instance, rather than an auxiliary condition that encodes semantic intention. Text-guided 3D editing~\cite{instructnerf2023,kamata_i323_2023,miao2025rethinkingsds,parelli20253dlatte,li2024laso, li2025intermediate} offers language control but typically requires an existing 3D asset and per-instance optimization or inversion, differing from feedforward generation conditioned on an observed image plus an intention signal. More broadly, feedforward image-to-3D generators~\cite{team2025sam3d,zhao2025hunyuan3d2,xiang2024trellis,lan2024ln3diff} are not designed for unified text-and-image control, making joint alignment of observation evidence and free-form instructions challenging. A practical alternative is to use text-to-image models~\cite{cai2025zimage,labs2025flux1kontext,podell2023sdxl} to synthesize visual proxies of text intentions that can condition or guide 3D generation.

\noindent\textbf{Composing and Steering Guidance in Flow Models.}
Classifier-free guidance (CFG)~\cite{ho2022cfg} amplifies conditional signals during diffusion sampling and is often extended to weighted compositions of multiple prompts. Methods that modify condition-dependent dynamics, such as FlowEdit~\cite{kulikov2025flowedit}, further suggest that composing conditional vector fields can enforce multiple constraints~\cite{liu2022compositionaldiffusion}.
Orthogonally, \citet{hong2024seg} analyzes attention blur and uses it as regularizing guidance, and \citet{sadat_2025_guidanceInFrequency} studies frequency-aware guidance and sampling. However, a systematic link between attention smoothing and an explicit low-pass semantic prior for stable, training-free controllable generation under severe underdetermination remains underexplored.

\section{ODE Flow Formulation for Dual-Branch 3D Generation}\label{sec:method}

\subsection{ODE Flow Formulations}\label{sec:ode}

\textbf{Motivation: The dual-objective inference problem.} 
We view \task as a dual-objective inference problem involving two distinct physical forces: 
\begin{enumerate}%
    \setlength{\itemsep}{0pt}
    \item \emph{Observation Constraint:} 
        A hard force that anchors the generation to the visible pixels, ensuring reconstruction fidelity.
    \item \emph{Semantic Prior:} 
        A soft force that completes the unobserved regions, governed by the user's high-level semantic intent rather than pixel evidence.
\end{enumerate}
 Under external blockers or self-occlusion, the same observation can admit multiple plausible 3D completions, so the semantic completion cannot be uniquely determined from the observation alone.

\textbf{Information gap.}
The observation condition $c_{\text{obs}}$ (the input image) is sufficient to enforce the \emph{hard} constraint term, but is generally \emph{insufficient} to uniquely determine the semantic prior in occluded regions. We therefore introduce an auxiliary latent condition $c_{\text{sem}}$ (the user's true intent) that is not a function of $c_{\text{obs}}$ in ambiguous cases.

Motivated by these, we formulate the governing ODE for all methods as an interpolation where the semantic term depends on a variable semantic prior $c^{\star}$: 
\begin{equation}
    \frac{\od x_t}{\od t} 
    \!=\! 
    (1 \!-\! \alpha_t)
    \underbrace{\bv_{\rm obs}(x_t, t, c_{\text{obs}})}_{\text{Observation Constraint}} 
    \!+ 
    \alpha_t
    \underbrace{\bv_{\rm prior}(x_t, t, c^{\star})}_{\text{Intent Prior (semantic)}}.
    \label{eq:general_flow}
\end{equation}
Here, $\alpha_t$ schedules how strongly the intent prior influences the trajectory. The distinction between methods lies in different choices of (i) the conditioning $c^{\star}$ used by the prior branch, and (ii) the parameterization of the prior-induced vector field $\bv$.

\textbf{Three instantiations.}
\emph{(i) Oracle trajectory.} The ideal flow uses a perfect decomposition: an observation-preserving field and an oracle semantic field driven by the true intent tokens $c^{\star}=c_{\text{sem}}$. This trajectory has access to the missing information needed to resolve occlusion-induced ambiguity. For the \emph{oracle} semantic transport field, we denote $v_{\rm prior}(\cdot,t,c^\star)=v_{\rm sem}(\cdot,t,c_{\rm sem})$.

\emph{(ii) Standard neural flow}. Feedforward image-to-3D models typically reuse observation tokens for everything, effectively setting $c^{\star}=c_{\text{obs}}$. This conflates ``what must be preserved'' with ``what is missing,'' and can yield \emph{observation-overfitted} solutions when the unseen regions are underdetermined. Here, we denote $v_{\rm prior}(\cdot,t,c^\star)=\bar v_{\theta}(\cdot,t,c_{\rm obs})$.

\emph{(iii) \methodabbr \;(ours).}
We keep the observation branch unchanged to preserve evidence, but replace the prior branch with an \emph{explicit intent prior} represented by visual tokens derived from the text prompt, \ie, $c^{\star}=c_{\text{prior}}$ (Sec.~\ref{sec:multiprior}).
Crucially, we \emph{relax} the prior-conditioned estimator with a low-pass operator $\mathcal{R}_\sigma$ (Eq.~\ref{eq:lp_relax_def}) to suppress instance-specific, high-frequency nuisance that destabilizes joint guidance; in practice $\mathcal{R}_\sigma$ is realized by attention-logit blurring (Eq.~\ref{eq:attn_blur}).
This decoupling bridges the information gap without retraining the generator.

Compared to the standard neural flow, \methodabbr changes only the \emph{source} and \emph{smoothness} of the semantic term: it is driven by intent priors rather than observation tokens, and is low-pass–relaxed to remain compatible with the observation constraint. The next section shows that this relaxation provably reduces semantic estimation error and tightens a stability bound.

\begin{figure*}[t]
        \centering
        \includegraphics[width=\linewidth]{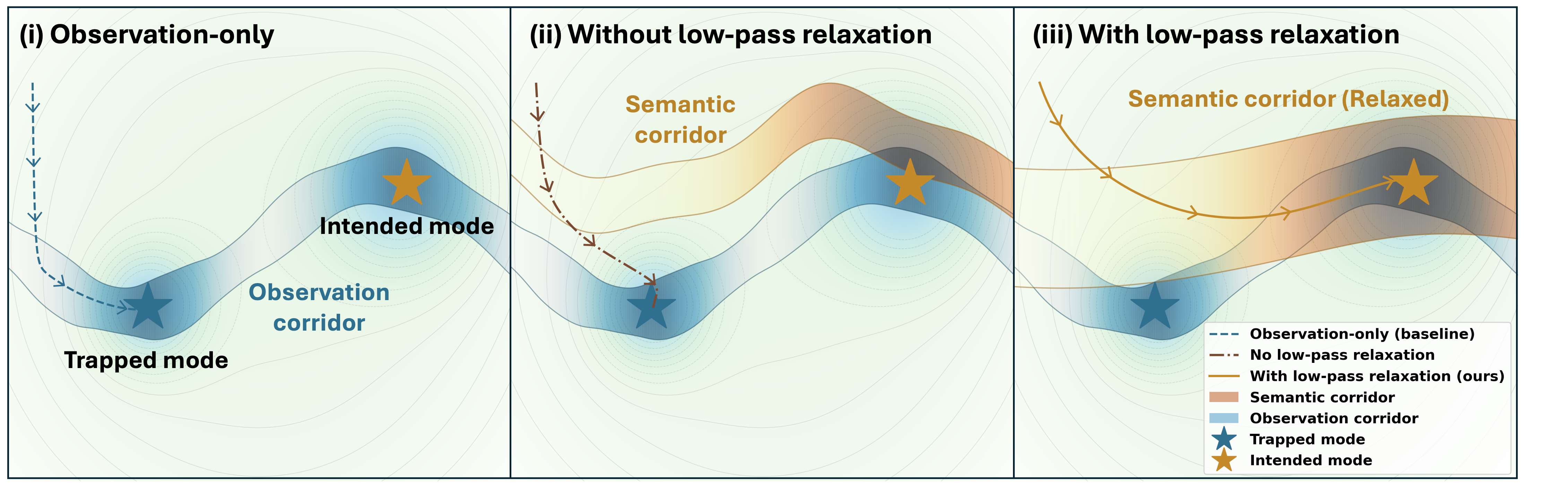}
        \caption{ \textbf{Conceptual illustration of low-pass relaxation.}  The background depicts a conceptual \emph{compatibility landscape} over latent states, where high density corresponds to low ``energy''. Star markers indicate a spurious trapped mode and the intended target mode. Throughout, we use \emph{corridor} to denote a tube of latent states traced by integral curves of a conditioned velocity field while satisfying a constraint.
        Smoothing the prior-conditioned guidance thickens the semantic corridor and steers trajectories toward the intended mode while remaining compatible with the observation corridor.}
        \label{fig:method-demon}
\end{figure*}

\subsection{Theoretical Justification: Low-Pass Relaxation and Stability}\label{sec:lowpass_theory}
\paragraph{Low-pass relaxation operator.}
Our analysis treats \emph{low-pass relaxation} as an operator applied to the \emph{prior-conditioned} semantic estimator.
Let $v_\theta(x,t,c_{\rm prior})$ denote the generator's (prior-conditioned) velocity estimate.
We define its $\sigma$-relaxed counterpart by
\begin{equation}
\tilde v_\theta(\cdot,t,c_{\rm prior})
\;:=\;
\mathcal{R}_\sigma\!\left[v_\theta(\cdot,t,c_{\rm prior})\right],
\label{eq:lp_relax_def}
\end{equation}
where $\mathcal{R}_\sigma$ is a Gaussian low-pass operator that attenuates high-frequency components and preserves low-frequency structure (formalized in Appendix~\ref{sec:app:spectral}).
In the main text, we analyze $\mathcal{R}_\sigma$ abstractly at the velocity-field level; in implementation, we realize $\mathcal{R}_\sigma$ by blurring the \emph{prior-branch} cross-attention logits (Eq.~\ref{eq:attn_blur}), which induces an analogous relaxation on the resulting velocity field (Appendix~\ref{appen:logit-smooth-as-vector-relax}).

\subsubsection{Spectral analysis of semantic guidance}

We illustrate the intuition in Figure~\ref{fig:method-demon}, where the landscape background serves as a conceptual visualization of compatibility. %
In panel (i), the blue band shows the \emph{observation corridor} induced by $c_{\rm obs}$, and the dashed curve is an observation-only rollout constrained to remain within this tube. Panels (ii--iii) additionally introduce an intent prior $c_{\rm prior}$ (orange) to resolve ambiguity in occluded regions: with a raw prior-conditioned estimator (ii), instance-specific high-frequency nuisance can intermittently point against observation-consistent directions, fragmenting the effective semantic corridor and causing unstable dynamics; with low-pass relaxation (iii), these high-frequency components are suppressed, producing a thicker, smoother semantic corridor that stays compatible with the observation corridor and reliably steers the trajectory toward the intended mode.

Motivated by this corridor-fragmentation viewpoint, we model the semantic branch as providing a \emph{transport or steering velocity} in the generator's ODE. 
We posit that the oracle semantic transport field $\bv_{\rm sem}$, which captures global structures like the shape of a ``bed" or ``sofa", is inherently band-limited to low frequencies. In contrast, errors in the learned semantic velocity (\eg, texture-driven conflicts or instance-specific hallucinations) behave as high-frequency noise (see \cref{assump:sem-tube,assump:hf-mismatch} in Appendix~\ref{sec:app:spectral} for formal spectral definitions).

Based on this spectral separation, we treat $\tilde v_\theta = \mathcal{R}_\sigma[v_\theta]$ (Eq.~\ref{eq:lp_relax_def}) as a low-pass relaxation:
it suppresses high-frequency, instance-specific error while preserving low-frequency semantic transport.
We measure semantic estimation error along a reference trajectory $X_t$ by the $L_2$ path norm:
\begin{equation}
    \gE_{\rm sem}^2(v)
    \;:=\;
    \int_0^1
    \big\| \bv_{\rm sem}(X_t,t,c_{\rm sem}) - v(X_t,t)\big\|^2 \,\mathrm{d}t
    ,
    \label{eq:esem_def}
\end{equation}
where $v(X_t,t)$ denotes the semantic-branch estimator used by a method (standard or ours).
As derived in \cref{thrm:low-pass}, this operation strictly reduces the semantic estimation error:
\begin{equation}
    \gE_{\rm sem}(\tilde{\bv}_{\theta}) < \gE_{\rm sem}(\bar{\bv}_{\theta}).
\end{equation}
This denoising operation ensures that the prior branch provides stable structural guidance without injecting the high-frequency conflicts that typically disrupt observation constraints, matching the corridor-smoothing effect in Figure~\ref{fig:method-demon}. 
We refer to Appendix~\ref{sec:app:spectral} for the detailed derivations.

\begin{figure*}[t]
        \centering
        \includegraphics[width=\linewidth]{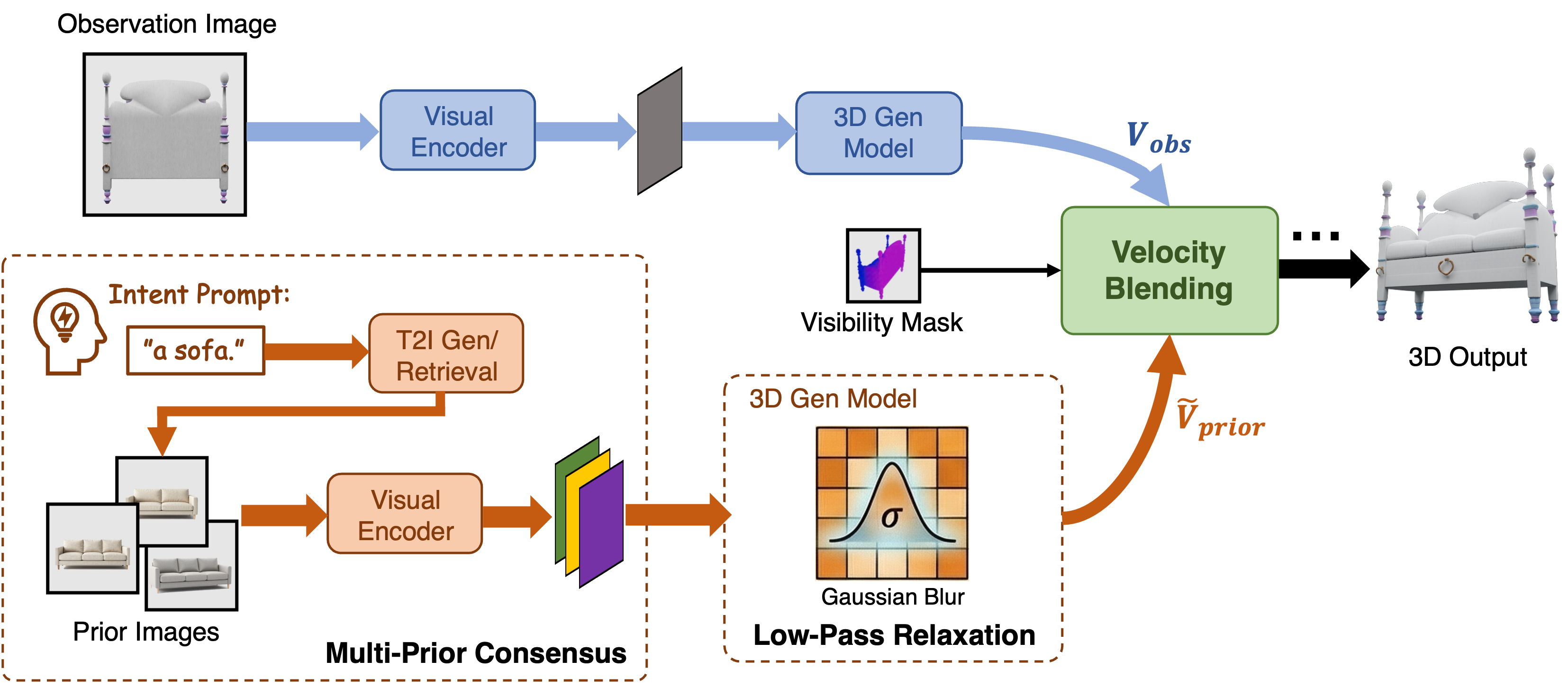}
        \caption{  \textbf{\methodabbr\ pipeline overview}. An observation-driven branch preserves visible evidence, while the semantic guidance is injected through the intent prompt via multi-prior consensus and low-pass relaxation. Dual branches are fused via velocity blending to resolve occlusion-induced ambiguity.}
        \label{fig:method-pipeline}
\end{figure*}

\subsubsection{Stability and Wasserstein Bounds}
We translate the reduction in vector field error into a guarantee on generation quality. We prove that by filtering high-frequency semantic noise, the generated distribution becomes geometrically closer to the ground truth.

We now translate semantic-field estimation error into a distributional guarantee. We use $\gE_{\rm sem}(\cdot)$ 
as defined in \cref{eq:esem_def}. %
To measure the quality of the final output, we use the \emph{Wasserstein-2 ($\gW_2$) distance} to the ground truth distribution. 
We choose $\gW_2$ because it captures the \textit{geometric transport cost} between distributions~\citep{heusel2017gans}, ensuring that generated shapes are geometrically plausible rather than just statistically overlapping. 

Relying on the Lipschitz continuity of the neural estimator (see \cref{assump:Lip:nn} in Appendix), we derive an upper bound for the generation quality. 
As proven in \cref{thrm:Wasserstein-bound:main} (Appendix), the distance to the ground truth is bounded by 
\begin{equation}
    \gW_2(p, \hat{p}) \leq C \cdot \left(\gE_{\rm obs}(\bar{\bv}_{\theta}) + \gE_{\rm sem}(\tilde{\bv}_{\theta}) + \delta_{\rm prior} \right).
\end{equation}
The term $\delta_{\rm prior}$ captures the residual mismatch between the visual prior tokens $c_{\rm prior}$ and the oracle intent $c_{\rm sem}$ (\ie, the gap introduced by using proxy priors). Low-pass relaxation tightens the bound by reducing the semantic estimation error $\gE_{\rm sem}^2(\tilde{\bv}_\theta)$. In Sec.~\ref{sec:multiprior}, we further reduce $\delta_{\rm prior}$ in practice by using \emph{multiple} priors and letting attention implement a consensus over consistent attributes.

\section{RelaxFlow Framework}

We introduce \methodabbr, a training-free dual-branch inference framework for \task that realizes the theoretical flow derived in Sec.~\ref{sec:ode}.
As shown in Figure~\ref{fig:method-pipeline}, our pipeline decouples the generation into two parallel trajectories:
(1) An \emph{Observation Branch} driven by $c_{\text{obs}}$ to preserve high-frequency visible evidence; and
(2) A \emph{Semantic-Prior Branch} driven by $c_{\text{prior}}$ that applies low-pass relaxation to steer global structure.
We describe the construction of the prior condition $c_{\text{prior}}$ (Sec.~\ref{sec:multiprior}), the realization of the relaxed vector field $\tilde{\bv}$, %
and the fusion strategy of the two branches (Sec.~\ref{sec:fusion}).

\subsection{Backbone and Inference Setup}\label{sec:backbone}

We instantiate \methodabbr on two common feedforward image-to-3D generators, TRELLIS~\cite{xiang2024trellis} and SAM3D~\cite{team2025sam3d}, both of which follow a two-stage pipeline: \emph{Sparse Structure} (SS) for generating geometry, and \emph{Structured Latent} (SLAT) for refining texture and detail.
The base generators of both SS and SLAT are conditional rectified-flow models. During inference, we solve its ODE with an explicit solver. With $K$ steps and schedule $\{t_k\}$, Euler updates take the form $x_{k+1}=x_k+\Delta t\,v(x_k,t_k,c)$, where $c=E(I,M)$ denotes conditioning tokens encoded from an image-mask pair via a visual encoder $E$, and $v$ is the model-predicted velocity.
In SS, a flow generator outputs a latent $z^{\text{ss}}$ decoded into an occupancy volume $S=\sfD_{\text{ss}}(z^{\text{ss}})$ on a $64^3$ grid, from which we extract occupied voxel indices
\begin{equation}
\gC=\{i \mid S_i>0\}.
\end{equation}
In SLAT, a second flow generator outputs per-voxel features $z^{\text{slat}}\in\mathbb{R}^{|\gC|\times 8}$, and a decoder maps these latents to Gaussian splats or a mesh representation.

\subsection{Multi-Prior Consensus}\label{sec:multiprior}

A key practical challenge in \task is that modern feedforward 3D generators are typically \emph{visual-token conditioned}~\cite{he2025rear}: their conditioning interface expects image-derived tokens rather than free-form language. Directly injecting a text embedding would require aligning an external language space to the generator's internal visual tokens (\eg, via adapters or finetuning), which is both model-specific and risks introducing distribution mismatch. We instead convert the user's intent prompt into \emph{visual proxies} and keep \emph{all} conditions in the generator's native visual-token space.

\paragraph{Prompt-to-prior as a visual interface.}
Given an intent prompt $p$, we construct a small set of $N$ prior images $\{(I_p^n, M_p^n)\}_{n=1}^N$ using either retrieval from a data pool or an off-the-shelf text-to-image generator to sample multiple instances. These priors are not meant to match the observation's instance appearance; rather, they provide \emph{structural semantic evidence} for the occluded or ambiguous regions. The same mechanism also supports user-provided reference images by directly treating them as priors.

\paragraph{Consensus over multiple priors.}
A single prior image can entangle the intended attribute with incidental, instance-specific texture and style details. 
We therefore condition on a \emph{set} of priors that share the intended attribute but vary in irrelevant factors (\eg, given a prompt ``a bird with a red beak'', we can generate multiple images with red beaks but different body shapes and feather colors). The model should preserve what is consistent across priors and ignore conflicts. 
Concretely, we concatenate the per-prior token sequences into one long sequence and feed it to cross-attention in a single pass. This induces a simple consensus effect: attributes that are consistent across priors appear repeatedly in the token set and thus receive more aggregate attention, while idiosyncratic or conflicting details tend to be diluted. 
Empirically, this reduces the proxy gap $\delta_{\rm prior}$ in Sec.~\ref{sec:lowpass_theory} by making the effective conditioning closer to the user’s intent than any single prior.

\subsection{Dual-Branch Sampling as ODE Interpolation}\label{sec:dualbranch}
\methodabbr implements Eq.~\ref{eq:general_flow} by coupling two branches without retraining, implemented via a prior-guided sampling rule. 
At each solver step $k$, we evaluate both branches on the \emph{same} state $x_k$:
\begin{equation}
	    v_{\text{obs}} = v_\theta(x_k, t_k, c_{\rm obs}), \quad
	    \tilde{v}_{\text{prior}} = \tilde{v}_\theta(x_k, t_k, c_{\rm prior}),
\end{equation}
where $\tilde{v}_\theta$ denotes evaluation under \lprelax, which we implement by smoothing the cross-attention logits as in Eq.~\ref{eq:attn_blur}. 
This shared-state design matters: it avoids inconsistent trajectories (\eg, a prior-only path that drifts away from the observed geometry), and instead combines two compatible velocities.

\paragraph{Realizing $\tilde{v}$ via logit smoothing.}
To instantiate the abstract relaxation $\tilde v_\theta=\mathcal{R}_\sigma[v_\theta]$ (Eq.~\ref{eq:lp_relax_def}) in transformer-based generators, we apply $\mathcal{R}_\sigma$ to the \emph{prior-branch} cross-attention \emph{logits}.
This blurring suppresses token-local, high-frequency sensitivity in prior conditioning, yielding the smoother semantic corridor in Figure~\ref{fig:method-demon}.
For an attention head with {logits $L_{i,j}=q_i^\top k_j/\sqrt d$, where $(i,j)$ follow the underlying 2D/3D token grid order, we apply a separable Gaussian filter $G_\sigma$} before the softmax:
\begin{equation}\label{eq:attn_blur}
    \tilde{L} = G_\sigma * L
    :=
    \big(G_\sigma^{(q)} *_{q} L\big) *_{k} G_\sigma^{(k)},
\end{equation}
and $\text{Attn}(Q,K,V) = \text{softmax}(\tilde{L})\,V$.
Here $*_{q}$ and $*_{k}$ denote 1D convolution along the query and key indices, respectively.
In implementation, $G_\sigma$ is a discrete 1D Gaussian {kernel with $\pm 3\sigma$ support}, normalized to sum to $1$; the operation is realized as two sequential 1D convolutions along the sequence dimensions ($Q$ and $K$), equivalent to a separable 2D blur over the logit matrix~\cite{hong2024seg}. The hyperparameter $\sigma$ controls the smoothing strength, and we empirically show that performance is robust to $\sigma$ around our default choice ($\sigma=1.0$) (Sec.~\ref{subsec:exp_ablation}).
\obsbranch\ attention is left unmodified.

\paragraph{Fusion Strategy.}\label{sec:fusion}
To combine the branches, we employ a time-dependent and spatially-aware fusion scheme.

Prior works~\cite{huang_dreamtime_2023, choi2022perception,zhu2024InstructHumans} show that early steps determine the global semantic mode under ambiguity, so prior guidance is most useful there; later steps are where the model refines geometry and texture details, where we prefer to rely on the observation branch to avoid overpainting and to preserve evidence-consistent high frequencies. As a result, the prior steers {where needed} (coarse, ambiguous completion) and yields to the observation for {what must be preserved} (visible structure and fine details).
We then discretize the time-dependent gate $\alpha_t$ as $\alpha_k$ and apply Euler updates:
\begin{equation}\label{eq:linear-blend}
	    x_{k+1} %
	    = x_k + \Delta t \, \big( (1-\alpha_k)v_{\text{obs}} + \alpha_k \tilde{v}_{\text{prior}} \big),
\end{equation}
where $\alpha_k = g(k,K)$ is discretized and clamped to $[0,1]$, and we use a simple {linear cutoff} schedule:
\begin{equation}\label{eq:alpha-schedule}
	    \alpha_k =
	    \begin{cases}
	        1 - k/K, & k \le \lfloor \rho K \rfloor, \\
	        0, & \text{otherwise},
	    \end{cases}
\end{equation}
where $\rho$ is the cutoff threshold. 
{Intuitively}, this schedule allows the prior $c_{\rm prior}$ to steer the global semantic mode early, then yield to the \obsbranch with only observation evidence $c_{\rm obs}$ to refine fine-grained details.

In the SLAT stage, injecting the \sembranch uniformly risks overpainting regions that are directly supported by the observation. We therefore estimate per-voxel visibility from the observation camera and use it to spatially separate the conditions: observed surfaces are preserved while the prior steers only genuinely occluded regions.
Given the object pose from the backbone, we project voxel centers to the image, build a z-buffer depth map $D'$, and compute an occlusion margin $\Delta_i = z_i - D'(u_i,v_i)$ for each voxel $i$ projected to pixel $(u_i,v_i)$ with depth $z_i$. This margin is converted into a soft visibility weight $m_i \in (0,1]$ via a Gaussian falloff (details in Appendix~\ref{app:vis-mask}). 
We apply the visibility mask to the \emph{velocity interpolation} in SLAT:
\begin{equation}
v_i = v_{\text{obs},i} + (1-m_i)\,\alpha_k\,(\tilde v_{\text{prior},i}-v_{\text{obs},i}),
\end{equation}
and update $x_{k+1}=x_k+\Delta t\,v$ as in Eq.~\ref{eq:linear-blend}. If $m_i\!\approx\!1$ (visible), then $v_i\!\approx\!v_{\text{obs},i}$; if $m_i\!\approx\!0$ (occluded), the step follows the time-varying blend controlled by $\alpha_k$.

\section{Experiments}
\label{sec:experiments}

\begin{figure*}[h!]
    \centering
    \includegraphics[width=1\linewidth]{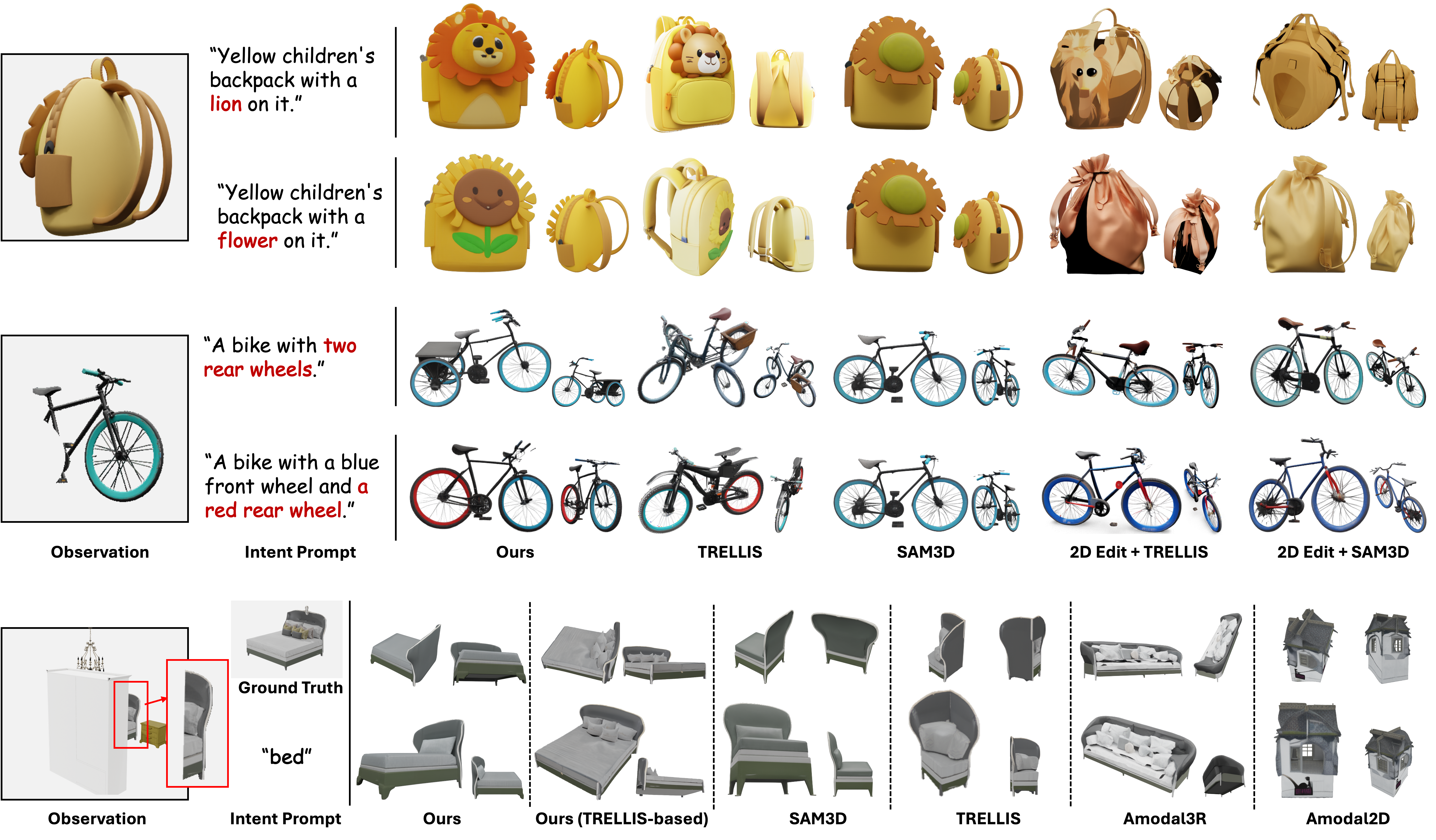}
    \caption{\textbf{Qualitative Comparisons.} \textbf{Top:} Comparisons on \unseen examples. Each contains two different intent prompts for the same observation.  Our method preserves observation fidelity while enabling prompt-controlled completion. \textbf{Bottom:} The case in \extremeocc with an intent prompt ``bed''. Under extreme occlusion, baselines either overfit the visible region or yield implausible shapes, whereas ours follows the category prior while maintaining visible evidence.}
    \label{fig:qual-all}
\end{figure*}

We evaluate \methodabbr on two complementary tasks: (i) amodal 3D completion under extreme occlusion (\extremeocc), and (ii) \task under semantic branching (\unseen). Both benchmarks are curated for controlled evaluation; full details about dataset curation are in Appendix~\ref{app:dataset-curation}.

\subsection{Task Setting}

\noindent\textbf{\extremeocc.}
We build \extremeocc from 3D-FUTURE/3D-FRONT indoor scenes~\cite{fu_2021_ijcv_3dfuture,fu_2021_iccv_3dfront}, where occlusions arise naturally from realistic furniture arrangements and camera viewpoints. Using ground-truth meshes and poses, we compute an \emph{occlusion ratio} $r$ for each object. When $r \ge 80\%$, the visible evidence is often insufficient even for humans to identify the object type. We filter $264$ test cases at this threshold, each paired with a category-level text prior. The task is to generate a complete 3D asset faithful to the visible evidence while conforming to the specified category.

\noindent\textbf{\unseen.}
\unseen tests whether text can resolve \emph{semantic uncertainty} rather than merely refine appearance. We curate $21$ ambiguous cases from ObjaverseXL~\cite{deitke2023objaverse} exhibiting three ambiguity types: masked semantic completion, view-induced identity ambiguity, and intrinsic shape ambiguity. Each case has a single input image and multiple text branches, where each branch corresponds to a distinct but plausible semantic completion. We note that \unseen is intentionally multi-solution and lacks a unique 3D ground truth.

\noindent\textbf{Metrics.}
We evaluate observation preservation (min-LPIPS), semantic alignment (CLIP-Score~\cite{icmlRadfordKHRGASAM21clipscore}), and multi-view realism (FID~\cite{nipsHeuselRUNH17fid}) in 2D. In 3D, as strict geometry errors are ill-suited for our task, where multiple completions are all considered valid, we instead evaluate set-level \textit{semantic} similarity via Point-FID~\cite{alex2022pointe}, which leverages the Point-E encoder to extract semantic features of point clouds. Details of metric computation are provided in Appendix~\ref{app:metric-details}.

\subsection{Implementation Details}
\label{subsec:implementation-details}
We evaluate \methodabbr as a plug-and-play module on SAM3D~\cite{team2025sam3d} and TRELLIS~\cite{xiang2024trellis}. For TRELLIS (no pose estimation), we disable the visibility-aware mask and apply other components unchanged.
On \extremeocc, we apply \methodabbr only at the geometry stage; on \unseen, we apply it to both stages since semantics affect both structure and appearance.
We use $\sigma{=}1.0$, $\rho{=}0.2$, and $N{=}3$ prior images. All experiments run on a single NVIDIA A40 GPU.

\noindent\textbf{Prior image sources.}
For \extremeocc, we retrieve prior images by randomly sampling a different object of the same category from the dataset.
For \unseen, we generate prior images from the text prompt using Z-Image~\cite{cai2025zimage}; all compared methods receive the same generated priors for fair comparison.
Details on prior construction are provided in Appendix~\ref{app:prior-construction}.

\setlength{\tabcolsep}{1pt}
\begin{table}[t!]
\renewcommand{\arraystretch}{1.2}
\caption{\textbf{Quantitative comparison on \extremeocc.} Methods are grouped by occlusion awareness. Our method improves over both backbones across all metrics.} \label{tab:comparison_occ}
\scriptsize
\scalebox{0.95}{
    \begin{tabular}{c|l|ccccc}
    \toprule
    Category & Method & {CLIP$_\text{img}$$\uparrow$} & {CLIP$_\text{txt}$$\uparrow$} & {FID$\downarrow$} & {LPIPS$\downarrow$} & {Point-FID$\downarrow$} \\
    \midrule
    \multirow{2}{*}{\shortstack{Non-Occlusion\\Aware}} 
     & TRELLIS & 0.78 & 23.14 & 122.68 & 0.83 & 141.48 \\
     & \textbf{Ours (w/ TRELLIS)} & \textbf{0.80} & \textbf{24.09} & \textbf{100.75} & \textbf{0.80} & \textbf{97.79} \\
    \midrule
    \multirow{4}{*}{\shortstack{Occlusion-\\Aware}} 
    & Amodal2D+SAM3D & 0.76 & 21.59 & 94.38 & 0.56 & 127.27 \\
     & Amodal3R & 0.77 & 22.29 & 118.49 & 0.60 & 129.46 \\
     & SAM3D & 0.84 & 24.08 & 50.73 & 0.54 & 100.38 \\
     & \textbf{Ours (w/ SAM3D)} & \textbf{0.87} & \textbf{27.26} & \textbf{39.44} & \textbf{0.51} & \textbf{81.11} \\
    \bottomrule
    \end{tabular}
}
\end{table}

\subsection{Amodal 3D Completion Under Extreme Occlusion}
\label{subsec:exp_results_occ}

\noindent\textbf{Baselines.}
We compare against: (1) non-occlusion-aware methods (TRELLIS~\cite{xiang2024trellis}), (2) 2D completion pipelines (Amodal2D~\cite{ao_2025_cvpr_openWorldAmodal}+SAM3D), and (3) occlusion-aware methods (SAM3D~\cite{team2025sam3d}, Amodal3R~\cite{Wu_2025_ICCV_amodal3r}).

\noindent\textbf{Results.}
Table~\ref{tab:comparison_occ} shows that \methodabbr delivers consistent gains over its backbone models. On TRELLIS, Point-FID decreases from 141.48 to 97.79. On SAM3D, \methodabbr achieves the best overall performance: CLIP$_\text{txt}$ increases from 24.08 to 27.26, and Point-FID decreases from 100.38 to 81.11. Notably, CLIP$_\text{img}$ and LPIPS remain competitive with SAM3D, indicating that \methodabbr preserves the observed evidence in the input view rather than drifting from it. Meanwhile, the lower FID and Point-FID further suggest improved global consistency across views and higher overall 3D quality. Figure~\ref{fig:qual-all} provides qualitative comparisons: TRELLIS exhibits noticeable drift from the observation, SAM3D responds weakly to the textual prompt, and 2D-based editing pipelines often introduce geometric artifacts. In contrast, \methodabbr preserves the observed structure while following the intended semantics. Overall, these results indicate that \methodabbr mitigates observation-overfitted collapse under extreme occlusion.

\subsection{Text-driven Amodal 3D Generation Under Semantic Branching}
\label{subsec:exp_results_unseen}

\noindent\textbf{Baselines.}
We compare against SAM3D~\cite{team2025sam3d}, TRELLIS~\cite{xiang2024trellis}, and 2D-edit-then-3D pipelines, where SDXL~\cite{podell2023sdxl} editing is followed by TRELLIS or SAM3D reconstruction. Since TRELLIS accepts multi-view images, we provide the prior image as a second view. Appendix~\ref{app:baseline-details} further compares with commercial 2D editing, video generation, multi-view generation, and direct text+image 3D pipelines.

\noindent\textbf{Results.}
Table~\ref{tab:comparison_unseen} and Figure~\ref{fig:qual-all} show that \methodabbr achieves the best alignment to both observation (CLIP$_\text{img}$) and intent prompt (CLIP$_\text{txt}$). Multi-view TRELLIS often drifts from the observation due to conflicting views, while 2D-editing pipelines introduce geometry-inconsistent artifacts.

\noindent\textbf{User Study.}
Since \unseen allows intent prompt-driven generations which do not have unique ground truths, we conduct a user study with $32$ volunteers evaluating all $21$ cases on \emph{Text--Image Alignment} and \emph{3D Fidelity}. As shown in Table~\ref{tab:comparison_unseen} (right), \methodabbr is preferred by a large margin (68.52\% overall), confirming its ability to resolve ambiguity while preserving observed evidence.

\setlength{\tabcolsep}{1pt}
\begin{table}[t]
\caption{\textbf{Quantitative comparison on \unseen.} Left: automatic metrics (CLIP similarities). Right: user study results ($n{=}32$). Our method achieves the best scores on both evaluations.} \label{tab:comparison_unseen}
\scriptsize
\renewcommand{\arraystretch}{1.2}
\scalebox{0.98}{
    \begin{tabular}{>{\centering\arraybackslash}m{2.2cm}|cc|ccc}
    \toprule
\multirow{2}{*}{\textbf{Method}} 
& \multicolumn{2}{c|}{\textbf{CLIP Score}} 
& \multicolumn{3}{c}{\textbf{User Study}} \\
\cmidrule(lr){2-3} \cmidrule(lr){4-6}
& {CLIP$_\text{img}$$\uparrow$}  
& {CLIP$_\text{txt}$$\uparrow$}  
& {Alignment$\uparrow$} 
& {3D Fidelity$\uparrow$} 
& {Overall Pref.$\uparrow$} \\
\midrule
    SAM3D & 0.85 & 26.29 & 4.84\% & 13.59\% & 9.22\% \\
    TRELLIS (multi-view) & 0.80 & 26.59 & 3.75\% & 8.28\% & 6.02\% \\
    SDXL + TRELLIS & 0.81 & 26.76 & 6.09\% & 8.91\% & 7.50\% \\
    SDXL + SAM3D & 0.79 & 26.71 & 11.41\% & 6.09\% & 8.75\% \\
    \midrule
    \textbf{Ours} & \textbf{0.87} & \textbf{27.23} & \textbf{73.91\%} & \textbf{63.13\%} & \textbf{68.52\%} \\
    \bottomrule
    \end{tabular}
}%
\end{table}

\begin{figure*}[tbh]
    \centering
    \includegraphics[width=1\linewidth]{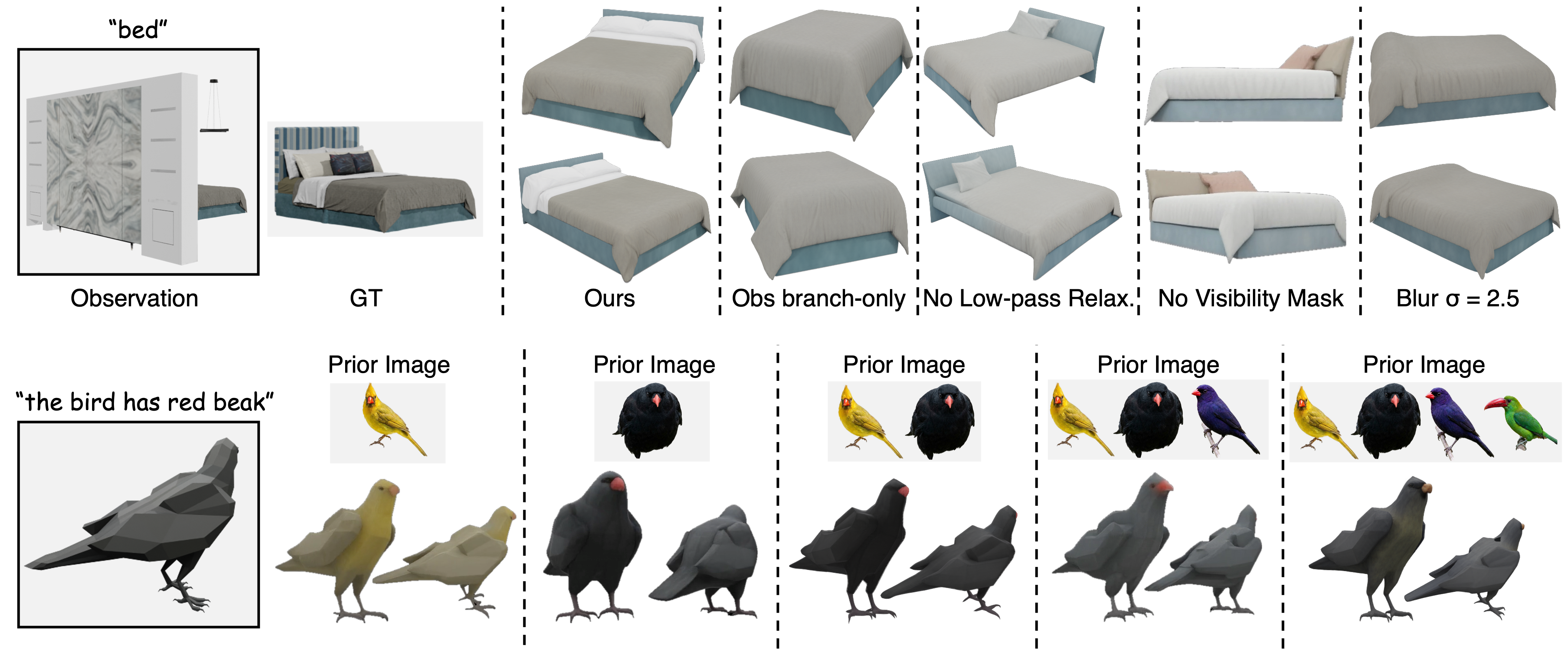}
    \caption{\small \textbf{Visual ablation results.} Top: effect of key components. Bottom: effect of varying the number of prior images $N$.}
    \label{fig:ablation_visual}
\end{figure*}

\subsection{Ablation Study}
\label{subsec:exp_ablation} 
Table~\ref{fig:ablation:a} and Figure~\ref{fig:ablation_visual} report ablations on \extremeocc (SAM3D backbone).
Removing \lprelax degrades Point-FID (81.1$\to$87.1), confirming its role in stabilizing semantic guidance.
Disabling the visibility mask causes a larger drop (81.1$\to$92.3), showing that spatially isolating occluded regions is critical.
Hyperparameter sensitivity: aggressive cutoff ($\rho$) or strong relaxation ($\sigma$) hurts performance, balancing semantic signal preservation with high-frequency suppression.

\noindent\textbf{Number of priors.}
Figure~\ref{fig:ablation_priors_metrics} shows that moderate $N$ improves consensus-based disambiguation, but excessive priors introduce conflicting details. Generated priors remain competitive (Table in Figure~\ref{fig:ablation:a}).

\noindent\textbf{Efficiency.}
Appendix~\ref{app:baseline-details} reports the runtime and memory overhead. In brief, the extra branch evaluations increase runtime, while peak memory grows modestly; the lightweight \lprelax itself is implemented with efficient 1D convolutions.

\begin{figure}[t]
    \centering
    \begin{subfigure}[t]{0.48\linewidth}
        \vspace{0pt} 
        \centering
        \scriptsize
        \begin{tabularx}{\linewidth}{@{}l|*{1}{>{\centering\arraybackslash}X}}
        \toprule
        Setting & {Point-FID$\downarrow$} \\
        \midrule
        w/o LP Relax. &     87.1 \\
        w/o Visib. Mask  &   92.3 \\
        \midrule
        cutoff $\rho=0.4$  & 86.5 \\
        cutoff $\rho=1.0$  & 89.9 \\
        LP Relax. $\sigma=2.5$  & 95.2  \\
        Prior from generation & 82.7 \\
        \midrule
        \textbf{Ours}  & \textbf{81.1} \\
        \bottomrule
        \end{tabularx}
        \subcaption{}\label{fig:ablation:a}
    \end{subfigure}
    \hfill
    \begin{subfigure}[t]{0.5\linewidth}
    \vspace{0pt} 
        \centering
        \includegraphics[width=0.97\linewidth]{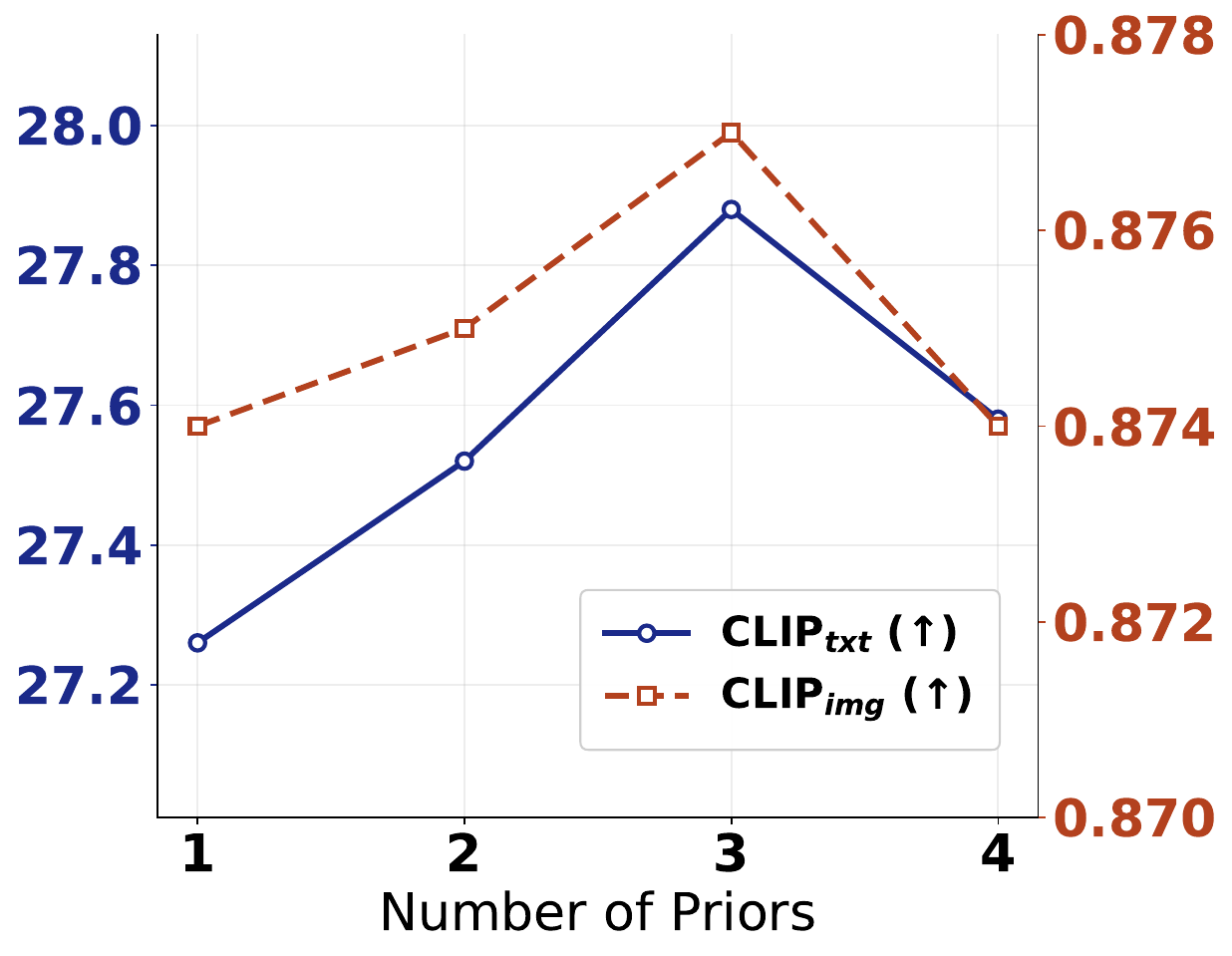}
        \vspace{-3px}
        \subcaption{}%
        \label{fig:ablation_priors_metrics}
    \end{subfigure}
    \caption{\small \textbf{Ablation studies} on \extremeocc with SAM3D backbone. (a) Component ablation and hyperparameter sensitivity. Removing \lprelax or visibility mask degrades performance; extreme $\rho$ or $\sigma$ values also hurt. ``Prior from generation'' uses Z-Image-generated priors instead of retrieved ones. (b) Effect of prior count $N$: moderate $N$ improves consensus, but too many priors introduce conflicts.}
    \label{fig:ablation}
\end{figure}

\section{Conclusion}

We introduce \task, where the same visible evidence admits multiple amodal 3D completions and users are allowed to specify intent with text prompts. To tackle this, we propose \method, a training-free dual-branch inference framework that decouples evidence preservation from semantic disambiguation: the \obsbranch anchors observation-driven high-frequency detail, while we theoretically justify the use of \lprelax in the \sembranch to distill stable, low-frequency semantic guidance from intent-conditioned priors. \methodabbr further strengthens controllability via multi-prior consensus, and a time-varying visibility-aware fusion schedule. To facilitate systematic evaluation, we introduce \extremeocc and \unseen, two diagnostic benchmarks that expose observation-overfitted collapse and evaluate intent-following under fixed evidence. Across state-of-the-art feedforward 3D generators, \methodabbr improves semantic plausibility and controllability without retraining, while preserving observation fidelity.

\section*{Acknowledgements}
We would like to acknowledge that computational work involved in this research is partially supported by NUS IT's Research Computing group using grant number NUSREC-HPC-00001. We thank the reviewers and the area chair for their constructive feedback.

\FloatBarrier

\section*{Impact Statement}
This work contributes to progress in controllable 3D generation by improving how models handle ambiguity and missing visual evidence. Such advances may support research and applications that rely on reliable 3D content, while lowering barriers to experimentation through training-free methods. At the same time, like other generative technologies, it could be misused to create misleading synthetic content or amplify biases present in underlying data and models. We encourage responsible use and deployment consistent with applicable policies and norms.

\bibliography{example_paper}

@article{team2025sam3d,
      title={SAM 3D: 3Dfy Anything in Images}, 
    author={Chen, Xingyu and Chu, Fu-Jen and Gleize, Pierre and Liang, Kevin J and Sax, Alexander and Tang, Hao and Wang, Weiyao and Guo, Michelle and Hardin, Thibaut and Li, Xiang and others},
  journal={arXiv preprint arXiv:2511.16624},
  year={2025}
}

@article{zhao2025hunyuan3d2,
  author       = {Zibo Zhao and
                  Zeqiang Lai and
                  Qingxiang Lin and
                  Yunfei Zhao and
                  Haolin Liu and
                  Shuhui Yang and
                  Yifei Feng and
                  Mingxin Yang and
                  Sheng Zhang and
                  Xianghui Yang and
                  Huiwen Shi and
                  Sicong Liu and
                  Junta Wu and
                  Yihang Lian and
                  Fan Yang and
                  Ruining Tang and
                  Zebin He and
                  Xinzhou Wang and
                  Jian Liu and
                  Xuhui Zuo and
                  Zhuo Chen and
                  Biwen Lei and
                  Haohan Weng and
                  Jing Xu and
                  Yiling Zhu and
                  Xinhai Liu and
                  Lixin Xu and
                  Changrong Hu and
                  Tianyu Huang and
                  Lifu Wang and
                  Jihong Zhang and
                  Meng Chen and
                  Liang Dong and
                  Yiwen Jia and
                  Yulin Cai and
                  Jiaao Yu and
                  Yixuan Tang and
                  Hao Zhang and
                  Zheng Ye and
                  Peng He and
                  Runzhou Wu and
                  Chao Zhang and
                  Yonghao Tan and
                  Jie Xiao and
                  Yangyu Tao and
                  Jianchen Zhu and
                  Jinbao Xue and
                  Kai Liu and
                  Chongqing Zhao and
                  Xinming Wu and
                  Zhichao Hu and
                  Lei Qin and
                  Jianbing Peng and
                  Zhan Li and
                  Minghui Chen and
                  Xipeng Zhang and
                  Lin Niu and
                  Paige Wang and
                  Yingkai Wang and
                  Haozhao Kuang and
                  Zhongyi Fan and
                  Xu Zheng and
                  Weihao Zhuang and
                  YingPing He and
                  Tian Liu and
                  Yong Yang and
                  Di Wang and
                  Yuhong Liu and
                  Jie Jiang and
                  Jingwei Huang and
                  Chunchao Guo},
  title        = {Hunyuan3D 2.0: Scaling diffusion models for high resolution textured 3D assets generation},
  journal      = {arXiv preprint arXiv:2501.12202},
  year         = {2025}
}

@article{hong2024seg,
  title={Smoothed energy guidance: Guiding diffusion models with reduced energy curvature of attention},
  author={Hong, Susung},
  journal={Advances in Neural Information Processing Systems},
  volume={37},
  pages={66743--66772},
  year={2024}
}

@inproceedings{xiang2024trellis,
    title={Structured 3d latents for scalable and versatile 3d generation},
  author={Xiang, Jianfeng and Lv, Zelong and Xu, Sicheng and Deng, Yu and Wang, Ruicheng and Zhang, Bowen and Chen, Dong and Tong, Xin and Yang, Jiaolong},
  booktitle={Proceedings of the Computer Vision and Pattern Recognition Conference},
  pages={21469--21480},
  year={2025}
}

@inproceedings{zhu2026AnchorDS,
	title = {AnchorDS: Anchoring Dynamic Sources for Semantically Consistent Text-to-3D Generation},
	author = {Zhu, Jiayin and Yang, Linlin and Li, Yicong and Yao, Angela},
	year = {2026},
	volume = {40},
	booktitle = {Proceedings of the AAAI Conference on Artificial Intelligence}
}

@article{zhu2024InstructHumans,
    author = {Zhu, Jiayin and Yang, Linlin and Yao, Angela},
    year = {2026},
    title = {InstructHumans: Editing Animated 3D Human Textures with Instructions},
    journal = {IEEE Transactions on Multimedia (TMM)}
}

@inproceedings{nipsHeuselRUNH17fid,
  title={Gans trained by a two time-scale update rule converge to a local nash equilibrium},
  author={Heusel, Martin and Ramsauer, Hubert and Unterthiner, Thomas and Nessler, Bernhard and Hochreiter, Sepp},
  booktitle={Advances in neural information processing systems},
  volume={30},
  year={2017}
}

@inproceedings{icmlRadfordKHRGASAM21clipscore,
  title={Learning transferable visual models from natural language supervision},
  author={Radford, Alec and Kim, Jong Wook and Hallacy, Chris and Ramesh, Aditya and Goh, Gabriel and Agarwal, Sandhini and Sastry, Girish and Askell, Amanda and Mishkin, Pamela and Clark, Jack and others},
  booktitle={International conference on machine learning},
  pages={8748--8763},
  year={2021}
}

@article{alex2022pointe,
  title={Point-e: A system for generating 3d point clouds from complex prompts},
  author={Nichol, Alex and Jun, Heewoo and Dhariwal, Prafulla and Mishkin, Pamela and Chen, Mark},
  journal={arXiv preprint arXiv:2212.08751},
  year={2022}
}

@InProceedings{Wu_2025_ICCV_amodal3r,
    author    = {Wu, Tianhao and Zheng, Chuanxia and Guan, Frank and Vedaldi, Andrea and Cham, Tat-Jen},
    title     = {Amodal3R: Amodal 3D Reconstruction from Occluded 2D Images},
    booktitle = {Proceedings of the IEEE/CVF International Conference on Computer Vision (ICCV)},
    month     = {October},
    year      = {2025},
    pages     = {9181-9193}
}

@inproceedings{fu_2021_iccv_3dfront,
    title={3d-front: 3d furnished rooms with layouts and semantics},
  author={Fu, Huan and Cai, Bowen and Gao, Lin and Zhang, Ling-Xiao and Wang, Jiaming and Li, Cao and Zeng, Qixun and Sun, Chengyue and Jia, Rongfei and Zhao, Binqiang and others},
  booktitle={Proceedings of the IEEE/CVF International Conference on Computer Vision},
  pages={10933--10942},
  year={2021}
}

@article{fu_2021_ijcv_3dfuture,
  title={3d-future: 3d furniture shape with texture},
  author={Fu, Huan and Jia, Rongfei and Gao, Lin and Gong, Mingming and Zhao, Binqiang and Maybank, Steve and Tao, Dacheng},
  journal={International Journal of Computer Vision},
  volume={129},
  number={12},
  pages={3313--3337},
  year={2021},
  publisher={Springer}
}

@article{qu_2025_deocc123,
  title={DeOcc-1-to-3: 3D De-Occlusion from a Single Image via Self-Supervised Multi-View Diffusion},
  author={Qu, Yansong and Dai, Shaohui and Li, Xinyang and Wang, Yuze and Shen, You and Cao, Liujuan and Ji, Rongrong},
  journal={arXiv preprint arXiv:2506.21544},
  year={2025}
}

@inproceedings{ozguroglu_2024_cvpr_pix2gestalt,
  title={pix2gestalt: Amodal segmentation by synthesizing wholes},
  author={Ozguroglu, Ege and Liu, Ruoshi and Sur{\'\i}s, D{\'\i}dac and Chen, Dian and Dave, Achal and Tokmakov, Pavel and Vondrick, Carl},
  booktitle={2024 IEEE/CVF Conference on Computer Vision and Pattern Recognition (CVPR)},
  pages={3931--3940},
  year={2024}
}

@inproceedings{ao_2025_cvpr_openWorldAmodal,
    title={Open-world amodal appearance completion},
  author={Ao, Jiayang and Jiang, Yanbei and Ke, Qiuhong and Ehinger, Krista A},
  booktitle={Proceedings of the Computer Vision and Pattern Recognition Conference},
  pages={6490--6499},
  year={2025}
}

@article{sadat_2025_guidanceInFrequency,
  title={Guidance in the Frequency Domain Enables High-Fidelity Sampling at Low CFG Scales},
  author={Sadat, Seyedmorteza and Vontobel, Tobias and Salehi, Farnood and Weber, Romann M},
  journal={arXiv preprint arXiv:2506.19713},
  year={2025}
}

@article{deitke2023objaverse,
  title={Objaverse-xl: A universe of 10m+ 3d objects},
  author={Deitke, Matt and Liu, Ruoshi and Wallingford, Matthew and Ngo, Huong and Michel, Oscar and Kusupati, Aditya and Fan, Alan and Laforte, Christian and Voleti, Vikram and Gadre, Samir Yitzhak and others},
  journal={Advances in Neural Information Processing Systems},
  volume={36},
  pages={35799--35813},
  year={2023}
}

@article{cai2025zimage,
  title={Z-Image: An Efficient Image Generation Foundation Model with Single-Stream Diffusion Transformer},
  author={Cai, Huanqia and Cao, Sihan and Du, Ruoyi and Gao, Peng and Hoi, Steven and Hou, Zhaohui and Huang, Shijie and Jiang, Dengyang and Jin, Xin and Li, Liangchen and others},
  journal={arXiv preprint arXiv:2511.22699},
  year={2025}
}

@article{ho2022cfg,
      title={Classifier-Free Diffusion Guidance}, 
      author={Jonathan Ho and Tim Salimans},
      year={2022},
        journal={arXiv preprint arXiv:2207.12598},
	publisher = {{arXiv}},
eprinttype = {arxiv},
}

@inproceedings{kulikov2025flowedit,
  title={Flowedit: Inversion-free text-based editing using pre-trained flow models},
  author={Kulikov, Vladimir and Kleiner, Matan and Huberman-Spiegelglas, Inbar and Michaeli, Tomer},
  booktitle={Proceedings of the IEEE/CVF International Conference on Computer Vision},
  pages={19721--19730},
  year={2025}
}

@InProceedings{Chen_2025_CVPR_3DTopiaXL,
    author    = {Chen, Zhaoxi and Tang, Jiaxiang and Dong, Yuhao and Cao, Ziang and Hong, Fangzhou and Lan, Yushi and Wang, Tengfei and Xie, Haozhe and Wu, Tong and Saito, Shunsuke and Pan, Liang and Lin, Dahua and Liu, Ziwei},
    title     = {3dtopia-xl: Scaling high-quality 3d asset generation via primitive diffusion},
    booktitle = {Proceedings of the IEEE/CVF Conference on Computer Vision and Pattern Recognition (CVPR)},
    month     = {June},
    year      = {2025},
    pages     = {26576-26586}
}

@article{xu2024instantmesh,
  title={InstantMesh: Efficient 3D Mesh Generation from a Single Image with Sparse-view Large Reconstruction Models},
  author={Xu, Jiale and Cheng, Weihao and Gao, Yiming and Wang, Xintao and Gao, Shenghua and Shan, Ying},
  journal={arXiv preprint arXiv:2404.07191},
  year={2024}
}

@inproceedings{lan2024ln3diff,
    title={LN3Diff: Scalable Latent Neural Fields Diffusion for Speedy 3D Generation}, 
    author={Lan, Yushi and Hong, Fangzhou and Yang, Shuai and Zhou, Shangchen and Meng, Xuyi and Dai, Bo and Pan, Xingang and Loy, Chen Change},
    year={2024},
    booktitle={ECCV},
}

@article{Liu2023Zero1to3,
  title={Zero-1-to-3: Zero-shot One Image to 3D Object},
  author={Ruoshi Liu and Rundi Wu and Basile Van Hoorick and Pavel Tokmakov and Sergey Zakharov and Carl Vondrick},
  journal={2023 IEEE/CVF International Conference on Computer Vision (ICCV)},
  year={2023},
  pages={9264-9275},
}

@inproceedings{
shi2024mvdream,
title={{MVD}ream: Multi-view Diffusion for 3D Generation},
author={Yichun Shi and Peng Wang and Jianglong Ye and Long Mai and Kejie Li and Xiao Yang},
booktitle={The Twelfth International Conference on Learning Representations},
year={2024}
}

@inproceedings{instructnerf2023,
         author = {Haque, Ayaan and Tancik, Matthew and Efros, Alexei and Holynski, Aleksander and Kanazawa, Angjoo},
         title = {Instruct-NeRF2NeRF: Editing 3D Scenes with Instructions},
         booktitle = {Proc. IEEE/CVF Int. Conf. Comput. Vis.},
         year = {2023},
}

@article{kamata_i323_2023,
	title = {Instruct 3D-to-3D: Text Instruction Guided 3D-to-3D conversion},
        year={2023},
journal={arXiv preprint arXiv:2303.15780},
	publisher = {{arXiv}},
	author = {Kamata, Hiromichi and Sakuma, Yuiko and Hayakawa, Akio and Ishii, Masato and Narihira, Takuya},
}

@inproceedings{liu2022compositionaldiffusion,
  title={Compositional visual generation with composable diffusion models},
  author={Liu, Nan and Li, Shuang and Du, Yilun and Torralba, Antonio and Tenenbaum, Joshua B},
  booktitle={Computer Vision--ECCV 2022: 17th European Conference, Tel Aviv, Israel, October 23--27, 2022, Proceedings, Part XVII},
  pages={423--439},
  year={2022},
}

@inproceedings{miao2025rethinkingsds,
title={Rethinking Score Distilling Sampling for 3D Editing and Generation},
author={Xingyu Miao and Haoran Duan and Yang Long and Jungong Han},
booktitle={Forty-second International Conference on Machine Learning},
year={2025},
}

@article{parelli20253dlatte,
  title={3D-LATTE: Latent Space 3D Editing from Textual Instructions},
  author={Parelli, Maria and Oechsle, Michael and Niemeyer, Michael and Tombari, Federico and Geiger, Andreas},
  journal={arXiv preprint arXiv:2509.00269},
  year={2025}
}

@article{podell2023sdxl,
  title={Sdxl: Improving latent diffusion models for high-resolution image synthesis},
  author={Podell, Dustin and English, Zion and Lacey, Kyle and Blattmann, Andreas and Dockhorn, Tim and M{\"u}ller, Jonas and Penna, Joe and Rombach, Robin},
  journal={arXiv preprint arXiv:2307.01952},
  year={2023}
}

@article{labs2025flux1kontext,
      title={FLUX.1 Kontext: Flow Matching for In-Context Image Generation and Editing in Latent Space},
      author={Black Forest Labs and Stephen Batifol and Andreas Blattmann and Frederic Boesel and Saksham Consul and Cyril Diagne and Tim Dockhorn and Jack English and Zion English and Patrick Esser and Sumith Kulal and Kyle Lacey and Yam Levi and Cheng Li and Dominik Lorenz and Jonas Müller and Dustin Podell and Robin Rombach and Harry Saini and Axel Sauer and Luke Smith},
      year={2025},
journal={arXiv preprint arXiv:2506.15742},
}

@article{wan2025,
  title={Wan: Open and Advanced Large-Scale Video Generative Models},
  author={Team Wan and Wang, Ang and Ai, Baole and Wen, Bin and Mao, Chaojie and Xie, Chen-Wei and Chen, Di and Yu, Feiwu and Zhao, Haiming and Yang, Jianxiao and Zeng, Jianyuan and Wang, Jiayu and Zhang, Jingfeng and Zhou, Jingren and Wang, Jinkai and Chen, Jixuan and Zhu, Kai and Zhao, Kang and Yan, Keyu and Huang, Lianghua and Feng, Mengyang and Zhang, Ningyi and Li, Pandeng and Wu, Pingyu and Chu, Ruihang and Feng, Ruili and Zhang, Shiwei and Sun, Siyang and Fang, Tao and Wang, Tianxing and Gui, Tianyi and Weng, Tingyu and Shen, Tong and Lin, Wei and Wang, Wei and Wang, Wei and Zhou, Wenmeng and Wang, Wente and Shen, Wenting and Yu, Wenyuan and Shi, Xianzhong and Huang, Xiaoming and Xu, Xin and Kou, Yan and Lv, Yangyu and Li, Yifei and Liu, Yijing and Wang, Yiming and Zhang, Yingya and Huang, Yitong and Li, Yong and Wu, You and Liu, Yu and Pan, Yulin and Zheng, Yun and Hong, Yuntao and Shi, Yupeng and Feng, Yutong and Jiang, Zeyinzi and Han, Zhen and Wu, Zhi-Fan and Liu, Ziyu},
  journal={arXiv preprint arXiv:2503.20314},
  year={2025}
}

@inproceedings{Huang_2025_ICCV,
  author={Huang, Zehuan and Guo, Yuan-Chen and Wang, Haoran and Yi, Ran and Ma, Lizhuang and Cao, Yan-Pei and Sheng, Lu},
  title={{MV-Adapter}: Multi-View Consistent Image Generation Made Easy},
  booktitle={Proceedings of the IEEE/CVF International Conference on Computer Vision (ICCV)},
  month={October},
  year={2025},
  pages={16377--16387}
}

@misc{google2025nanobananapro,
  title={Introducing Nano Banana Pro},
  author={Raisinghani, Naina},
  howpublished={Google Blog},
  year={2025},
  url={https://blog.google/technology/ai/nano-banana-pro/}
}

@inproceedings{xiaolu2026interp3d,
    title={Interp3D: Correspondence-aware Interpolation for Generative Textured 3D Morphing},
    author={Xiaolu Liu and Yicong Li and Qiyuan He and Jiayin Zhu and Wei Ji and Angela Yao and Jianke Zhu},
    booktitle={The Fourteenth International Conference on Learning Representations},
    year={2026}
}

@inproceedings{
poole2023dreamfusion,
title={DreamFusion: Text-to-3D using 2D Diffusion},
author={Ben Poole and Ajay Jain and Jonathan T. Barron and Ben Mildenhall},
booktitle={The Eleventh International Conference on Learning Representations },
year={2023},
}

@article{ye_hi3dgen_2025,
  author       = {Chongjie Ye and
                  Yushuang Wu and
                  Ziteng Lu and
                  Jiahao Chang and
                  Xiaoyang Guo and
                  Jiaqing Zhou and
                  Hao Zhao and
                  Xiaoguang Han},
  title        = {Hi3DGen: High-fidelity 3D Geometry Generation from Images via Normal
                  Bridging},
  journal      = {CoRR},
  volume       = {abs/2503.22236},
  year         = {2025}
}

@inproceedings{wu_direct3d_2024,
  author       = {Shuang Wu and
                  Youtian Lin and
                  Yifei Zeng and
                  Feihu Zhang and
                  Jingxi Xu and
                  Philip Torr and
                  Xun Cao and
                  Yao Yao},
  title        = {Direct3D: Scalable image-to-3d generation via 3d latent diffusion
                  Transformer},
  booktitle    = {NeurIPS},
  year         = {2024}
}

@inproceedings{
    huang_dreamtime_2023,
    title={DreamTime: An Improved Optimization Strategy for Diffusion-Guided 3D Generation},
    author={Yukun Huang and Jianan Wang and Yukai Shi and Boshi Tang and Xianbiao Qi and Lei Zhang},
    booktitle={Proc. Int. Conf. Learn. Represent.},
    year={2024}
}

@inproceedings{choi2022perception,
  title={Perception prioritized training of diffusion models},
  author={Choi, Jooyoung and Lee, Jungbeom and Shin, Chaehun and Kim, Sungwon and Kim, Hyunwoo and Yoon, Sungroh},
  booktitle={Proc. IEEE/CVF Conf. Comput. Vis. Pattern Recognit.},
  pages={11472--11481},
  year={2022}
}

@article{heusel2017gans,
  title={Gans trained by a two time-scale update rule converge to a local nash equilibrium},
  author={Heusel, Martin and Ramsauer, Hubert and Unterthiner, Thomas and Nessler, Bernhard and Hochreiter, Sepp},
  journal={Advances in neural information processing systems},
  volume={30},
  year={2017}
}

@article{peyre2019computational,
  title={Computational Optimal Transport},
  author={Gabriel Peyr{\'e} and Marco Cuturi},
  journal={Found. Trends Mach. Learn.},
  year={2018},
  volume={11},
  pages={355-607}
}

@article{He2023T3BenchBC,
  title={T3Bench: Benchmarking Current Progress in Text-to-3D Generation},
  author={Yuze He and Yushi Bai and Matthieu Lin and Wang Zhao and Yubin Hu and Jenny Sheng and Ran Yi and Juanzi Li and Yong-Jin Liu},
  journal={ArXiv},
  year={2023},
  volume={abs/2310.02977},
}

@inproceedings{xiu2025geometric,
  title={Geometric Alignment and Prior Modulation for View-Guided Point Cloud Completion on Unseen Categories},
  author={Xiu, Jingqiao and Li, Yicong and Zhao, Na and Fang, Han and Wang, Xiang and Yao, Angela},
  booktitle={Proceedings of the IEEE/CVF International Conference on Computer Vision},
  pages={27435--27444},
  year={2025}
}

@article{he2025rear,
  title={REAR: Rethinking Visual Autoregressive Models via Generator-Tokenizer Consistency Regularization},
  author={He, Qiyuan and Li, Yicong and Ye, Haotian and Wang, Jinghao and Liao, Xinyao and Heng, Pheng-Ann and Ermon, Stefano and Zou, James and Yao, Angela},
  journal={ICLR 2026},
  year={2025}
}

@inproceedings{li2024laso,
  title={Laso: Language-guided affordance segmentation on 3d object},
  author={Li, Yicong and Zhao, Na and Xiao, Junbin and Feng, Chun and Wang, Xiang and Chua, Tat-seng},
  booktitle={Proceedings of the IEEE/CVF Conference on Computer Vision and Pattern Recognition},
  pages={14251--14260},
  year={2024}
}

@inproceedings{li2025intermediate,
  title={Intermediate Connectors and Geometric Priors for Language-Guided Affordance Segmentation on Unseen Object Categories},
  author={Li, Yicong and Chen, Yiyang and Ma, Zhenyuan and Xiao, Junbin and Wang, Xiang and Yao, Angela},
  booktitle={Proceedings of the IEEE/CVF International Conference on Computer Vision},
  pages={22836--22845},
  year={2025}
}

@article{xiu2025egotwin,
  title={Egotwin: Dreaming body and view in first person},
  author={Xiu, Jingqiao and Hong, Fangzhou and Li, Yicong and Li, Mengze and Wang, Wentao and Han, Sirui and Pan, Liang and Liu, Ziwei},
  journal={ICLR 2026},
  year={2025}
}
\bibliographystyle{icml2026}

\newpage
\appendix
\onecolumn

\section{Theoretical Analysis of Low-Pass Relaxation via Attention-Logit Smoothing}
\label{sec:app:theory}

\subsection{Definitions and Preliminaries}

\begin{definition}[Wasserstein-$2$ distance]\label{def:W2}
    For $\mu, \nu \in \gP(\Omega)$, their Wasserstein $k$-distance is defined by
    \begin{equation*}
        \sfW_2(\mu, \nu) = \inf_{\gamma \in \Gamma(\mu, \nu)}\left(\int_{(x, y) \sim \gamma}\|x - y\|^2\od\gamma(x, y)\right)^{1/2},
    \end{equation*}
    where $\Gamma(\mu, \nu)$ is the set of all couplings of $\mu$ and $\nu$. 
\end{definition}

\subsection{Spectral analysis of semantic guidance}\label{sec:app:spectral}
We posit that global semantic intent is inherently low-frequency, whereas the errors causing artifacts are high-frequency. 
This distinction allows us to use filtering to selectively remove errors without degrading the signal. 
To justify the benefit of Gaussian blur, we characterize the spectral properties of the semantic signal versus the error. 
\begin{assumption}[Spectral concentration of semantic intent]\label[assumption]{assump:sem-tube}
    The ideal semantic transport field $\bv_{\rm sem}(\cdot, t, c_{\rm sem})$ is band-limited. In the Fourier domain, its energy is concentrated in low frequencies:
    \begin{equation}
        \bv_{\rm sem}(\omega, t, c_{\rm sem}) \approx 0 \quad \text{for } |\omega| \ge \eta,
    \end{equation}
    where $\eta > 0$ is a frequency cutoff. 
\end{assumption}
\cref{assump:sem-tube} implies $\bv_{\rm sem}$ captures global structure (low-frequency) rather than high-frequency instance details. 

For notational convenience, we write 
\begin{align}
    \epsilon(x) \coloneqq {} & \bv_{\rm sem}(x) - \bar{\bv}_{\theta}(x), \\
    \gE_{\rm obs}^2(\bv_{\theta}) \coloneqq {} &
    \int_0^t
    \Bigl\|
      \bv_{\rm obs}(X_{\tau}, \tau, c_{\rm obs}) 
      - 
      \bv_{\theta}(X_{\tau}, \tau, c_{\rm obs})
    \Bigr\|^2
    \od\tau, 
    \label{eq:def:error:det} \\
    \gE_{\rm sem}^2(\bv_{\theta}) 
    \coloneqq {} & 
    \int_0^t
    \Bigl\|
      \bv_{\rm sem}(X_{\tau}, \tau, c_{\rm sem}) 
      - 
      \bv_{\theta}(X_{\tau}, \tau, c_{\rm obs})
    \Bigr\|^2
    \od\tau. 
    \label{eq:def:error:sem}
\end{align}

\begin{assumption}[High-frequency mismatch]\label[assumption]{assump:hf-mismatch}
    The error between the learned velocity and the ideal semantic velocity is dominated by high-frequency components.  
    Let $\epsilon'$ be the Fourier transform of $\epsilon$. 
    We assume:
    \begin{equation}
        \int_{|\omega| \leq \eta} |\epsilon'(\omega)|^2 \od\omega \ll \int_{|\omega| > \eta} |\epsilon'(\omega)|^2 \od\omega.
    \end{equation}
\end{assumption}

\begin{proposition}[Error reduction via low-pass filtering]
\label{thrm:low-pass}
    Let $\tilde{\bv}_{\theta} = G_\sigma * \bar{\bv}_{\theta}$ be the velocity field smoothed by a normalized Gaussian kernel $G_\sigma$. Under \cref{assump:sem-tube,assump:hf-mismatch}, for a sufficiently small $\beta$, the semantic estimation error of the blurred field is strictly lower than that of the original field:
    \begin{equation}
        \gE_{\rm sem}(\tilde{\bv}_{\theta}) < \gE_{\rm sem}(\bar{\bv}_{\theta}). 
    \end{equation}
\end{proposition}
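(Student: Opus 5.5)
Work in the Fourier domain, using Plancherel to turn the $L^2$ error into a weighted spectral integral. The statement mentions an unspecified "$\beta$"; I read it as the ratio of low- to high-band error energy in \cref{assump:hf-mismatch}, and assume also that the residual high-frequency energy of $\bv_{\rm sem}$ is at most $\beta$ times the error energy. I also treat the path integral in \cref{eq:esem_def} as equivalent, up to the reference measure along $X_t$, to a spatial $L^2$ norm at each fixed $t$. The filter acts on the spatial variable. So it suffices to prove the pointwise-in-$t$ inequality
\[
\|\bv_{\rm sem} - G_\sigma * \bar{\bv}_{\theta}\|_{L^2}^2 < \|\bv_{\rm sem}-\bar{\bv}_{\theta}\|_{L^2}^2 = \|\epsilon\|_{L^2}^2,
\]
and then integrate over $t$.

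\textbf{Step 1 (decomposition).} Write $\bv_{\rm sem} - G_\sigma*\bar{\bv}_\theta = G_\sigma*\epsilon + (\bv_{\rm sem} - G_\sigma*\bv_{\rm sem})$. The transform of the Gaussian is $\hat G_\sigma(\omega)=e^{-\sigma^2|\omega|^2/2}$, which lies in $(0,1]$ and decreases in $|\omega|$. Plancherel gives
\[
\|\bv_{\rm sem} - \tilde{\bv}_\theta\|^2=\int |\hat G_\sigma\,\epsilon' + (1-\hat G_\sigma)\hat{\bv}_{\rm sem}|^2\,\od\omega .
\]

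\textbf{Step 2 (band split).} Split the integral at $|\omega|=\eta$.

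On the high band, $\hat G_\sigma \le g_\eta := e^{-\sigma^2\eta^2/2}<1$. By \cref{assump:sem-tube}, $\hat{\bv}_{\rm sem}\approx 0$ there, so the contribution is at most roughly $g_\eta^2 H$, where $H=\int_{|\omega|>\eta}|\epsilon'|^2$.

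On the low band, apply Young's inequality. The contribution is at most $2L + 2(1-e^{-\sigma^2\eta^2/2})^2 S$, where $L=\int_{|\omega|\le\eta}|\epsilon'|^2$ and $S=\|\bv_{\rm sem}\|^2$. This term is the signal distortion caused by blurring.

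\textbf{Step 3 (compare).} The original error is $L+H$. It suffices that
\[
2L + 2(1-g_\eta)^2 S + g_\eta^2 H + (\text{tail of }\bv_{\rm sem}) < L+H .
\]
Using $L\le\beta H$ from \cref{assump:hf-mismatch}, the condition becomes $\beta + 2(1-g_\eta)^2 S/H + O(\beta) < 1-g_\eta^2$.

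\textbf{Main obstacle.} The difficulty is the signal-distortion term $2(1-g_\eta)^2S$. It is not controlled by the assumptions as stated, and it grows with $\sigma$, while the high-band gain $1-g_\eta^2$ also grows with $\sigma$. I would first fix $\sigma$ so that $g_\eta\le 1/2$, which leaves a gain of at least $3/4$. I would then note that the distortion is governed by how much of $\bv_{\rm sem}$ sits near $\eta$. One way to handle it is to strengthen \cref{assump:sem-tube} to concentration below $\eta' \ll \eta$, so that $1-\hat G_\sigma \le \sigma^2\eta'^2/2$ on the support of the signal. The cleaner way is to fold this ratio into $\beta$, i.e.\ require $S(1-g_{\eta'})^2\le\beta H$. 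With either choice, "sufficiently small $\beta$" gives a strict inequality for each $t$. Integrating over $t\in[0,1]$ preserves it, and taking square roots gives $\gE_{\rm sem}(\tilde{\bv}_\theta)<\gE_{\rm sem}(\bar{\bv}_\theta)$.

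A sharper alternative avoids Young's factor 2: expand the square exactly, then bound the cross term with Cauchy–Schwarz using $\sqrt{L\cdot S}$.
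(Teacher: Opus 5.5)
This is the paper's argument (Plancherel, a split at $|\omega|=\eta$, the high-band gain $\gamma_\sigma=\sup_{|\omega|>\eta}|G_\sigma'(\omega)|^2<1$, and dominance of the high-band error energy), except that the paper never carries the distortion term $(1-G_\sigma')\bv_{\rm sem}'$: it assumes $G_\sigma'\approx 1$ on $|\omega|\le\eta$ and bounds only $\|G_\sigma*\epsilon\|_{L^2}^2\le\int_{\gD_{\rm low}}|\epsilon'|^2\,\od\omega+\gamma_\sigma\int_{\gD_{\rm high}}|\epsilon'|^2\,\od\omega$. So the extra hypothesis you add, and the tension you identify between that approximation and a gain $\gamma_\sigma$ bounded away from $1$, are exactly what the paper leaves implicit; the same holds for identifying the path norm with a spatial $L^2$ norm, which you should state as an identity rather than an equivalence up to constants, since the latter would not preserve a strict inequality.

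When you write it up, use your exact-expansion variant instead of Young's inequality. The factor $2$ on $L$ is what forces you to fix $\sigma$ with $g_\eta\le 1/2$: for $L>0$ your sufficient condition fails as $\sigma\to 0$. Expanding the square and bounding the low-band cross term by $2(1-g_\eta)\sqrt{LS}$ instead gives the sufficient condition $2\sqrt{LS}+(1-g_\eta)S<(1+g_\eta)H$. This holds for all sufficiently small $\sigma$ as soon as $\sqrt{LS}<H$, i.e.\ $\beta<H/S$, so you no longer need to concentrate the signal below some $\eta'\ll\eta$.
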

\begin{proof}
    Notice that
    \begin{equation*}
        \gE_{\rm sem}^2(\bar{\bv}_{\theta})
        = 
        \int_0^t
          \|\epsilon(x)\|^2
        \od\tau. 
    \end{equation*}
    Let $G_\sigma'(\omega) = \exp(-\frac{1}{2}\sigma^2\|\omega\|^2) \in (0,1], \bv_{\rm sem}'$, and $\bv_{\theta}'$ be the Fourier transform of the Gaussian kernel, $\bv_{\rm sem}$, and $\bv_{\theta}$, respectively. 
    By Parseval's theorem, the squared $L^2$ error of the smoothed field is:
    \begin{equation}
        \|\bv_{\rm sem} - \tilde{\bv}_\theta\|_{L^2}^2 = \|\bv_{\rm sem}' - G_\sigma'\bar{\bv}_\theta'\|_{L^2}^2.
    \end{equation}
    Using the fact that $\bv_{\rm sem}$ is band-limited (\cref{assump:sem-tube}) and that convolution with a normalized Gaussian preserves the low-frequency signal (assuming $G_\sigma'(\omega) \approx 1$ for $\|\omega\| \le \eta$), we focus on the error term $\epsilon$. The smoothing acts on the error as $\tilde{\epsilon} = G_\sigma * \epsilon$.
    
    We decompose the error energy into low-frequency ($\gD_{\rm low} = \{\omega : \|\omega\| \leq \eta\}$) and high-frequency ($\gD_{\rm high} = \{\omega : \|\omega\| > \eta\}$) regions:
    \begin{align*}
        \|\tilde{\epsilon}\|_{L^2}^2 &= \int_{\gD_{low}} |G_\sigma'|^2 \|\epsilon'\|^2 \od\omega + \int_{\gD_{high}} |G_\sigma'|^2 \|\epsilon'\|^2 \od\omega \\
        &\le \int_{\gD_{low}} \|\epsilon'\|^2 \od\omega + \gamma_\sigma \int_{\gD_{high}} \|\epsilon'\|^2 \od\omega,
    \end{align*}
    where $\gamma_\sigma \coloneqq \sup_{\|\omega\| > \eta} |G_\sigma'(\omega)|^2 < 1$ is the maximum gain in the high-frequency band.
    
    Comparing this to the original error $\|\epsilon\|_{L^2}^2 = \int_{\gD_{low}} \|\epsilon'\|^2 + \int_{\gD_{high}} \|\epsilon'\|^2$, the reduction is determined by the high-frequency term. Given \cref{assump:hf-mismatch}, the total error is dominated by the integral over $\gD_{high}$. Since $\gamma_\sigma < 1$, the dampening of the dominant high-frequency error strictly outweighs any potential signal loss in the low frequencies (which is negligible or zero under strict band-limiting). 
    Thus, $\|\tilde{\epsilon}\|_{L^2}^2 < \|\epsilon\|_{L^2}^2$, which implies that $\gE_{\rm sem}^2(\tilde{\bv}_{\theta}) < \gE_{\rm sem}^2(\bar{\bv}_{\theta})$.
\end{proof}

\textit{Remark:} 
In the Fourier domain, the Gaussian kernel $G_\sigma'(\omega) = e^{-\sigma^2\omega^2/2}$ acts as a low-pass filter. 
Since the semantic signal is low-frequency (\cref{assump:sem-tube}) and the error is high-frequency (\cref{assump:hf-mismatch}), filtering reduces the error term significantly while preserving the signal.

\subsection{Stability and Wasserstein bounds}\label{sec:app:stability-W2}
In this section, we translate the error reduction derived above into a guarantee on generation quality. 
We prove that by reducing high-frequency semantic noise, our method produces a distribution that is mathematically closer to the ground truth.
We define the accumulated semantic error along the trajectory as $\gE_{\rm sem}^2(\bv) = \int_0^1 \|\bv_{\rm sem}(X_\tau) - \bv(X_\tau)\|^2\od\tau$. 
We rely on the Lipschitz continuity of the neural estimator (\cref{assump:Lip:nn} in Appendix). 
\begin{assumption}[Lipschitz neural network estimator]\label[assumption]{assump:Lip:nn}
    The neural network estimator $\bar{\bv}_{\theta}(x, t, c)$ is $L_t^{\rm nn}$-Lipschitz with respect to state $x$ and condition $c$:
    \begin{align*}
    \|\bar{\bv}_{\theta}(x, t, c) - \bar{\bv}_{\theta}(x', t, c)\| &\leq L_t^{\rm nn}\|x - x'\|, \\
    \|\bar{\bv}_{\theta}(x, t, c) - \bar{\bv}_{\theta}(x, t, c')\| &\leq L_t^{\rm nn}\|c - c'\|.
    \end{align*}
\end{assumption}

\begin{proposition}[Stability of Lipschitz Constant under Gaussian Smoothing]\label{thrm:Gau:Lip}
    Let $v$ be an $L$-Lipschitz function, and let $G_\sigma$ be a normalized Gaussian kernel (i.e., $\int G_\sigma(y)\ody = 1$). 
    The convolved function $\tilde{v} \coloneqq G_\sigma * v$ is also Lipschitz, satisfying:
    \begin{equation*}
        \|\tilde{v}\|_{\mathrm{Lip}} \leq \|v\|_{\mathrm{Lip}} = L.
    \end{equation*}
\end{proposition}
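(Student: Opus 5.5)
The plan is a direct pointwise estimate: write the convolution out explicitly, take the difference at two points, and push the norm inside the integral. For any $x, x'$ we have
\begin{equation*}
\tilde v(x) - \tilde v(x') = \int G_\sigma(y)\,\bigl(v(x-y) - v(x'-y)\bigr)\,\ody ,
\end{equation*}
using the convolution in the form $(G_\sigma * v)(x) = \int G_\sigma(y)\, v(x-y)\,\ody$. This form places the shift on $v$ and leaves the kernel weight independent of the evaluation point. The same $y$ appears in both terms, so the difference reduces to comparing $v$ at two points separated by exactly $x - x'$.

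Next, apply the triangle inequality for integrals (Minkowski's integral inequality for vector-valued $v$). This is legitimate because $G_\sigma \ge 0$, so $\bigl\|\int G_\sigma(y) f(y)\,\ody\bigr\| \le \int G_\sigma(y)\,\|f(y)\|\,\ody$. Using the $L$-Lipschitz property of $v$ pointwise in $y$ gives $\|v(x-y) - v(x'-y)\| \le L\|x-x'\|$. Integrating against the normalized kernel $\int G_\sigma = 1$ then yields $\|\tilde v(x) - \tilde v(x')\| \le L\|x - x'\|$. Taking the supremum over $x \ne x'$ gives $\|\tilde v\|_{\mathrm{Lip}} \le L$.

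The only step needing care is well-definedness: the convolution integral must converge and exchanging norm and integral must be valid. A Lipschitz $v$ grows at most linearly, $\|v(x-y)\| \le \|v(0)\| + L\|x\| + L\|y\|$, and the Gaussian has finite first moment, so the integral converges absolutely. The same bound justifies the Minkowski step.

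If $\|v\|_{\mathrm{Lip}}$ is strictly smaller than the nominal $L$, the argument applied with $\|v\|_{\mathrm{Lip}}$ in place of $L$ gives the sharper inequality $\|\tilde v\|_{\mathrm{Lip}} \le \|v\|_{\mathrm{Lip}}$. The result holds for any nonnegative normalized kernel; the Gaussian form is never used beyond these two properties. This is what lets \cref{assump:Lip:nn} carry over to the relaxed field $\tilde{\bv}_\theta$ with the same constant $L_t^{\rm nn}$, as needed for the Wasserstein bound.
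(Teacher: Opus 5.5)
Your proposal is correct and follows the paper's proof essentially line for line: write $\tilde v(x)-\tilde v(x')$ as a single integral against $G_\sigma(y)$, move the norm inside using $G_\sigma \ge 0$, apply the Lipschitz bound pointwise in $y$, and use $\int G_\sigma = 1$. Your additional remarks are not a different route. These are absolute convergence via the linear growth of $v$ against the Gaussian's finite first moment, and the observation that only nonnegativity and normalization of the kernel are used; both are justifications the paper leaves implicit.
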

\begin{proof}
For any $x, x'$, we have:
\begin{align*}
    \|\tilde{v}(x) - \tilde{v}(x')\| 
    &= \left\| \int G_\sigma(y) (v(x-y) - v(x'-y))\ody \right\| \\
    &\leq \int G_\sigma(y) \|v(x-y) - v(x'-y)\|\ody \\
    &\leq \int G_\sigma(y) L \|(x-y) - (x'-y)\|\ody \\
    &= L \|x - x'\| \underbrace{\int G_\sigma(y)\ody}_{=1} 
    = L \|x - x'\|.
\end{align*}
Thus, $\|\tilde{v}\|_{\mathrm{Lip}} \leq L$.
\end{proof}

\subsubsection{Stability Analysis}\label{sec:app:stability}

Consider the ODE flow:
\begin{align}
    \frac{\odx_t}{\odt} = {} & (1-\alpha_t)\bv_{\rm obs}(x_t, t, c_{\rm obs}) + \alpha_t\bv_{\rm sem}(x_t, t, c_{\rm sem}), 
    \quad
    x_0 \sim \rho \equiv \gN(0, \bI_d), 
    \label{eq:flow:truth} \\
    \frac{\od\bar{x}_t}{\odt} = {} & (1-\alpha_t)\bar{\bv}_{\theta}(\bar{x}_t, t, c_{\rm obs}) + \alpha_t\bar{\bv}_{\theta}(\bar{x}_t, t, c_{\rm obs}), 
    \quad
    \bar{x}_0 \sim \rho \equiv  \gN(0, \bI_d), 
    \label{eq:flow:nn} \\
    \frac{\od\hat{x}_t}{\odt} = {} & (1-\alpha_t)\bar{\bv}_{\theta}(\hat{x}_t, t, c_{\rm obs}) + \alpha_t\tilde{\bv}_{\theta}(\hat{x}_t, t, c_{\rm prior}), 
    \quad
    x_0 \sim \rho \equiv \gN(0, \bI_d).
    \label{eq:flow:Gau-blur}
\end{align}

We introduce the following definitions from optimal transport~\citep{peyre2019computational}, which are used in the proofs for our theoretical analysis: 
\begin{itemize}
    \item \textbf{Integrated ODE flow:} 
    For a velocity field $v$, the state at time $t$ is:
    \begin{equation}\label{eq:integral}
        X_t = X_0 + \int_0^t v(X_\tau, \tau)\od\tau.
    \end{equation}

    \item \textbf{Measure transportation:}
    \begin{equation}
        T(x) \coloneqq x + \int_0^T\frac{\od}{\od\tau}X_\tau\od\tau.
    \end{equation}

\item \textbf{Pushforward measure:}
    \begin{equation}
        p_0 = T_\sharp\rho,
    \end{equation}
    where $p_0$ is the underlying data distribution and $\rho$ is the reference distribution, which is generally the standard Gaussian distribution.

\item \textbf{Pullback measure:}
    \begin{equation*}
        \rho = T^\sharp p_0 = (T^{-1})_\sharp p_0.
    \end{equation*}

\end{itemize}

\begin{lemma}[Stability analysis for ODE flows]\label{thrm:ode:stability}
    Let $X_t, \overline{X}_t, \widehat{X}_t$ be the integrals~\cref{eq:integral} with the ODE flows \cref{eq:flow:truth,eq:flow:nn,eq:flow:Gau-blur}, respectively. 
    Then we have
    \begin{align*}
        \|X_t - \overline{X}_t\|
        \leq {} & 
        \Bigl(
            (1-\alpha_t)
            \gE_{\rm obs}(\bar{\bv}_{\theta}) 
            +
            \alpha_t
            \gE_{\rm sem}(\bar{\bv}_{\theta}) 
            +
            \|c_{\rm sem} - c_{\rm obs}\|
            \int_0^t
              L_{\tau}^{\rm nn}
            \od\tau
        \Bigr) 
        \exp\Bigl(
          \int_0^t
            L_{\tau}^{\rm nn}
          \od\tau
        \Bigr), 
        \\
        \|X_t - \widehat{X}_t\|
        \leq {} & 
        \Bigl(
            (1-\alpha_t)
            \gE_{\rm obs}(\bar{\bv}_{\theta}) 
            +
            \alpha_t
            \gE_{\rm sem}(\widetilde{\bv}_{\theta}) 
            +
            \|c_{\rm sem} - c_{\rm prior}\|
            \int_0^t
              \widetilde{L}_{\tau}^{\rm nn}
            \od\tau
        \Bigr)
        \exp\Bigl(
          \int_0^t
            \bigl(
            (1-\alpha_t)L_{\tau}^{\rm nn}
            +
            \alpha_t\widetilde{L}_{\tau}^{\rm nn}
            \bigr) 
          \od\tau
        \Bigr). 
    \end{align*}
\end{lemma}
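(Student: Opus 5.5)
The plan is a standard Grönwall argument on the difference of the integrated flows \cref{eq:integral}, carried out once for the pair $(X_t,\overline{X}_t)$ and once for $(X_t,\widehat{X}_t)$. Both flows start from the same $x_0\sim\rho$, since we couple the trajectories through a common initial point. Subtracting the integral equations for \cref{eq:flow:truth} and \cref{eq:flow:nn} gives
\begin{equation*}
X_t-\overline{X}_t=\int_0^t\Bigl[(1-\alpha_\tau)\bigl(\bv_{\rm obs}(X_\tau,\tau,c_{\rm obs})-\bar{\bv}_\theta(\overline{X}_\tau,\tau,c_{\rm obs})\bigr)+\alpha_\tau\bigl(\bv_{\rm sem}(X_\tau,\tau,c_{\rm sem})-\bar{\bv}_\theta(\overline{X}_\tau,\tau,c_{\rm obs})\bigr)\Bigr]\od\tau .
\end{equation*}
In each bracket I add and subtract the network evaluated on the \emph{true} trajectory $X_\tau$. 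In the semantic bracket I first add and subtract $\bar{\bv}_\theta(X_\tau,\tau,c_{\rm sem})$ and then $\bar{\bv}_\theta(X_\tau,\tau,c_{\rm obs})$. This splits the integrand into three kinds of terms:
\begin{itemize}
\item pure estimation errors along $X_\tau$, which are exactly the integrands of \cref{eq:def:error:det,eq:def:error:sem};
\item a condition-mismatch term, bounded by $L_\tau^{\rm nn}\|c_{\rm sem}-c_{\rm obs}\|$ via the second inequality of \cref{assump:Lip:nn};
\item a state-mismatch term, bounded by $L_\tau^{\rm nn}\|X_\tau-\overline{X}_\tau\|$ via the first inequality.
\end{itemize}

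The estimation-error integrals are converted to $\gE_{\rm obs}$ and $\gE_{\rm sem}$ by Cauchy--Schwarz. Since $t\le 1$, we have $\int_0^t\|f\|\od\tau\le\sqrt{t}\,(\int_0^t\|f\|^2\od\tau)^{1/2}\le(\int_0^t\|f\|^2)^{1/2}$. For the weights I will treat the gate as frozen at $\alpha_t$, or equivalently bound $\alpha_\tau$ and $1-\alpha_\tau$ by their values at $t$, which is legitimate for the monotone schedule \cref{eq:alpha-schedule} in the appropriate direction. This yields
\begin{equation*}
u(t)\le A(t)+\int_0^t L_\tau^{\rm nn}\,u(\tau)\,\od\tau,\qquad u(t):=\|X_t-\overline{X}_t\|,
\end{equation*}
where $A(t)$ is the first bracket in the claimed bound. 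Because $A$ is nondecreasing in $t$ (the error integrals grow with $t$), the integral form of Grönwall's inequality gives $u(t)\le A(t)\exp(\int_0^tL_\tau^{\rm nn}\od\tau)$, which is the first inequality. I note that \cref{eq:def:error:sem} already compares against $\bar{\bv}_\theta(\cdot,c_{\rm obs})$, so the explicit condition term is slack rather than necessary. Keeping it only weakens the bound, so it is harmless either way.

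For the second inequality I repeat the argument with \cref{eq:flow:Gau-blur}. The observation part is unchanged and contributes $(1-\alpha_\tau)L_\tau^{\rm nn}\|X_\tau-\widehat{X}_\tau\|$. The semantic part uses $\tilde{\bv}_\theta$ with condition $c_{\rm prior}$, which gives three pieces: the error $\gE_{\rm sem}(\tilde{\bv}_\theta)$, a condition term with $\|c_{\rm sem}-c_{\rm prior}\|$, and a state term with Lipschitz constant $\widetilde{L}_\tau^{\rm nn}$. The relaxed estimator inherits Lipschitz continuity in both $x$ and $c$ from \cref{assump:Lip:nn} by \cref{thrm:Gau:Lip}, applied for each fixed argument since the convolution is an average of Lipschitz maps. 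Hence $\widetilde{L}_\tau^{\rm nn}\le L_\tau^{\rm nn}$ and the constants are well defined. Grönwall now runs with the combined rate $(1-\alpha_\tau)L_\tau^{\rm nn}+\alpha_\tau\widetilde{L}_\tau^{\rm nn}$, which produces the stated exponent.

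The main obstacle is bookkeeping the time-dependent gate $\alpha_\tau$ so that it ends up outside the integrals as $\alpha_t$ in the final statement. My first attempt is to assume $\alpha$ is constant on $[0,t]$, or to bound both $\alpha_\tau$ and $1-\alpha_\tau$ by $1$ inside the integrals and absorb the difference. If that proves too lossy, I will state the bound with $\sup_{\tau\le t}\alpha_\tau$ and note that it coincides with the claim for the frozen-gate case. A secondary subtlety is making \cref{assump:Lip:nn} apply to the smoothed field uniformly in the condition argument. I would handle this by noting that $G_\sigma$ acts only on the spatial argument, so Lipschitzness in $c$ passes through the convolution by the same triangle-inequality computation as in the proof of \cref{thrm:Gau:Lip}.
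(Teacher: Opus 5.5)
Your argument is the paper's proof. It uses the same add-and-subtract of the network along the true trajectory $X_\tau$, Lipschitz bounds in state and condition, Cauchy--Schwarz with $t\le 1$, and Gr\"onwall. Note that the intermediate $\bar{\bv}_\theta(X_\tau,\tau,c_{\rm sem})$ is what produces the $\|c_{\rm sem}-c_{\rm obs}\|$ and $\|c_{\rm sem}-c_{\rm prior}\|$ terms, so in both parts $\gE_{\rm sem}$ should be read with the network evaluated at $c_{\rm sem}$, not with the literal $c_{\rm obs}$ of \cref{eq:def:error:sem}.

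The one caveat is the gate, where you should commit to the frozen-gate reading. The paper also assumes this tacitly when it pulls $(1-\alpha_t)$ and $\alpha_t$ out of the integrals. Your monotone-schedule alternative fails because the schedule in \cref{eq:alpha-schedule} is decreasing, so $\alpha_\tau\ge\alpha_t$ for $\tau\le t$. In fact the bound with final-time $\alpha_t$ is false for a time-varying gate: past the cutoff $\alpha_t=0$, and the bound would drop the semantic error that drove the early trajectory.
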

\begin{proof}

By the Lipschitz assumptions for $\bv_{\rm obs}$ and $\bar{\bv}_{\theta}$, we have
\begin{align}
    \|\bv_{\rm obs}(X_t, t, c_{\rm obs}) - \bv_{\rm obs}(\overline{X}_t, t, c_{\rm obs})\| 
    \leq {} & 
    L_t^{\rm obs}
    \|X_t - \overline{X}_t\|, 
    \\
    \|\bar{\bv}_{\theta}(X_t, t, c_{\rm sem}) - \bar{\bv}_{\theta}(\overline{X}_t, t, c_{\rm sem})\| 
    \leq {} & 
    L_t^{\rm nn}
    \|X_t - \overline{X}_t\|, 
\end{align}

\textbf{(i) bounding $\|X_t - \overline{X}_t\|$}

By~\cref{eq:integral}, we have
\begin{align*}
    & \|X_t - \overline{X}_t\| \\
    = {} & 
    \Bigl\|
      x + \int_0^t\frac{\od}{\od\tau}X_{\tau}\od\tau 
      - 
      x - \int_0^t\frac{\od}{\od\tau}\overline{X}_{\tau}\od\tau
    \Bigr\| 
    \\
    \leq {} & 
    (1-\alpha_t) 
    \Bigl\|
      \int_0^t
      \left( 
        \bv_{\rm obs}(X_{\tau}, \tau, c_{\rm obs})
        -
        \bar{\bv}_{\theta}(\overline{X}_{\tau}, \tau, c_{\rm obs})
      \right)
      \od\tau
    \Bigr\| 
    +
    \alpha_t 
    \Bigl\|
      \int_0^t
      \left( 
        \bv_{\rm sem}(X_t, t, c_{\rm sem})
        -
        \bar{\bv}_{\theta}(\overline{X}_{\tau}, \tau, c_{\rm obs})
      \right)
      \od\tau
    \Bigr\| 
    \\
    \leq {} & 
    (1-\alpha_t)
    \int_0^t
      \Bigl\|
        \bv_{\rm obs}(X_{\tau}, \tau, c_{\rm obs})
        -
        \bar{\bv}_{\theta}(\overline{X}_{\tau}, \tau, c_{\rm obs})
      \Bigr\|
    \od\tau 
    + 
    \alpha_t
    \int_0^t
      \Bigl\|
        \bv_{\rm sem}(X_t, t, c_{\rm sem})
        -
        \bar{\bv}_{\theta}(\overline{X}_{\tau}, \tau, c_{\rm obs})
      \Bigr\|
    \od\tau. 
\end{align*}

For the first term:
\begin{align}
     {} &
     \int_0^t
      \Bigl\|
        \bv_{\rm obs}(X_{\tau}, \tau, c_{\rm obs})
        -
        \bar{\bv}_{\theta}(\overline{X}_{\tau}, \tau, c_{\rm obs})
      \Bigr\|
    \od\tau 
    \notag \\
    \leq {} &
    \int_0^t
      \Bigl\|
        \bv_{\rm obs}(X_{\tau}, \tau, c_{\rm obs})
        -
        \bar{\bv}_{\theta}(X_{\tau}, \tau, c_{\rm obs})
      \Bigr\|
    \od\tau 
    + 
     \int_0^t
      \Bigl\|
        \bar{\bv}_{\theta}(X_{\tau}, \tau, c_{\rm obs})
        -
        \bar{\bv}_{\theta}(\overline{X}_{\tau}, \tau, c_{\rm obs})
      \Bigr\|
    \od\tau 
    \notag \\
    \leq {} &
    \Bigl( 
    \int_0^t
      \Bigl\|
        \bv_{\rm obs}(X_{\tau}, \tau, c_{\rm obs})
        -
        \bar{\bv}_{\theta}(X_{\tau}, \tau, c_{\rm obs})
      \Bigr\|^2
    \od\tau 
    \Bigr)^{1/2}
    +
    \int_0^t
      L_{\tau}^{\rm nn}
      \|X_{\tau} - \overline{X}_{\tau}\|
    \od\tau
    \tag{by Cauchy-Schwarz inequality and Lipschitz of $\bar{\bv}_{\theta}$} \\
    = {} &
    \gE_{\rm obs}(\bar{\bv}_{\theta}) 
    +
    \int_0^t
      L_{\tau}^{\rm nn}
      \|X_{\tau} - \overline{X}_{\tau}\|
    \od\tau. 
    \tag{by the notation defined in \cref{eq:def:error:det}}
\end{align}

Similarly, for the second term, we have
\begin{align}
    {} &
    \int_0^t
      \Bigl\|
        \bv_{\rm sem}(X_t, t, c_{\rm sem})
        -
        \bar{\bv}_{\theta}(\overline{X}_{\tau}, \tau, c_{\rm obs})
      \Bigr\|
    \od\tau
    \notag \\
    \leq {} &
    \int_0^t
      \Bigl\|
        \bv_{\rm sem}(X_{\tau}, \tau, c_{\rm sem})
        -
        \bar{\bv}_{\theta}(X_{\tau}, \tau, c_{\rm sem})
      \Bigr\|
    \od\tau 
    + 
     \int_0^t
      \Bigl\|
        \bar{\bv}_{\theta}(X_{\tau}, \tau, c_{\rm sem})
        -
        \bar{\bv}_{\theta}(\overline{X}_{\tau}, \tau, c_{\rm obs})
      \Bigr\|
    \od\tau 
    \notag \\
    \leq {} &
    \Bigl( 
    \int_0^t
      \Bigl\|
        \bv_{\rm sem}(X_{\tau}, \tau, c_{\rm obs})
        -
        \bar{\bv}_{\theta}(X_{\tau}, \tau, c_{\rm obs})
      \Bigr\|^2
    \od\tau 
    \Bigr)^{1/2}
    +
    \|c_{\rm sem} - c_{\rm obs}\|
    \int_0^t
      L_{\tau}^{\rm nn}
    \od\tau
    +
    \int_0^t
      L_{\tau}^{\rm nn}
      \|X_{\tau} - \overline{X}_{\tau}\|
    \od\tau
    \tag{by Cauchy-Schwarz inequality and Lipschitz of $\bar{\bv}_{\theta}$} \\
    = {} &
    \gE_{\rm sem}(\bar{\bv}_{\theta}) 
    +
    \|c_{\rm sem} - c_{\rm obs}\|
    \int_0^t
      L_{\tau}^{\rm nn}
    \od\tau
    +
    \int_0^t
      L_{\tau}^{\rm nn}
      \|X_{\tau} - \overline{X}_{\tau}\|
    \od\tau. 
    \tag{by the notation defined in \cref{eq:def:error:sem}}
\end{align}

Combining the above two inequalities, we obtain
\begin{align*}
    \Bigl\|X_t - \overline{X}_t\Bigr\|
    \leq 
    (1-\alpha_t)
    \gE_{\rm obs}(\bar{\bv}_{\theta}) 
    +
    \alpha_t
    \gE_{\rm sem}(\bar{\bv}_{\theta}) 
    +
    \|c_{\rm sem} - c_{\rm obs}\|
    \int_0^t
      L_{\tau}^{\rm nn}
    \od\tau
    +
    \int_0^t
      L_{\tau}^{\rm nn}
      \|X_{\tau} - \overline{X}_{\tau}\|
    \od\tau. 
\end{align*}

Using Gr\"onwall's inequality (see~\cref{lem:Gronwall-ineq}), we get
\begin{align*}
    \|X_t - \overline{X}_t\|
    \leq 
    \Bigl(
        (1-\alpha_t)
        \gE_{\rm obs}(\bar{\bv}_{\theta}) 
        +
        \alpha_t
        \gE_{\rm sem}(\bar{\bv}_{\theta}) 
        +
        \|c_{\rm sem} - c_{\rm obs}\|
        \int_0^t
          L_{\tau}^{\rm nn}
        \od\tau
    \Bigr) 
    \exp\Bigl(
      \int_0^t
        L_{\tau}^{\rm nn}
      \od\tau
    \Bigr). 
\end{align*}

\textbf{(ii) bounding $\|X_t - \widehat{X}_t\|$}

Following the same derivations in (i), we have
\begin{align*}
    \|X_t - \widehat{X}_t\|
    \leq {} & 
    (1-\alpha_t)
    \int_0^t
      \Bigl\|
        \bv_{\rm obs}(X_{\tau}, \tau, c_{\rm obs})
        -
        \bar{\bv}_{\theta}(\widehat{X}_{\tau}, \tau, c_{\rm obs})
      \Bigr\|
    \od\tau 
    + 
    \alpha_t
    \int_0^t
      \Bigl\|
        \bv_{\rm sem}(X_t, t, c_{\rm sem})
        -
        \tilde{\bv}_{\theta}(\widehat{X}_{\tau}, \tau, c_{\rm obs})
      \Bigr\|
    \od\tau. 
\end{align*}

The first term can be upper-bounded as in (i). 
For the second term, following a similar derivation as in (i) for the second term, we obtain
\begin{align*}
    {} & 
    \int_0^t
      \Bigl\|
        \bv_{\rm sem}(X_t, t, c_{\rm sem})
        -
        \tilde{\bv}_{\theta}(\widehat{X}_{\tau}, \tau, c_{\rm obs})
      \Bigr\|
    \od\tau
    \\
    \leq {} &
    \gE_{\rm sem}(\tilde{\bv}_{\theta}) 
    +
    \|c_{\rm sem} - c_{\rm prior}\|
    \int_0^t
        \widetilde{L}_{\tau}^{\rm nn}
    \od\tau
    +
    \int_0^t
      \widetilde{L}_{\tau}^{\rm nn}
      \|X_{\tau} - \overline{X}_{\tau}\|
    \od\tau. 
\end{align*}

Combining these two inequalities, we obtain
\begin{equation*}
    \|X_t - \widehat{X}_t\|
    \leq 
    (1-\alpha_t)
    \gE_{\rm obs}(\bar{\bv}_{\theta}) 
    +
    \alpha_t
    \gE_{\rm sem}(\widetilde{\bv}_{\theta}) 
    +
    \|c_{\rm sem} - c_{\rm prior}\|
    \int_0^t
      \widetilde{L}_{\tau}^{\rm nn}
    \od\tau
    +
    \int_0^t
      \Bigl(
        (1-\alpha_t)L_{\tau}^{\rm nn}
        +
        \alpha_t\widetilde{L}_{\tau}^{\rm nn}
      \Bigr) 
      \|X_{\tau} - \overline{X}_{\tau}\|
    \od\tau. 
\end{equation*}

Again, applying Gr\"onwall's inequality, we get
\begin{align*}
    \|X_t - \widehat{X}_t\|
    \leq 
    \Bigl(
        (1-\alpha_t)
        \gE_{\rm obs}(\bar{\bv}_{\theta}) 
        +
        \alpha_t
        \gE_{\rm sem}(\widetilde{\bv}_{\theta}) 
        +
        \|c_{\rm sem} - c_{\rm prior}\|
        \int_0^t
          \widetilde{L}_{\tau}^{\rm nn}
        \od\tau
    \Bigr)
    \exp\Bigl(
      \int_0^t
        \bigl(
        (1-\alpha_t)L_{\tau}^{\rm nn}
        +
        \alpha_t\widetilde{L}_{\tau}^{\rm nn}
        \bigr) 
      \od\tau
    \Bigr). 
\end{align*}

\end{proof}

\subsubsection{Wasserstein Bounds}\label{sec:app:W2-bound}

\begin{theorem}\label{thrm:Wasserstein-bound}
    Let $T, \overline{T}, \widehat{T}$ be the transport maps induced by the ground truth flow~\cref{eq:flow:truth}, the neural network flow~\cref{eq:flow:nn}, and the Gaussian blur flow~\cref{eq:flow:Gau-blur}, respectively. Let $\rho = \gN(0, \bI_d)$.
    Let $\rho = \gN(0, \bI_d)$.
    Then the Wasserstein-2 distance between the generated distributions is bounded by:
    \begin{align*}
        \sfW_2(T_\sharp\rho, \overline{T}_\sharp\rho) 
        \leq {} & 
        \Bigl(
            (1-\alpha_{\max}) \gE_{\rm obs}(\bar{\bv}_{\theta}) 
            + \alpha_{\max} \gE_{\rm sem}(\bar{\bv}_{\theta}) 
            + C_{\rm obs}
        \Bigr)
        \exp\Bigl( \int_0^t L_{\tau}^{\rm nn} \od\tau \Bigr),
        \\
        \sfW_2(T_\sharp\rho, \widehat{T}_\sharp\rho) 
        \leq {} & 
        \Bigl(
            (1-\alpha_{\max})
            \gE_{\rm obs}(\bar{\bv}_{\theta}) 
            +
            \alpha_{\max} \gE_{\rm sem}(\widetilde{\bv}_{\theta}) 
            +
            C_{\rm prior}
        \Bigr)
        \exp\Bigl(
          \int_0^t
            \bigl(
            (1-\alpha_t)L_{\tau}^{\rm nn}
            +
            \alpha_t\widetilde{L}_{\tau}^{\rm nn}
            \bigr) 
          \od\tau
        \Bigr). 
    \end{align*}
    where $C_{\rm obs} = \|c_{\rm sem} - c_{\rm obs}\| \int_0^t L_{\tau}^{\rm nn} \od\tau$, and $C_{\rm prior} = \|c_{\rm sem} - c_{\rm prior}\| \int_0^t \widetilde{L}_{\tau}^{\rm nn} \od\tau$.
\end{theorem}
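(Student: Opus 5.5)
The plan is the standard synchronous-coupling argument: push the same Gaussian initial noise through the two flows and use the pathwise stability estimate of \cref{thrm:ode:stability}. Fix $x_0\sim\rho$ and let $X_t$, $\overline{X}_t$, $\widehat{X}_t$ be the solutions of \cref{eq:flow:truth,eq:flow:nn,eq:flow:Gau-blur} started from the same $x_0$. Then $T(x_0)$, $\overline{T}(x_0)$, $\widehat{T}(x_0)$ are the terminal states. The map $x_0\mapsto (T(x_0),\overline{T}(x_0))$ pushes $\rho$ forward to a coupling $\gamma\in\Gamma(T_\sharp\rho,\overline{T}_\sharp\rho)$. By \cref{def:W2},
\begin{equation*}
\sfW_2^2(T_\sharp\rho,\overline{T}_\sharp\rho)\le \int \|T(x_0)-\overline{T}(x_0)\|^2\,\od\rho(x_0),
\end{equation*}
and the same holds with $\widehat{T}$ in place of $\overline{T}$. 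So it suffices to bound the pathwise gap uniformly in $x_0$ and then take the square root of the $\rho$-average.

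For the pathwise gap I would invoke \cref{thrm:ode:stability} directly, evaluated at the terminal time. Its two bounds have exactly the shape of the claimed ones, except that they carry the time-dependent weight $\alpha_t$ instead of $\alpha_{\max}$. To pass to $\alpha_{\max}:=\sup_t\alpha_t$, I would go back one step in the lemma's proof, to the inequality before Gr\"onwall. There the two error integrals are weighted by $1-\alpha_\tau$ and $\alpha_\tau$, and I would bound these pointwise by convex weights $(1-\alpha_{\max},\alpha_{\max})$. This is legitimate if the observation error is dominated by the semantic one, or one can simply absorb the discrepancy into the constants. The constant terms $\|c_{\rm sem}-c_{\rm obs}\|\int_0^t L^{\rm nn}_\tau\od\tau$ and $\|c_{\rm sem}-c_{\rm prior}\|\int_0^t \widetilde{L}^{\rm nn}_\tau\od\tau$ are exactly $C_{\rm obs}$ and $C_{\rm prior}$. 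In the blurred flow, \cref{thrm:Gau:Lip} lets one take $\widetilde{L}^{\rm nn}_\tau\le L^{\rm nn}_\tau$, which makes the exponential factor well defined and no larger than in the unblurred case.

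The main obstacle is that the error functionals $\gE_{\rm obs}$ and $\gE_{\rm sem}$ are defined along a reference trajectory $X_\tau$, which itself depends on $x_0$. Hence the right-hand side of the pathwise bound is random. I would handle this in one of two ways. The first is to read $\gE$ as a uniform (worst-case over $x_0$) path error, so the pathwise bound holds for every $x_0$ and integrating against $\rho$ gives the result immediately. Alternatively, one can apply Minkowski's inequality in $L^2(\rho)$ to the sum inside the bracket and reinterpret each $\gE$ as its $L^2(\rho)$-averaged version, $(\E_{x_0}\gE^2)^{1/2}$. I would state the result under the first reading, since it matches the theorem literally, and remark that the second gives the same form with averaged errors. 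Either way, once the square of the deterministic pathwise bound is integrated against $\rho$ (a probability measure) and the square root is taken, the two displayed inequalities follow.
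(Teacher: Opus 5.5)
You follow the paper's route for everything except one step. You couple the flows through the shared initial point $z\sim\rho$, bound $\sfW_2^2$ by $\E_z\|X_t-\overline{X}_t\|^2$, substitute the pathwise stability estimate, and pass the expectation through Minkowski's inequality; your second reading of $\gE$ as an $L^2(\rho)$-averaged error is exactly the paper's step.

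\textbf{The gap.} The step that fails is replacing the weights $(1-\alpha_\tau,\alpha_\tau)$ by $(1-\alpha_{\max},\alpha_{\max})$. On the semantic term, $\alpha_\tau\le\alpha_{\max}$ is fine. On the observation term, however, $1-\alpha_\tau\ge 1-\alpha_{\max}$, so the inequality points the wrong way. The resulting discrepancy $(\alpha_{\max}-\alpha_\tau)\|\bv_{\rm obs}-\bar{\bv}_\theta\|$ also cannot be absorbed into $C_{\rm obs}$ or $C_{\rm prior}$, since those are fixed multiples of $\|c_{\rm sem}-c_{\rm obs}\|$ and $\|c_{\rm sem}-c_{\rm prior}\|$.

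In fact, with $\alpha_{\max}=\sup_\tau\alpha_\tau$ the first inequality is false. Take $\alpha_\tau=1$ on $[0,\tfrac12]$ and $\alpha_\tau=0$ afterwards, $\bv_{\rm obs}\equiv 0$, $\bv_{\rm sem}\equiv\bar{\bv}_\theta\equiv e\neq 0$ constant, and $c_{\rm sem}=c_{\rm obs}$. Then $X_1=X_0+e/2$ and $\overline{X}_1=X_0+e$, so $\sfW_2(T_\sharp\rho,\overline{T}_\sharp\rho)=\|e\|/2$. The claimed right-hand side is $0\cdot\gE_{\rm obs}(\bar{\bv}_\theta)+1\cdot 0+0=0$.

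Your domination condition is therefore a genuine extra hypothesis, not a technicality. For your weight swap to work it must hold pointwise in $\tau$, not merely as $\gE_{\rm obs}\le\gE_{\rm sem}$, and the example violates it. For the second bound it would require the observation error to be dominated by the relaxed semantic error, which is exactly the quantity the method aims to make small.

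\textbf{Comparison with the paper.} The paper's proof does not bridge this step either. It ends with $(1-\alpha_t)\gE_{\rm obs}(\bar{\bv}_\theta)+\alpha_t\gE_{\rm sem}(\bar{\bv}_\theta)+C_{\rm sem}$, inherited from \cref{thrm:ode:stability}. That lemma's proof pulls $\alpha_t$ outside $\int_0^t$, so it is valid only when $\alpha$ is constant in time. Under that constant-$\alpha$ reading, $\alpha_{\max}=\alpha_t$, and your argument is complete and coincides with the paper's.

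For a time-varying schedule, your idea of returning to the pre-Gr\"onwall inequality with $\alpha_\tau$ inside the integrals is the right repair. What it honestly yields, after Cauchy--Schwarz on $[0,t]$ with $t\le 1$, is $\sup_\tau(1-\alpha_\tau)\,\gE_{\rm obs}(\bar{\bv}_\theta)+\alpha_{\max}\,\gE_{\rm sem}+C$, where $C$ is $C_{\rm obs}$ or $C_{\rm prior}$. The exponent is $\int_0^t\bigl((1-\alpha_\tau)L^{\rm nn}_\tau+\alpha_\tau\widetilde{L}^{\rm nn}_\tau\bigr)\od\tau$. That weaker statement, not the one with $1-\alpha_{\max}$, is what you can actually prove.

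Finally, the appeal to \cref{thrm:Gau:Lip} is unnecessary here. It also covers only the abstract model $\tilde{\bv}_\theta=G_\sigma*\bar{\bv}_\theta$, not the logit-blur implementation.
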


\begin{proof}
    By the definition of the Wasserstein distance, we have:
    \begin{align*}
        \sfW_2^2(T_\sharp\rho, \overline{T}_\sharp\rho) 
        = {} & \inf_{\gamma \in \Gamma(T_\sharp\rho, \overline{T}_\sharp\rho)} \int \|x - y\|^2 \od\gamma(x, y).
    \end{align*}
    
    We construct a specific coupling $\gamma^*$ induced by the shared initialization. Let $z \sim \rho$ be the common latent variable. Let $x = T(z) = X_t$ and $y = \overline{T}(z) = \overline{X}_t$. The joint distribution of $(X_t, \overline{X}_t)$ forms a valid coupling in $\Gamma(T_\sharp\rho, \overline{T}_\sharp\rho)$.
    
    Thus,
    \begin{align*}
        \sfW_2^2(T_\sharp\rho, \overline{T}_\sharp\rho) 
        \leq {} & \E_{z \sim \rho} \Bigl[ \|T(z) - \overline{T}(z)\|^2 \Bigr] \\
        = {} & \E_{z \sim \rho} \Bigl[ \|X_t - \overline{X}_t\|^2 \Bigr].
    \end{align*}
    
    Using the Stability Lemma derived in \cref{thrm:ode:stability}, we substitute the bound for $\|X_t - \overline{X}_t\|$. Let $K_t = \exp(\int_0^t L_{\tau}^{\rm nn} \od\tau)$.
    
    \begin{align*}
        \E \Bigl[ \|X_t - \overline{X}_t\|^2 \Bigr]
        \leq {} & K_t^2 \cdot \E \left[ \left( 
            (1-\alpha_t) \int_0^t \|\Delta \bv_{\rm obs}\| \od\tau 
            + \alpha_t \int_0^t \|\Delta \bv_{\rm sem}\| \od\tau 
            + C_{\rm sem}
        \right)^2 \right].
    \end{align*}
    
    Applying the Minkowski inequality (triangle inequality for $L_2$ norm) to the expectation terms yields the final result in terms of $\gE_{\rm obs}$ and $\gE_{\rm sem}$:
    \begin{equation*}
        \left(\E [\|X_t - \overline{X}_t\|^2]\right)^{1/2} 
        \leq 
        K_t \left( 
            (1-\alpha_t) \gE_{\rm obs}(\bar{\bv}_{\theta}) 
            + \alpha_t \gE_{\rm sem}(\bar{\bv}_{\theta}) 
            + C_{\rm sem}
        \right), 
    \end{equation*}
    which gives 
    \begin{align*}
        \sfW_2(T_\sharp\rho, \overline{T}_\sharp\rho) 
        \leq {} &
        \Bigl(
          \E_{z \sim \rho} \Bigl[ \|X_t - \overline{X}_t\|^2 \Bigr]
        \Bigr)^{1/2}
        \\
        \leq {} &
        \left( 
            (1-\alpha_t) \gE_{\rm obs}(\bar{\bv}_{\theta}) 
            + \alpha_t \gE_{\rm sem}(\bar{\bv}_{\theta}) 
            + C_{\rm sem}
        \right)
        \exp(\int_0^t L_{\tau}^{\rm nn} \od\tau). 
    \end{align*}

    Follow the similar proof for $\sfW_2(T_\sharp\rho, \overline{T}_\sharp\rho)$, we can bound $\sfW_2(T_\sharp\rho, \widehat{T}_\sharp\rho)$.  
\end{proof}

\begin{theorem}[Wasserstein Distance Bound]\label{thrm:Wasserstein-bound:main}
    Let $p$ be the distribution generated by the Ground Truth Flow, $\bar{p}$ by the Standard Flow, and $\hat{p}$ by the \methodabbr. 
    The Wasserstein-2 distance to the ground truth is bounded by:
    \begin{align}
        \gW_2(p, \bar{p}) &\leq C \cdot \left(\gE_{\rm obs} + \gE_{\rm sem}(\bar{\bv}_{\theta}) \right), \\
        \gW_2(p, \hat{p}) &\leq C \cdot \left(\gE_{\rm obs} + \gE_{\rm sem}(\tilde{\bv}_{\theta})+ \delta_{\rm prior} \right),
    \end{align}
    where $C$ is a constant depending on the Lipschitz continuity of the network (Grönwall factor), and $\delta_{\rm prior}$ accounts for the mismatch between $c_{\rm sem}$ and $c_{\rm prior}$.
\end{theorem}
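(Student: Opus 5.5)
The plan is to obtain \cref{thrm:Wasserstein-bound:main} as a repackaging of \cref{thrm:Wasserstein-bound}, which in turn rests on \cref{thrm:ode:stability}. Identify $p = T_\sharp\rho$, $\bar p = \overline{T}_\sharp\rho$ and $\hat p = \widehat{T}_\sharp\rho$, where the maps are induced by the flows \cref{eq:flow:truth,eq:flow:nn,eq:flow:Gau-blur} integrated up to the terminal time $t=1$. Then $\gW_2(p,\bar p)$ and $\gW_2(p,\hat p)$ are exactly the left-hand sides in \cref{thrm:Wasserstein-bound}. What remains is to absorb the schedule weights, the condition-mismatch terms and the Grönwall exponentials into a single constant $C$.

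The key steps, in order, are as follows.

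First, bound the schedule weights. Since $\alpha_t\in[0,1]$, we have $(1-\alpha_{\max})\gE_{\rm obs}+\alpha_{\max}\gE_{\rm sem}\le \gE_{\rm obs}+\gE_{\rm sem}$.

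Second, control the exponential factors. For the relaxed flow, \cref{thrm:Gau:Lip} gives $\widetilde L^{\rm nn}_\tau\le L^{\rm nn}_\tau$, so $(1-\alpha_\tau)L^{\rm nn}_\tau+\alpha_\tau\widetilde L^{\rm nn}_\tau\le L^{\rm nn}_\tau$. Both Grönwall factors are therefore dominated by $K:=\exp\bigl(\int_0^1 L^{\rm nn}_\tau\,\od\tau\bigr)$, which is finite assuming $L^{\rm nn}_\cdot$ is integrable.

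Third, treat the condition mismatch. Set $\delta_{\rm prior}:=\|c_{\rm sem}-c_{\rm prior}\|\int_0^1\widetilde L^{\rm nn}_\tau\,\od\tau$, which is the quantity $C_{\rm prior}$ of \cref{thrm:Wasserstein-bound}. With $C:=K$, this yields $\gW_2(p,\hat p)\le C\,(\gE_{\rm obs}+\gE_{\rm sem}(\tilde\bv_\theta)+\delta_{\rm prior})$.

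Fourth, handle the standard flow. Its bound carries the term $C_{\rm obs}=\|c_{\rm sem}-c_{\rm obs}\|\int_0^1L^{\rm nn}_\tau\,\od\tau$, which does not appear in the stated inequality. I would deal with it by noting that in \cref{eq:flow:nn} the semantic branch is evaluated at $c_{\rm obs}$, and by defining $\gE_{\rm sem}(\bar\bv_\theta)$ as in \cref{eq:def:error:sem}, i.e.\ against $\bar\bv_\theta(\cdot,\cdot,c_{\rm obs})$. With that definition, the mismatch is already inside $\gE_{\rm sem}(\bar\bv_\theta)$. Concretely, in the second-term estimate of \cref{thrm:ode:stability}, one can compare directly with $\bar\bv_\theta(X_\tau,\tau,c_{\rm obs})$ rather than detouring through $c_{\rm sem}$, and the extra $C_{\rm obs}$ never arises. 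If one prefers to keep the detour, one can instead enlarge $C$ or state the bound up to this additive constant.

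The main obstacle is the bookkeeping inherited from \cref{thrm:ode:stability}, not any new estimate. The fix is to carry the time-dependent weights $\alpha_\tau$ inside the integrals before applying Grönwall's inequality; this is the step that licenses replacing them by constants. A second point to watch is that Cauchy--Schwarz on $[0,1]$ gives $\int_0^1\|\cdot\|\,\od\tau\le(\int_0^1\|\cdot\|^2\,\od\tau)^{1/2}$ without an extra $\sqrt t$ penalty, because the interval has unit length. Finally, the expectation over $z\sim\rho$ is handled by the synchronous coupling used in \cref{thrm:Wasserstein-bound}. Reading $\gE_{\rm obs}$ and $\gE_{\rm sem}$ as $L^2(\rho)$-averaged path errors lets Minkowski's inequality pass the pointwise trajectory bound through the expectation. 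Once these conventions are fixed, both inequalities follow with the common constant $C=K$.
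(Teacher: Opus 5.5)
Your proposal is correct and follows the paper's route. You couple the flows through the shared initialization $z\sim\rho$, bound $\gW_2$ by $\bigl(\E_{z\sim\rho}\|X_1-\widehat X_1\|^2\bigr)^{1/2}$, insert the Gr\"onwall trajectory bound of \cref{thrm:ode:stability}, and absorb the exponential into $C$. Your extra bookkeeping tightens steps the paper's proof glosses over: keeping $\alpha_\tau$ inside the integrals, dominating the relaxed exponent via \cref{thrm:Gau:Lip}, comparing directly with $\bar\bv_\theta(\cdot,\cdot,c_{\rm obs})$ so $C_{\rm obs}$ never appears, and applying Minkowski to $L^2(\rho)$-averaged errors.

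Note, however, that your two inequalities then measure $\gE_{\rm sem}$ at different conditionings: $c_{\rm obs}$ for $\bar\bv_\theta$, but $c_{\rm sem}$ for $\tilde\bv_\theta$, which is what produces $\delta_{\rm prior}$. This is harmless for the theorem as stated, but must be reconciled before comparing the two bounds through \cref{thrm:low-pass}.
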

\begin{proof}
    Let $\bar{p} = \overline{T}_\sharp \rho, \widehat{p} = \widehat{T}_\sharp \rho$ be the distribution generated by neural network flow~\cref{eq:flow:nn} and the Gaussian blur flow~\cref{eq:flow:Gau-blur}, respectively. 
    The proof follows identically to \cref{thrm:Wasserstein-bound} by choosing the coupling $(X_t, \widehat{X}_t)$ induced by $z \sim \rho$ and applying the second inequality from the Stability Lemma.
\end{proof}
Since $\gE_{\rm sem}(\tilde{\bv}_{\theta}) < \gE_{\rm sem}(\bar{\bv}_{\theta})$ (due to the filtering of high-frequency noise), the Gaussian blur flow $\hat{p}$ achieves a tighter Wasserstein bound to the ground truth distribution $p$ than the standard flow $\bar{p}$, provided the conditioning mismatch $\delta_{\rm prior}$ is controlled.

\begin{lemma}[Gr\"onwall's Inequality]\label{lem:Gronwall-ineq}
    Assume that the continuous functions $u, \kappa: [0, T] \rightarrow [0, \infty)$ and constant $K > 0$ satisfy:
    \begin{equation*}
        u(t) \leq K + \int_0^t \kappa(s)u(s)\mathrm{d}s,
    \end{equation*}
    for all $t \in [0, T]$. Then:
    \begin{equation*}
        u(t) \leq K\exp\left(\int_0^t \kappa(s)\mathrm{d}s\right).
    \end{equation*}
\end{lemma}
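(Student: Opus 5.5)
We would prove the integral form of Gr\"onwall's inequality by the standard auxiliary-function argument: bound the right-hand side, not $u$ itself, and turn the integral inequality into a differential inequality that an integrating factor closes.

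First define $R(t) \coloneqq K + \int_0^t \kappa(s)u(s)\,\mathrm{d}s$ on $[0,T]$. Since $\kappa$ and $u$ are continuous, $R$ is $C^1$, with $R(0)=K$ and $R'(t)=\kappa(t)u(t)$. The hypothesis says exactly $u(t)\le R(t)$. Because $\kappa\ge 0$, we can multiply this by $\kappa(t)$ without flipping it, which gives the differential inequality
\begin{equation*}
R'(t) = \kappa(t)u(t) \le \kappa(t)R(t), \qquad t\in[0,T].
\end{equation*}

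Next introduce the integrating factor $E(t)\coloneqq \exp\bigl(-\int_0^t\kappa(s)\,\mathrm{d}s\bigr)>0$ and set $\Phi(t)\coloneqq E(t)R(t)$. Differentiating gives $\Phi'(t)=E(t)\bigl(R'(t)-\kappa(t)R(t)\bigr)\le 0$, so $\Phi$ is nonincreasing on $[0,T]$. Hence $\Phi(t)\le\Phi(0)=K$, that is,
\begin{equation*}
R(t)\le K\exp\Bigl(\int_0^t\kappa(s)\,\mathrm{d}s\Bigr).
\end{equation*}
Combining this with $u(t)\le R(t)$ gives the claim.

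No step is a real obstacle. The only points to check are that continuity of $\kappa u$ justifies differentiating $R$ by the fundamental theorem of calculus, and that the nonnegativity of $\kappa$ is what lets us pass from $u\le R$ to $\kappa u\le\kappa R$; without $\kappa\ge 0$ the argument fails. The assumption $u\ge 0$ is not needed for this direction, and $K>0$ is used only so that the bound is stated for a positive constant.
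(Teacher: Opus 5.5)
Your proof is correct and complete. Defining $R(t)=K+\int_0^t\kappa u$, using $\kappa\ge 0$ to get $R'\le\kappa R$, and showing that $e^{-\int_0^t\kappa}R(t)$ is nonincreasing is the standard integrating-factor argument. Your remarks that $u\ge 0$ is never used and that positivity of $K$ plays no real role are also right.

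The paper gives no proof of this lemma; it quotes it as a standard tool for the stability lemma, so there is no competing argument to compare against. One point is worth adding. Where the paper applies the lemma, the additive term (e.g.\ $(1-\alpha_t)\mathcal{E}_{\rm obs}+\alpha_t\mathcal{E}_{\rm sem}+\cdots$) depends on $t$ rather than being constant. Your argument covers that case only when the term $K(t)$ is nondecreasing: fix $t_0$, bound $K(t)\le K(t_0)$ for $t\le t_0$, and apply your constant-$K$ version on $[0,t_0]$.
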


\subsection{Mechanism: Logit Smoothing as Vector Field Relaxation}
\label{appen:logit-smooth-as-vector-relax}

Section~\ref{sec:lowpass_theory} analyzes low-pass relaxation abstractly as an operator $\mathcal{R}_\sigma$ acting on the prior-conditioned velocity field (Eq.~\ref{eq:lp_relax_def}).
In implementation, we realize this relaxation by blurring the prior-branch cross-attention logits before the softmax, which we now connect to an induced relaxation on the resulting velocity.

Let $v_\theta(x,t,c)$ denote the rectified-flow velocity produced by the transformer, and let $L(x,t,c)$ denote the corresponding cross-attention logits inside the network.
In the \sembranch, we replace $L$ by $\tilde L = G_\sigma * L$ (Eq.~\ref{eq:attn_blur}) and denote the induced velocity by $\tilde v_\theta(x,t,c_{\rm prior}) := v_\theta(x,t,c_{\rm prior};\tilde L)$.

\begin{assumption}[Logit-to-velocity smoothness]\label{assump:logit-lip}
Assume the velocity is Lipschitz in the logits with constant $C_t$:
\[
\|v_\theta(x,t,c;L_1)-v_\theta(x,t,c;L_2)\| \le C_t \|L_1-L_2\|.
\]
\end{assumption}

Under Assumption~\ref{assump:logit-lip} and the fact that Gaussian blurring contracts high-frequency logit components, we obtain
\[
\|\tilde v_\theta(x,t,c_{\rm prior}) - v_\theta(x,t,c_{\rm prior})\|
\;\le\;
C_t \,\|\tilde L - L\|,
\]
showing that logit smoothing induces a controlled relaxation of the prior-conditioned velocity.

\section{Dataset Curation}\label{app:dataset-curation}

\subsection{\extremeocc Dataset}

\paragraph{Data source.}
We build \extremeocc from 3D-FUTURE~\cite{fu_2021_ijcv_3dfuture} and 3D-FRONT~\cite{fu_2021_iccv_3dfront}, which provide indoor scenes with realistic furniture arrangements and high-quality ground-truth 3D objects. The rendered images capture natural viewpoints where occlusions and ambiguity arise organically.

\paragraph{Occlusion ratio computation.}
Using ground-truth meshes and poses, we reproject each object's amodal silhouette onto the scene image and compare it with the visible mask to compute the occlusion ratio $r$. Empirically, when $r \ge 80\%$, humans struggle to identify the object type from visible evidence alone.

\paragraph{Failure modes under extreme occlusion.}
State-of-the-art occlusion-aware models (\eg, SAM3D~\cite{team2025sam3d}) typically fail under such conditions in two ways: (1) generating incomplete, implausible geometry, or (2) producing a semantically incorrect but visually plausible object (\eg, generating a chair for an occluded bed in a bedroom scene).

\paragraph{Final dataset.}
We filter approximately 2000 candidates with $r \ge 80\%$ and obtain $264$ test cases. Each case is paired with a category-level text prior. %
Ground-truth 3D meshes are provided for evaluation. 
Fig.~\ref{fig:cases-extreme} shows examples of the \extremeocc dataset.

\subsection{\unseen Dataset} 

\paragraph{Motivation.}
In many real-world 3D generation settings—such as single-view reconstruction, novel view synthesis, or partial observation—the input image does not uniquely determine the underlying 3D structure or identity.
Humans naturally rely on language and prior knowledge to resolve such ambiguity. However, existing benchmarks rarely test whether text guidance actually resolves semantic uncertainty, rather than merely refining appearance or style.

\unseen is a diagnostic benchmark designed to evaluate text-guided 3D generation under incomplete or ambiguous visual observation.
Unlike conventional 3D datasets that assume a single correct interpretation for each input, \unseen focuses on semantic branching scenarios, where a single visual input admits multiple plausible but mutually exclusive semantic explanations.

\paragraph{Curation process.}
Three annotators manually reviewed rendered images from a high-quality diverse 3D object dataset, ObjaverseXL~\cite{deitke2023objaverse}, and carefully designed $21$ cases exhibiting one of three ambiguity types:
\begin{itemize}%
    \item \textbf{Masked semantic completion:} key semantic regions are occluded.
    \item \textbf{View-induced identity ambiguity:} limited viewpoints prevent unique identification.
    \item \textbf{Intrinsic shape ambiguity:} geometry admits multiple semantic interpretations.
\end{itemize}

\paragraph{Distinction from training data.}
These cases differ fundamentally from typical training data: we use single rendered images from ambiguous viewpoints or under heavy occlusion, paired with text priors leading to diverse target objects distinct from the original 3D assets. This tests whether models can overcome their learned biases and generate diverse, branch-consistent outputs.

\paragraph{Dataset structure.}
Each case consists of: (1) a single ambiguous input image, and (2) a set of text branches, where each branch represents a distinct but plausible semantic interpretation. All branches are valid given the visual evidence, yet correspond to different 3D structures or categories. 
Fig.~\ref{fig:cases-amb} shows all samples of the \unseen dataset. Qualitative results are shown in Fig.~\ref{fig:qual-all}, Fig.~\ref{fig:appen-more-qual-shield}, Fig.~\ref{fig:appen-more-qual-backboard}, and Fig.~\ref{fig:appen-more-qual-3}.

\begin{figure}
    \centering
    \includegraphics[width=1\linewidth]{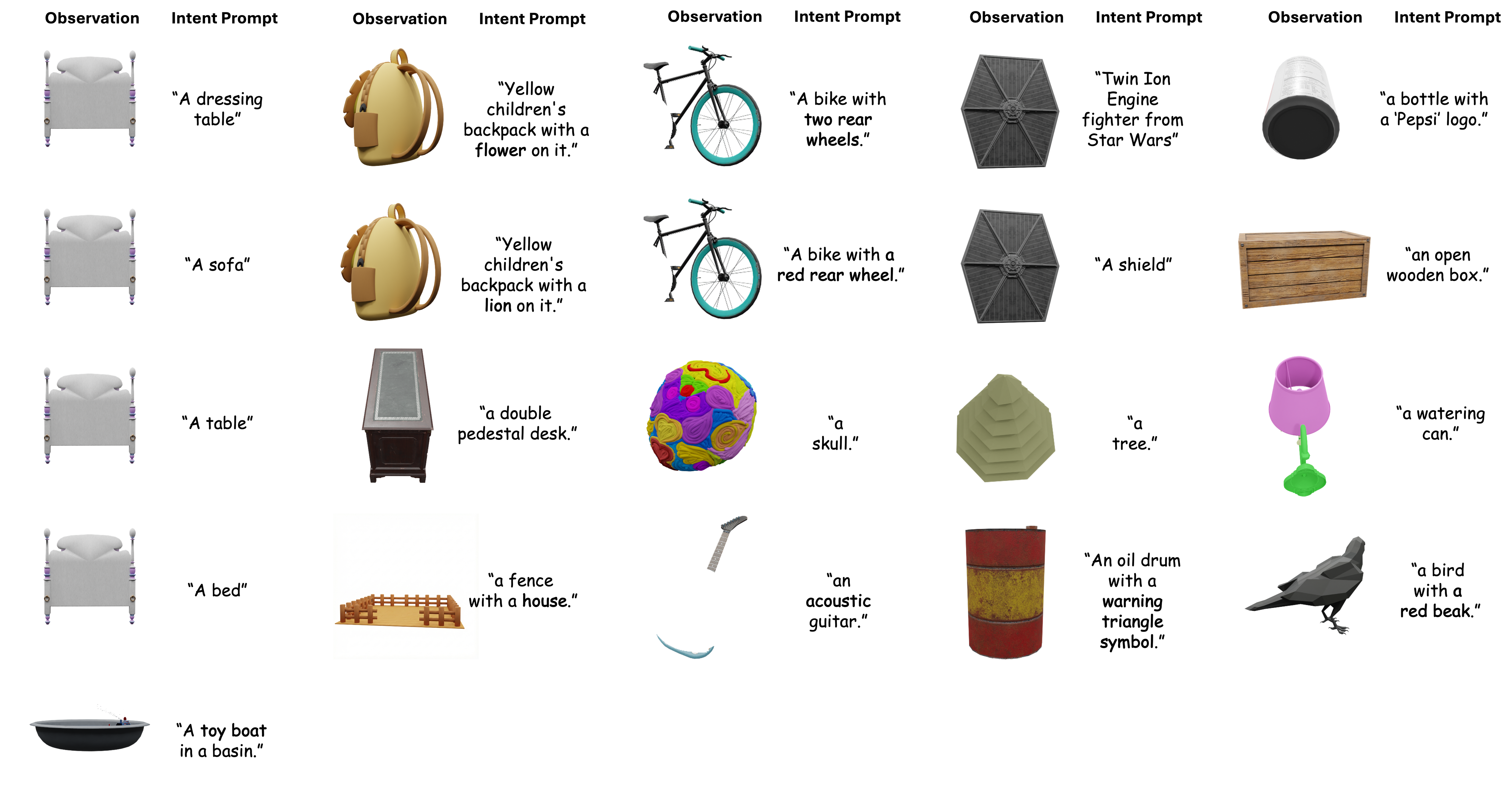}
    \caption{Visualization of all cases in AmbiSem-3D.}
    \label{fig:cases-amb}
\end{figure}

\begin{figure}
    \centering
    \includegraphics[width=1\linewidth]{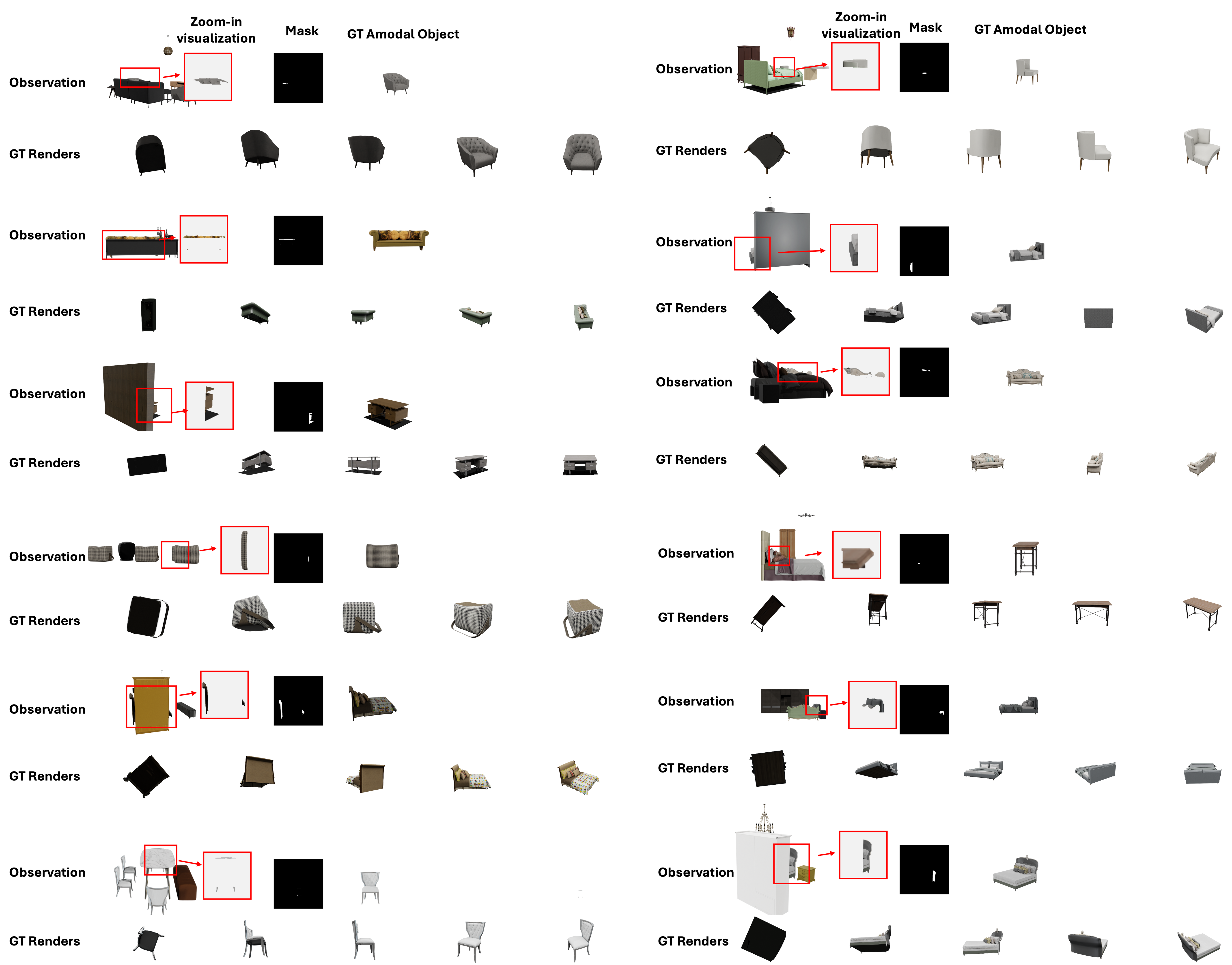}
    \caption{More examples of cases in ExtremeOcc-3D.}
    \label{fig:cases-extreme}
\end{figure}

\paragraph{100-case extension.}
For the extended evaluation in Appendix~\ref{app:baseline-details}, we construct a larger \unseen-Ext split using a semi-automatic candidate mining pipeline. We first render ObjaverseXL assets from multiple viewpoints and caption each view with a VLM. A second VLM/LLM agent then compares the captions across views and identifies cases where one view yields a substantially different semantic interpretation from the consensus of the remaining views. Such views are treated as candidate ambiguous observations, since their visible evidence may support multiple plausible completions. The automatic stage mines 120 candidate cases, which we then manually filter to remove 20 clearly unreasonable samples, such as cases with insufficient asset quality or implausible ambiguity. The resulting extension contains 100 retained cases and is intended to broaden the evaluation toward view-induced ambiguity, complementing the inherent ambiguities emphasized by the 21-case diagnostic set.

\section{Prior Image Construction}\label{app:prior-construction}

Our method requires prior images to provide semantic guidance. We describe how priors are obtained for each benchmark during our experiments.

\paragraph{\extremeocc.}
Given a text label specifying the target category, we retrieve a prior image by randomly sampling a \emph{different} object of the same category from the 3D-FUTURE dataset and rendering it from a canonical viewpoint. This retrieval-based approach ensures that the prior shares the intended category but differs in instance-specific details, testing whether our method extracts only category-level semantics.

\paragraph{\unseen.}
Since \unseen involves diverse semantic branches without a retrieval pool, we generate prior images from text prompts using a text-to-image generator Z-Image~\cite{cai2025zimage}. To ensure clean, 3D-compatible outputs, we augment the user prompt with a suffix: ``\texttt{, clean 3D style, pure white background, single object in the center}.'' For fair comparison, all baseline methods on \unseen receive the same generated prior images as input.

\paragraph{Ablation on prior source.}
To verify robustness to prior source, we also evaluate \methodabbr on \extremeocc using Z-Image-generated priors~\cite{cai2025zimage} instead of retrieved images. As shown in Table~\ref{fig:ablation:a} (``Prior from generation''), performance remains competitive (Point-FID: 82.7 vs.\ 81.1), indicating that our method is not sensitive to whether priors are retrieved or generated.

\section{Details of Metric Computation}
\label{app:metric-details}

We report a set of complementary metrics that capture semantic alignment to the text prior, semantic/appearance agreement with reference renders, fidelity at the distribution level, and 3D semantic similarity. For each test instance, our method produces $V{=}10$ predicted views $\mathcal{V}_{\text{pred}}=\{v_i\}_{i=1}^{V}$, together with a prior text prompt $t$. We also consider a set of processed ground-truth multi-view renders $\mathcal{V}_{\text{gt}}^{\text{proc}}$ and an observed object render $o^{\text{proc}}$ (from a specific viewpoint). When a metric requires standardized inputs, we apply a consistent preprocessing pipeline (center-crop and resize), yielding processed predicted views $\mathcal{V}_{\text{pred}}^{\text{proc}}=\{\tilde v_i\}_{i=1}^{V}$.

\paragraph{CLIP image--text similarity.}
We measure how well the predicted views align with the textual prior using CLIP image--text cosine similarity. Let $f_I(\cdot)$ and $f_T(\cdot)$ denote the CLIP image and text encoders, and $\hat{x}=x/\|x\|_2$ the $\ell_2$-normalized embedding. We compute
\begin{equation}
s_{\text{IT}}
=\frac{1}{V}\sum_{i=1}^{V}
\left\langle \widehat{f_I(v_i)}, \widehat{f_T(t)} \right\rangle ,
\end{equation}
where higher values indicate stronger semantic agreement between the generated imagery and the text prompt.

\paragraph{CLIP image--image similarity (multi-view semantic agreement).}
To quantify overall agreement between the predicted object and the reference object at the semantic level, we compare the predicted views against a set of reference images that aggregates ground-truth renders and the observed render:
\begin{equation}
\mathcal{R} = \mathcal{V}_{\text{gt}}^{\text{proc}} \cup \{o^{\text{proc}}\}.
\end{equation}
We embed each image with the CLIP image encoder, form the mean reference embedding, and average cosine similarities over predicted views:
\begin{equation}
\mu_{\mathcal{R}}
=\frac{1}{|\mathcal{R}|}\sum_{r\in\mathcal{R}} \widehat{f_I(r)},
\qquad
s_{\text{II}}
=\frac{1}{V}\sum_{i=1}^{V}
\left\langle \widehat{f_I(\tilde v_i)}, \widehat{\mu_{\mathcal{R}}} \right\rangle .
\end{equation}
This metric summarizes \emph{multi-view} consistency at a semantic representation level, reflecting whether the predicted set depicts the same underlying object content as the reference set (higher is better).

\paragraph{Minimum LPIPS distance (single-view appearance consistency).}
While CLIP image--image similarity emphasizes \emph{set-level, multi-view} semantic agreement, we additionally evaluate how well the method preserves the \emph{observed} appearance from its specific viewpoint using LPIPS. Let $d_{\text{LPIPS}}(\cdot,\cdot)$ denote the LPIPS perceptual distance. We compute the minimum distance between the observed render and the predicted views:
\begin{equation}
d_{\min}
=
\min_{i\in\{1,\dots,V\}} d_{\text{LPIPS}}\!\left(o^{\text{proc}},\, \tilde v_i\right).
\end{equation}
Lower $d_{\min}$ indicates that at least one predicted view closely matches the observed image in perceptual appearance, which is particularly relevant because the observation provides supervision only at that single viewpoint.

\paragraph{FID on multi-view image sets.}
We also report Fr\'echet Inception Distance (FID)~\cite{nipsHeuselRUNH17fid} to assess distribution-level realism and coverage of the predicted multi-view images relative to ground-truth renders. For each object, we treat its $V$ predicted views as samples contributing to an overall generated set, and analogously aggregate ground-truth views into a reference set. Using Inception features $\phi(\cdot)$, we fit Gaussians to the feature distributions of the generated and reference sets, with means and covariances $(m_g, C_g)$ and $(m_r, C_r)$, respectively. The FID is
\begin{equation}
\mathrm{FID}
=
\|m_g-m_r\|_2^2
+
\mathrm{Tr}\!\left(C_g + C_r - 2\left(C_g C_r\right)^{1/2}\right),
\end{equation}
where lower values indicate closer alignment between the generated and ground-truth image distributions.

\paragraph{Point-FID (3D set-level semantic similarity).}
To complement image-based fidelity with a 3D semantic metric, we report Point-FID computed from point-cloud features extracted by Point-E~\cite{alex2022pointe}. For each object, we obtain a point cloud from the predicted 3D output and a point cloud from the ground-truth 3D shape, then compute Point-E feature embeddings $\psi(\cdot)$. As with FID, we aggregate embeddings across the dataset into generated and reference distributions, estimate Gaussian statistics $(\tilde m_g,\tilde C_g)$ and $(\tilde m_r,\tilde C_r)$, and compute
\begin{equation}
\mathrm{Point\text{-}FID}
=
\|\tilde m_g-\tilde m_r\|_2^2
+
\mathrm{Tr}\!\left(\tilde C_g + \tilde C_r - 2\left(\tilde C_g \tilde C_r\right)^{1/2}\right).
\end{equation}
Lower Point-FID indicates that the predicted 3D shapes are closer to ground truth in the learned 3D semantic feature space, providing a set-level measure of 3D semantic similarity beyond per-view image agreement.

\section{More Baseline Comparisons and Efficiency Details}
\label{app:baseline-details}

\paragraph{Other baseline families.}
We evaluate several alternative routes for injecting semantic intent into image-to-3D generation: video generation followed by 3D generation, multi-view generation followed by 3D generation, commercial 2D editing followed by 3D generation, and direct text+image conditioning. The video-generation baseline uses Wan2.2-TI2V~\cite{wan2025} to generate orbit-style frames from the observation and intent prompt, followed by TRELLIS multi-image reconstruction~\cite{xiang2024trellis}. The multi-view-generation baseline uses MV-Adapter~\cite{Huang_2025_ICCV} with the same reconstruction protocol. The commercial 2D-editing baseline replaces SDXL with Nano Banana Pro~\cite{google2025nanobananapro} before 3D generation. The direct text+image baseline uses TRELLIS with an image-conditioned observation branch and a text-conditioned semantic branch combined by CFG-style interpolation.

Tables~\ref{tab:appen-baseline-occ} and~\ref{tab:appen-baseline-unseen} show that these alternatives do not jointly preserve the observation and maintain 3D consistency. In particular, \methodabbr achieves the best CLIP$_\text{img}$ and Point-FID on \extremeocc and the best CLIP$_\text{img}$ on \unseen, while maintaining competitive CLIP$_\text{txt}$. Figures~\ref{fig:comp-trellis}, \ref{fig:comp-wan}, and~\ref{fig:comp-nbp} further illustrate the failure modes: direct text+image fusion can misalign branches, video/multi-view generation can produce inconsistent reconstruction evidence, and non-3D-aware commercial editing can introduce geometry artifacts or attribute leakage.

\begin{table}[h]
\centering
\caption{\textbf{Comparison with other baseline families on \extremeocc.} These baselines cover video generation, multi-view generation, direct text+image conditioning, and commercial 2D editing pipelines.}
\label{tab:appen-baseline-occ}
\small
\setlength{\tabcolsep}{3pt}
\renewcommand{\arraystretch}{1.12}
\begin{tabular}{l|ccccc}
\toprule
Method & CLIP$_\text{img}$$\uparrow$ & CLIP$_\text{txt}$$\uparrow$ & FID$\downarrow$ & LPIPS$\downarrow$ & Point-FID$\downarrow$ \\
\midrule
Wan2.2-TI2V + TRELLIS & 0.78 & 22.98 & 107.49 & 0.85 & 110.3 \\
MV-Adapter + TRELLIS & 0.80 & 22.70 & 148.66 & 0.81 & 238.3 \\
TRELLIS (text+image) & 0.82 & 26.36 & 116.51 & 0.79 & 137.7 \\
Nano Banana Pro + 3D & 0.83 & \textbf{27.49} & 77.68 & 0.77 & 88.6 \\
\textbf{Ours} & \textbf{0.87} & 27.26 & \textbf{39.44} & \textbf{0.51} & \textbf{81.1} \\
\bottomrule
\end{tabular}
\end{table}

\begin{table}[h]
\centering
\caption{\textbf{Comparison with other baseline families on \unseen.} Because \unseen has multiple valid completions and no unique 3D ground truth, we report CLIP-based observation and text alignment.}
\label{tab:appen-baseline-unseen}
\small
\setlength{\tabcolsep}{8pt}
\renewcommand{\arraystretch}{1.12}
\begin{tabular}{l|cc}
\toprule
Method & CLIP$_\text{img}$$\uparrow$ & CLIP$_\text{txt}$$\uparrow$ \\
\midrule
Wan2.2-TI2V + TRELLIS & 0.80 & 26.71 \\
MV-Adapter + TRELLIS & 0.81 & 25.54 \\
TRELLIS (text+image) & 0.81 & 27.24 \\
Nano Banana Pro + 3D & 0.81 & \textbf{27.41} \\
\textbf{Ours} & \textbf{0.87} & 27.23 \\
\bottomrule
\end{tabular}
\end{table}

\begin{figure}
    \centering
    \includegraphics[width=1\linewidth]{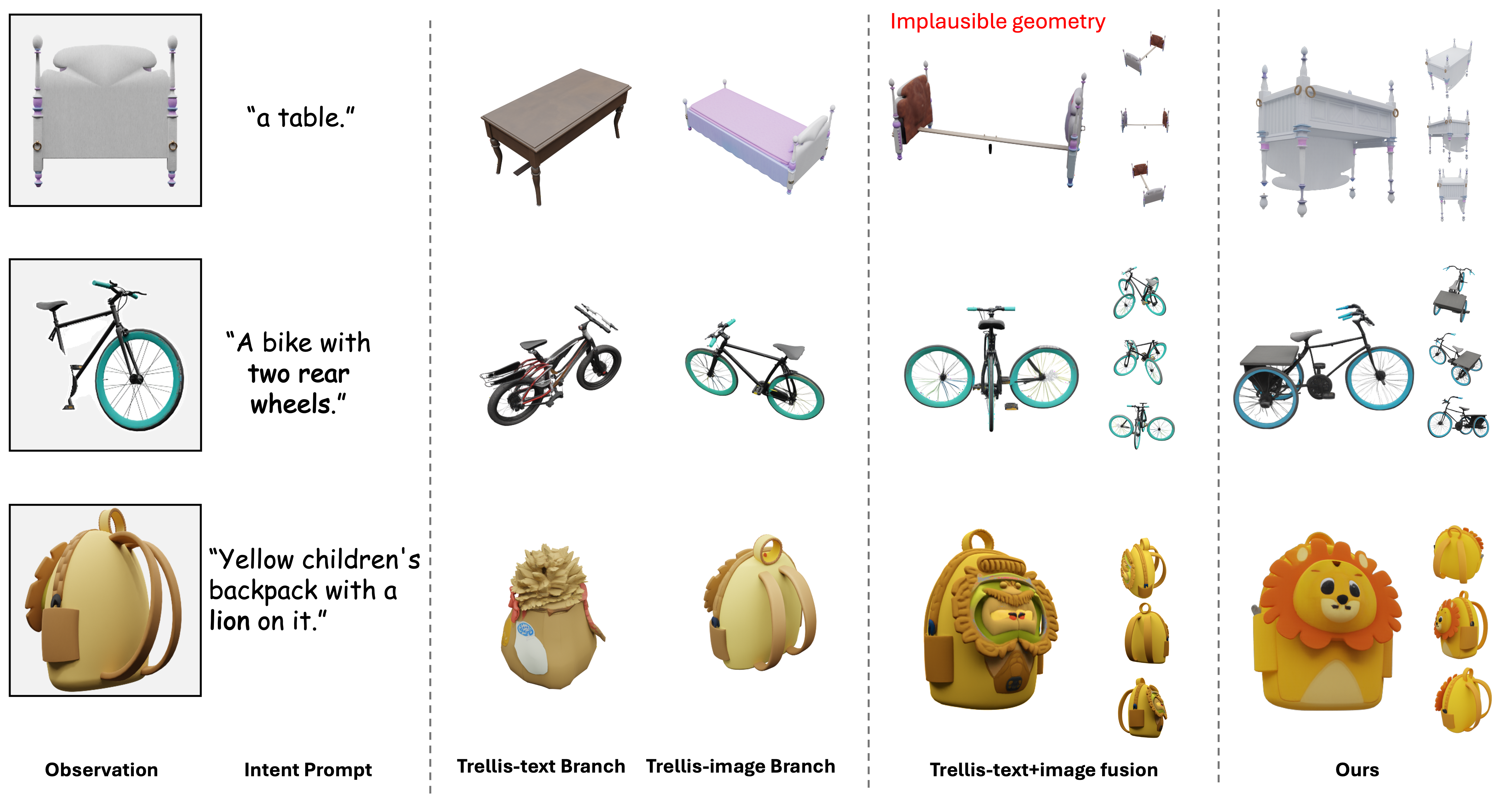}
    \caption{Qualitative comparison with the TRELLIS (text+image) fusion baseline. This baseline often produces implausible geometry because the objects generated by its text- and image-conditioned branches are not visually consistent, making spatial alignment difficult. In contrast, our method injects text conditioning as low-pass semantic guidance in the visual latent space, yielding more coherent and geometrically consistent results.
}
    \label{fig:comp-trellis}
\end{figure}

\begin{figure}
    \centering
    \includegraphics[width=1\linewidth]{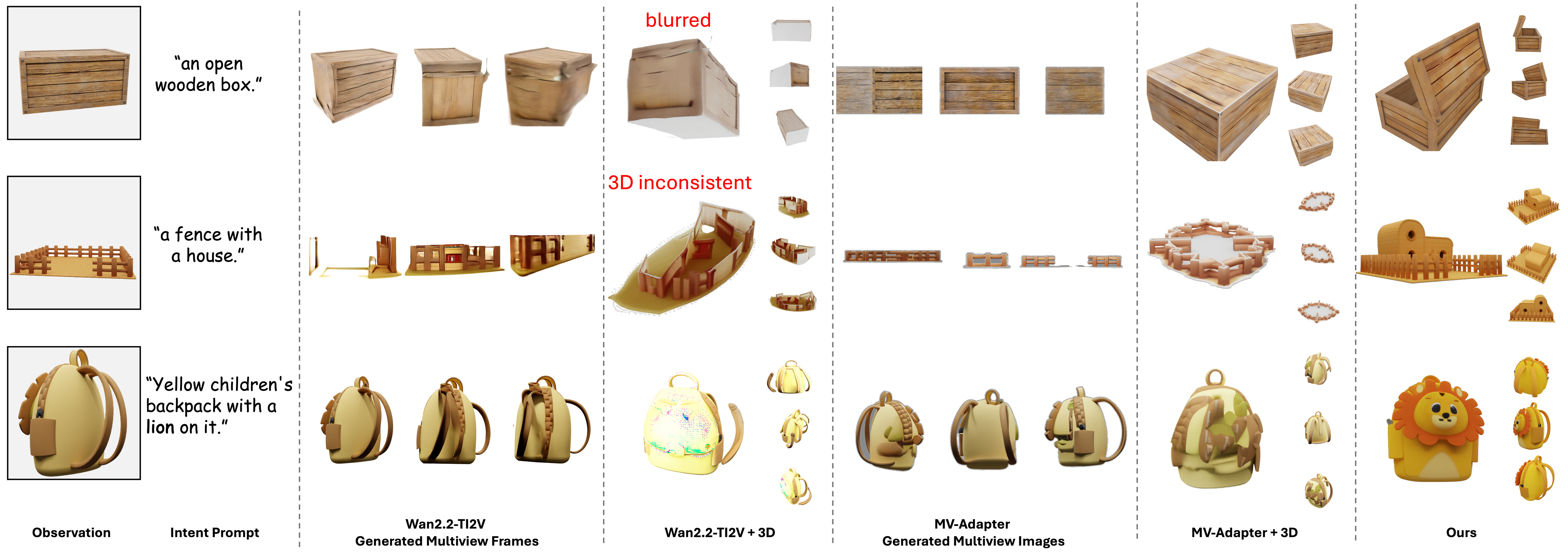}
    \caption{Qualitative comparison with Wan2.2-TI2V + TRELLIS and MV-Adapter + TRELLIS. For Wan2.2-TI2V, we use the enhanced prompt: ``A clean studio turnaround video of {intent prompt}, centered in frame, isolated from the background, with the camera orbiting smoothly around the object to reveal multiple viewpoints. The object stays fixed while only the camera moves. Consistent geometry, consistent appearance, product-shot lighting, plain neutral background, the full object remains visible in every frame, no extra objects, no scene changes, no cuts, no zoom.'' Despite this careful prompting, the video-generation pipeline often fails to produce 3D-consistent frames, leading to distorted or blurred 3D reconstructions.
The multi-view generation pipeline also struggles in this setting, often failing to preserve the observation evidence while following the text prompt, which results in distorted outputs. In contrast, our method operates during 3D generation and produces results that are both geometrically consistent and faithful to the input observation and text intent.
}
    \label{fig:comp-wan}
\end{figure}

\begin{figure}
    \centering
    \includegraphics[width=1\linewidth]{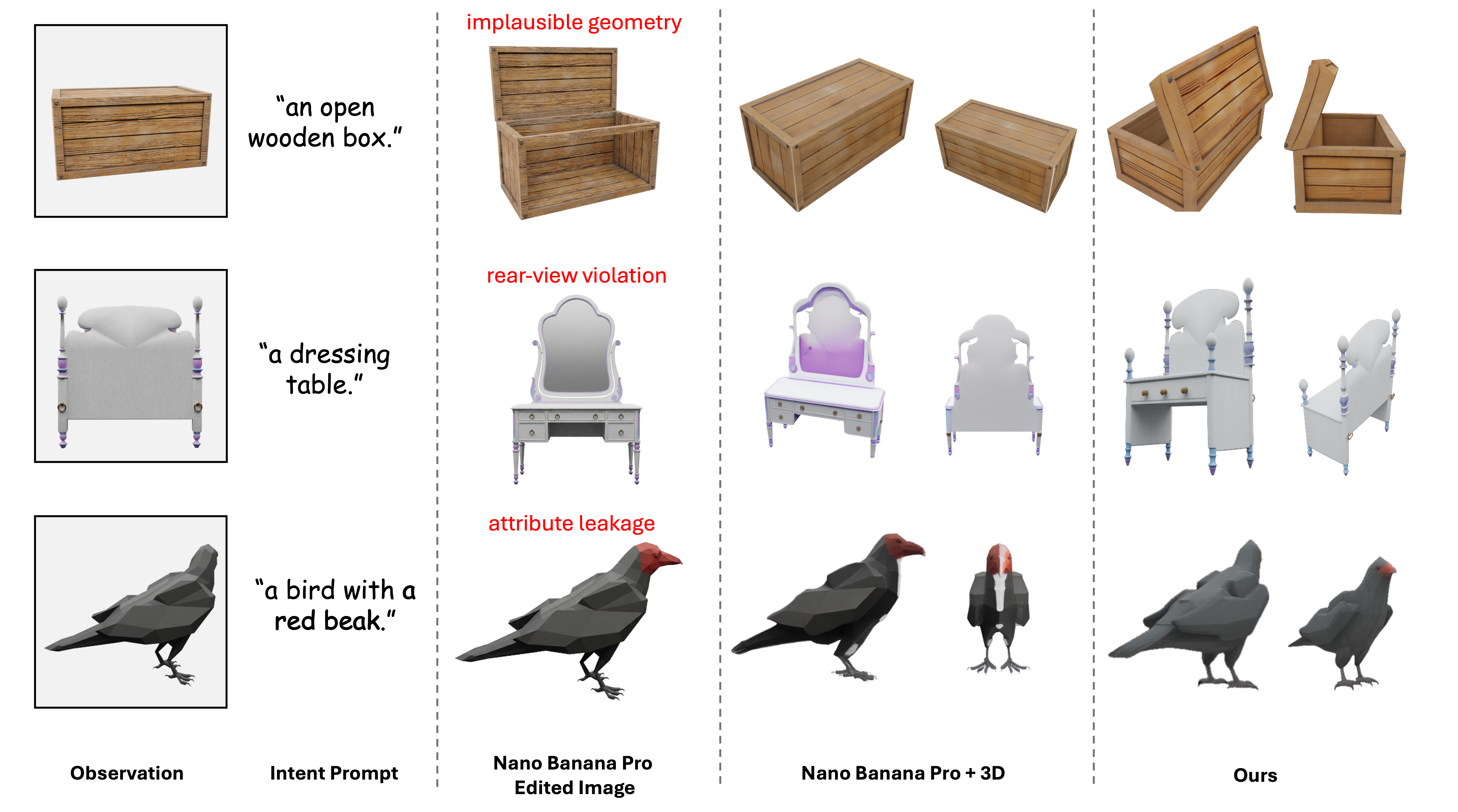}
    \caption{Qualitative comparison with Nano Banana Pro + 3D. As a non-3D-aware commercial 2D editing model, Nano Banana Pro can produce edits with implausible geometry, back-view inconsistency, and attribute leakage. In contrast, our method achieves text-guided amodal 3D generation that remains consistent with both the text intent and the observation evidence.}
    \label{fig:comp-nbp}
\end{figure}

\paragraph{Extended ambiguity evaluation.}
We evaluate a 100-case extension of \unseen (details are in Appendix~\ref{app:dataset-curation}). The original 21-case set focuses on inherent ambiguity, while the extended set also includes view-induced ambiguity. As shown in Table~\ref{tab:appen-extended-unseen}, the trend remains consistent: \methodabbr improves both observation and prompt alignment over SAM3D.

\begin{table}[h]
\centering
\caption{\textbf{Extended 100-case \unseen evaluation.}}
\label{tab:appen-extended-unseen}
\small
\setlength{\tabcolsep}{12pt}
\renewcommand{\arraystretch}{1.12}
\begin{tabular}{l|cc}
\toprule
Method & CLIP$_\text{img}$$\uparrow$ & CLIP$_\text{txt}$$\uparrow$ \\
\midrule
SAM3D & 0.839 & 25.6 \\
\textbf{Ours} & \textbf{0.845} & \textbf{26.9} \\
\bottomrule
\end{tabular}
\end{table}

\paragraph{Runtime and memory.}
We measure inference cost on a 10-sample random subset using a single NVIDIA RTX A5000 GPU. The results are shown in Table~\ref{tab:appen-efficiency}. \methodabbr adds extra branch evaluations, increasing runtime, while peak memory grows modestly. The current implementation prioritizes memory over runtime; batched branch inference and feature caching may further reduce wall-clock overhead.

\begin{table}[h!]
\centering
\caption{\textbf{Runtime and memory overhead.} Memory is peak GPU memory in GB. \methodabbr adds extra branch evaluations, increasing runtime, while peak memory grows modestly. The current implementation prioritizes memory over runtime; batched branch inference and feature caching may further reduce wall-clock overhead.}
\label{tab:appen-efficiency}
\small
\setlength{\tabcolsep}{5pt}
\renewcommand{\arraystretch}{1.12}
\begin{tabular}{ll|ccc}
\toprule
Backbone & Method & Time (s/sample) & Allocated (GB) & Reserved (GB) \\
\midrule
\multirow{3}{*}{SAM3D}
& Base & 15.59 & 18.07 & 21.85 \\
& Ours & 38.40 & 18.60 & 22.60 \\
& Overhead & 2.46$\times$ & +2.9\% & +3.4\% \\
\midrule
\multirow{3}{*}{TRELLIS}
& Base & 12.81 & 5.57 & 5.60 \\
& Ours & 23.31 & 6.02 & 6.07 \\
& Overhead & 1.82$\times$ & +8.0\% & +8.4\% \\
\bottomrule
\end{tabular}
\end{table}

\section{Visibility Mask Details and Hyperparameters}\label{app:vis-mask}

This appendix provides the concrete parameterization used to compute the visibility-aware blending mask in Sec.~\ref{sec:dualbranch}.

\paragraph{Depth map construction.}
Given the pose predicted by the backbone under $c_{\rm obs}$, we transform voxel indices $c_i\in\{0,\dots,63\}^3$ to camera space
\begin{equation}
x_i = s \odot (R c_i) + t,\qquad x_i=(x_i,y_i,z_i),
\end{equation}
and project to pixels $(u_i,v_i)$ with standard pinhole intrinsics $K=(f_x,f_y,c_x,c_y)$:
\begin{equation}
u_i = c_x - f_x \frac{x_i}{z_i}, \quad v_i = c_y - f_y \frac{y_i}{z_i}.
\end{equation}
We build a z-buffer depth map
\begin{equation}
D(u,v)=\min \{\, z_i \mid (u_i,v_i)=(u,v)\,\},
\end{equation}
and densify it by min-pooling dilation with kernel size $k$ (odd integer), producing $D'$.

\paragraph{Voxel-aware scaling.}
Let $s_{\max}$ denote the maximum component of the estimated scale $s$, and $s_{\rm res}$ the voxel resolution, we can define the object-space voxel size
$s_{\text{vox}} = {s_{\max}}/{s_{\rm res}}$.

We further denote the object depth by $z_{\text{obj}} = t_z$ and the average focal length by $f_{\text{avg}}=\frac{f_x+f_y}{2}$.
We set the depth tolerance to scale with voxel size:
\begin{equation}
\sigma_d = \beta\, s_{\text{vox}},
\end{equation}
and choose the dilation kernel based on the approximate projected voxel footprint:
\begin{equation}
k = \mathrm{odd}\!\left(\mathrm{round}\!\left(\gamma\, s_{\text{vox}}\, \frac{f_{\text{avg}}}{z_{\text{obj}}}\right)\right),
\end{equation}
where $\mathrm{odd}(\cdot)$ rounds to the nearest odd integer (and we clamp to $k\ge 1$ in implementation). Intuitively, $s_{\text{vox}}\frac{f_{\text{avg}}}{z_{\text{obj}}}$ approximates how many pixels a voxel spans under perspective projection, so $k$ adapts to both object scale and depth.

\paragraph{Soft visibility weight.}
For each voxel $i$, with projected pixel $(u_i,v_i)$ and depth $z_i$, we compute the occlusion margin
\begin{equation}
\Delta_i = z_i - D'(u_i,v_i),
\end{equation}
and convert it into a soft visibility weight
\begin{equation}
m_i = \exp\!\left(-\lambda\left(\frac{\max(\Delta_i,0)}{\sigma_d}\right)^2\right).
\end{equation}

\paragraph{Default values.}
Unless otherwise specified, we use fixed defaults across all experiments:
\begin{equation}
\beta=1.5,\qquad \gamma=1.5,\qquad \lambda=3.
\end{equation}
These settings make the visibility boundary stable under discretization and modest pose noise while remaining sufficiently selective to prevent prior overpainting on visible surfaces.

\section{More Qualitative Results}
See Fig.~\ref{fig:appen-more-qual-shield}, Fig.~\ref{fig:appen-more-qual-3}, Fig.~\ref{fig:appen-more-qual-backboard}, and Fig.~\ref{fig:appen-more-qual-4}.

\begin{figure}
    \centering
    \includegraphics[width=1\linewidth]{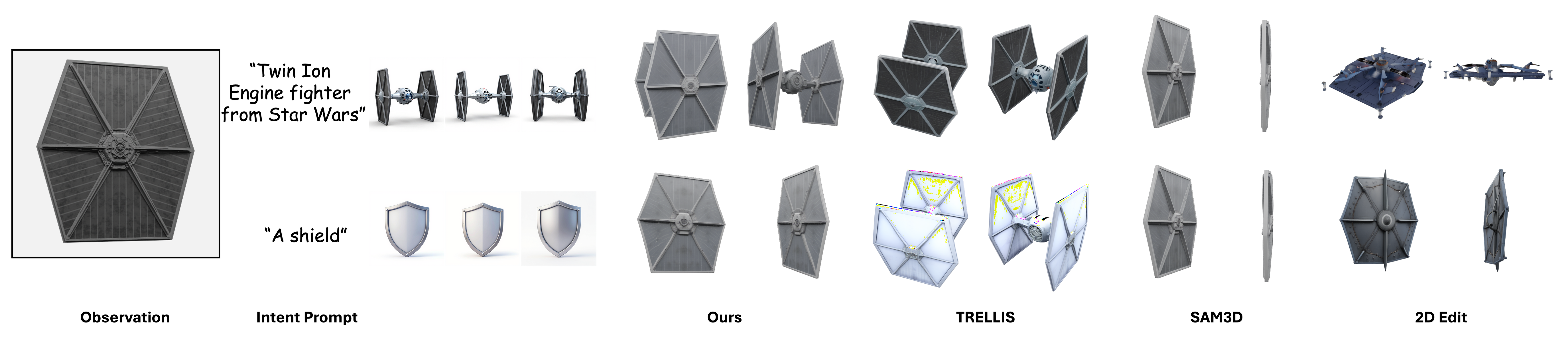}
    \caption{More qualitative results. TRELLIS and SAM3D remain overfitted to the observed geometry, producing either a shield or a TIE fighter regardless of intent, whereas our method generates either structure as specified by the intent prompt.}
    \label{fig:appen-more-qual-shield}
\end{figure}

\begin{figure}
    \centering
    \includegraphics[width=1\linewidth]{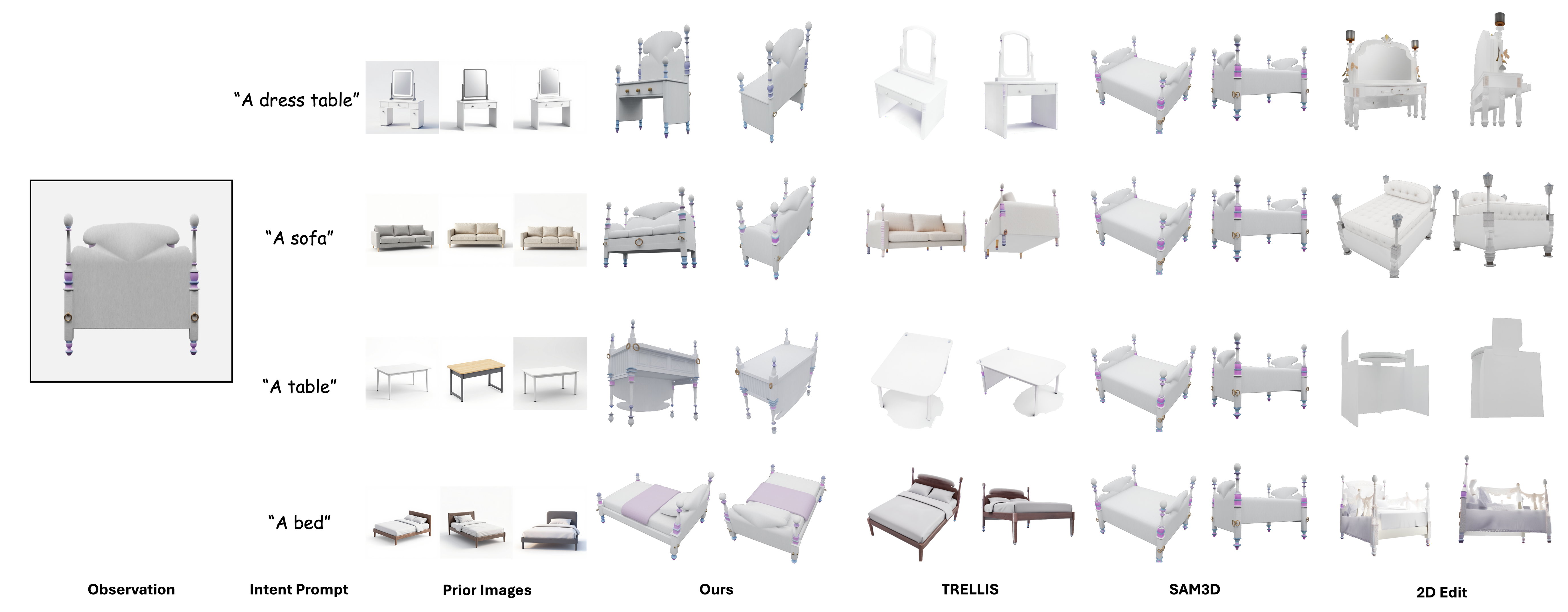}
    \caption{More qualitative results.}
    \label{fig:appen-more-qual-backboard}
\end{figure}

\begin{figure}
    \centering
    \includegraphics[width=1\linewidth]{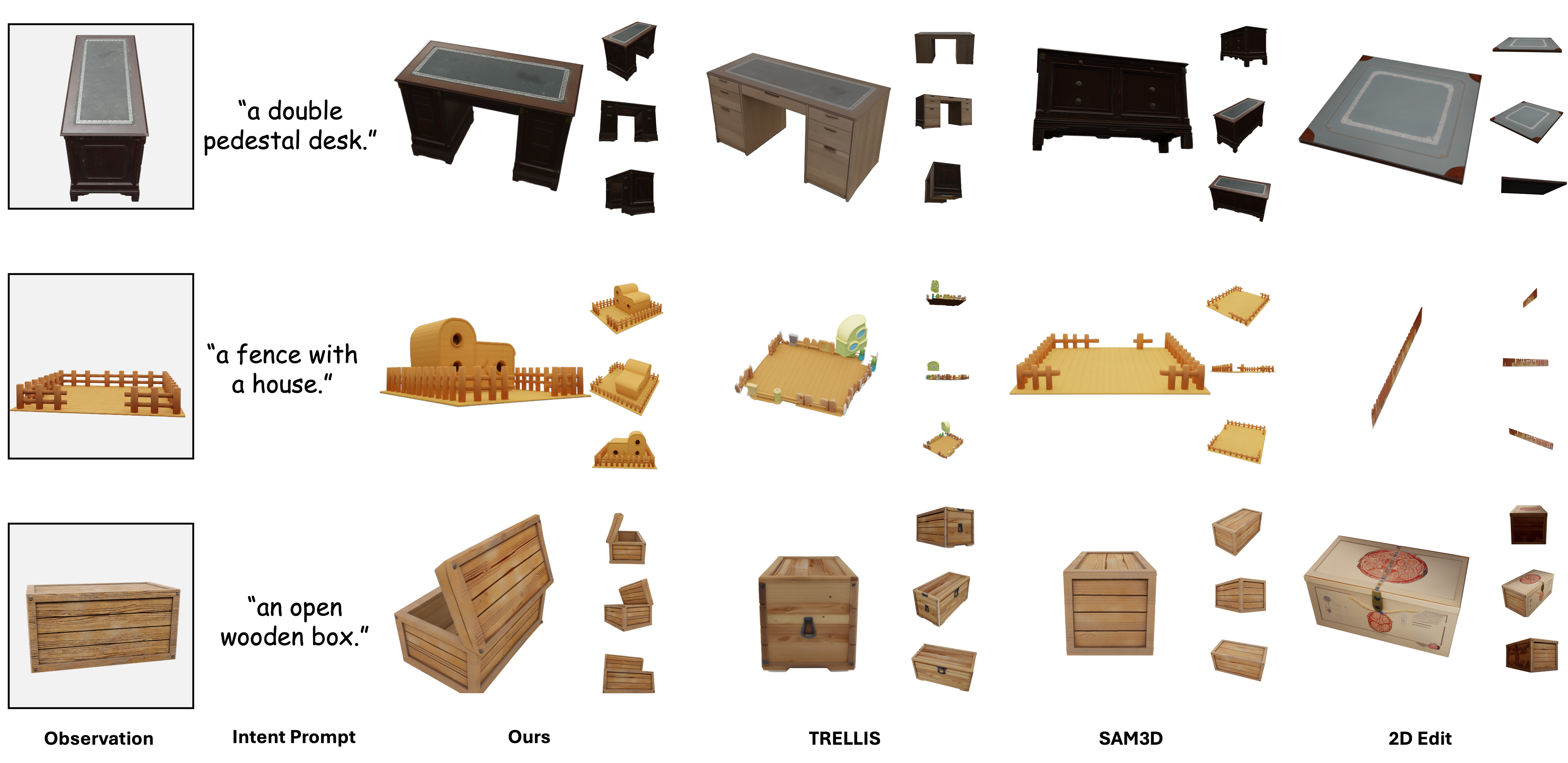}
    \caption{More qualitative results.}
    \label{fig:appen-more-qual-3}
\end{figure}

\begin{figure}
    \centering
    \includegraphics[width=1\linewidth]{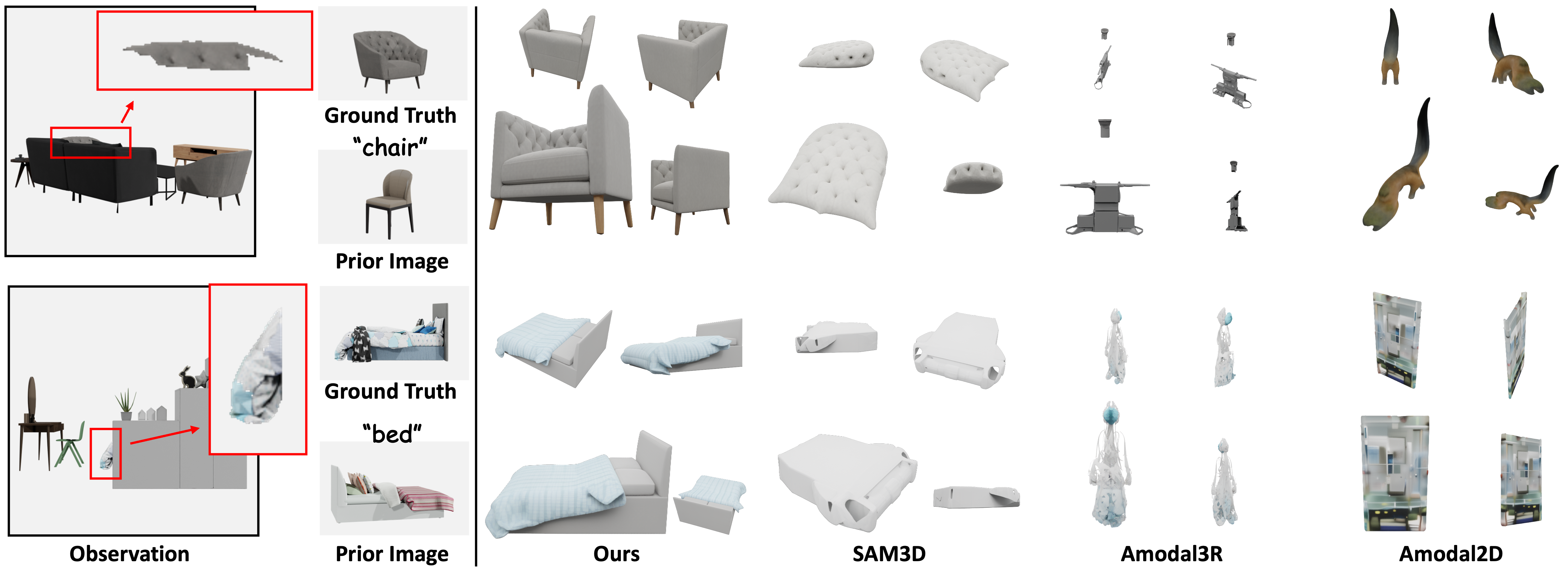}
    \caption{More qualitative results on \extremeocc.}
    \label{fig:appen-more-qual-4}
\end{figure}

\section{Failure Cases}

Our method assumes that the observation and prior are broadly compatible: the observation is ambiguous in some aspects, and the prior supplies complementary cues to resolve that ambiguity. When the two inputs contain mutually conflicting evidence (\eg, two different frontal faces with incompatible identities), the model has no consistent solution and may fail to preserve either condition faithfully. In these cases, our method tends to act as a soft ``concept fusion'' mechanism, interpolating between the two conditions due to their asymmetric guidance. We illustrate this behavior on cases from the morphing benchmark from Interp3D~\cite{xiaolu2026interp3d} (see Fig.~\ref{fig:failure}).

\begin{figure}[h]
    \centering
    \includegraphics[width=0.8\linewidth]{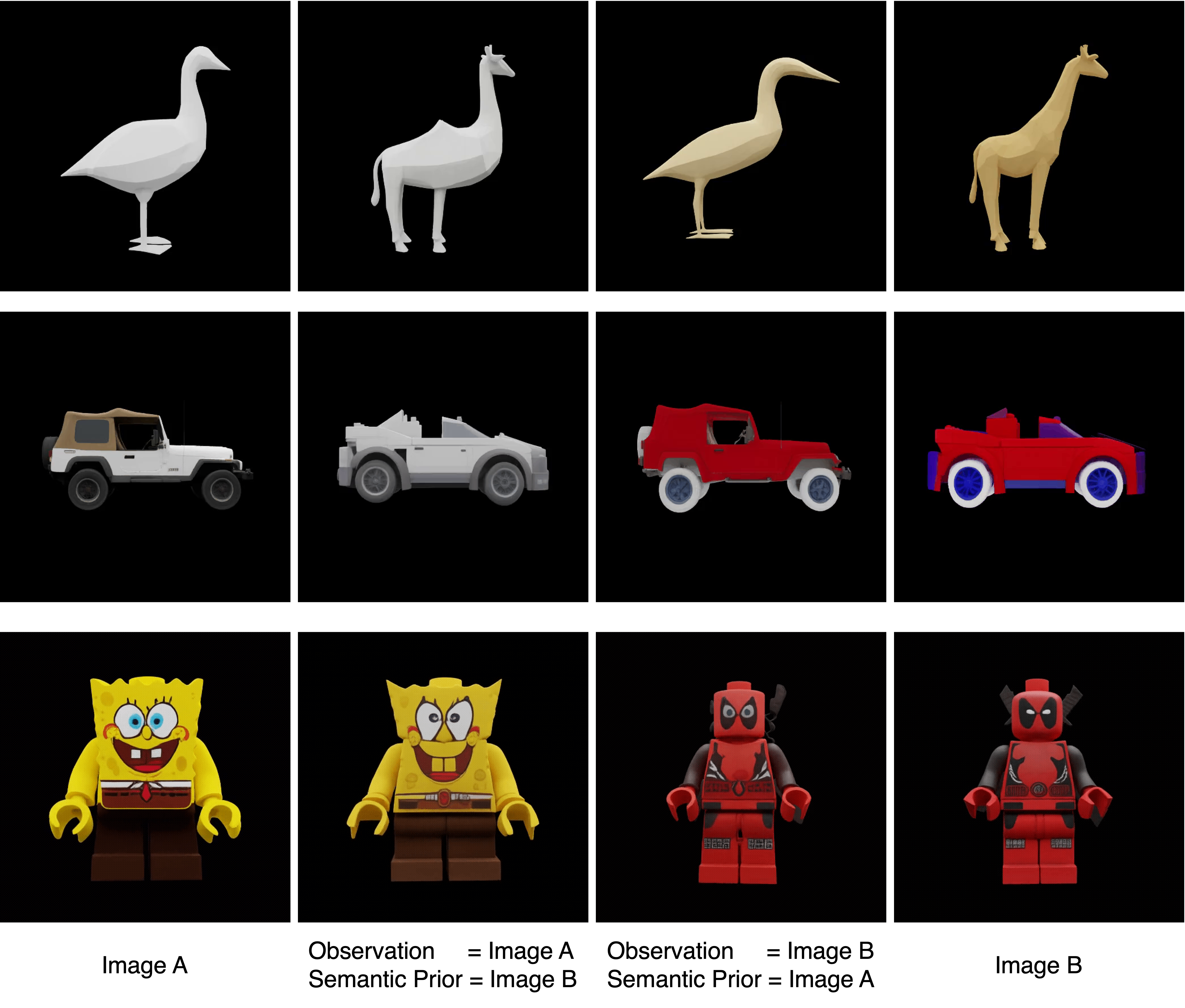}
    \caption{Failure cases under conflicting conditions. When the observation and prior disagree, no evidence-consistent solution exists, and our method may interpolate between them as a soft concept fusion due to asymmetric guidance.}
    \label{fig:failure}
\end{figure}

\end{document}